\pdfoutput=1
\documentclass[11pt, letterpaper]{article}
\usepackage{fullpage}
\usepackage{blindtext}
\usepackage{fullpage}
\usepackage{hyperref}
\hypersetup{
	colorlinks=true,
	linkcolor=blue!70!black,
	citecolor=blue!70!black,
	urlcolor=blue!70!black
}

\usepackage[english]{babel}
\usepackage[T1]{fontenc}
\usepackage[margin=1in]{geometry}

\usepackage{amsmath}
\usepackage{todonotes}
\usepackage{amssymb}
\usepackage{amsthm}
\usepackage{amsmath}
\usepackage{mathtools}
\usepackage{booktabs}
\usepackage{accents}
\usepackage{algorithm}
\usepackage[lined,boxed,ruled,norelsize,algo2e,linesnumbered]{algorithm2e}
\usepackage{bbm}
\usepackage{bbold}
\usepackage{bm}
\usepackage{graphicx}
\graphicspath{{./pics/}}
\usepackage{subcaption}

\usepackage{cleveref}
\usepackage{thm-restate}
\usepackage{algorithm}
\usepackage{algpseudocode}

\usepackage{blkarray}
\usepackage{bm}
\usepackage{bbm}
\newtheorem{theorem}{Theorem}
\newtheorem{lemma}{Lemma}
\newtheorem{fact}{Fact}
\newtheorem{definition}{Definition}
\newtheorem{corollary}{Corollary}
\newtheorem{proposition}{Proposition}

\newtheorem{model}{Model}

\newcommand{\defeq}{:=}

\newcommand{\DT}{\mathsf{DiagThresh}}
\newcommand{\CT}{\mathsf{CovThresh}}
\newcommand{\TC}{\mathsf{GreedyCorr}}

\newcommand{\norm}[1]{\left\lVert#1\right\rVert}
\newcommand{\norms}[1]{\lVert#1\rVert}
\newcommand{\normop}[1]{\left\lVert#1\right\rVert_{\textup{op}}}
\newcommand{\normf}[1]{\left\lVert#1\right\rVert_{\textup{F}}}

\newcommand{\normsop}[1]{\lVert#1\rVert_{\textup{op}}}
\newcommand{\normsf}[1]{\lVert#1\rVert_{\textup{F}}}

\newcommand{\inprod}[2]{\left\langle#1, #2\right\rangle}

\newcommand{\eps}{\epsilon}
\newcommand{\lam}{\lambda}
\newcommand{\sig}{\sigma}
\newcommand{\argmax}{\textup{argmax}}
 
\newcommand{\supp}{\textup{supp}}

\newcommand{\xset}{\mathcal{X}}

\newcommand{\R}{\mathbb{R}}

\newcommand{\N}{\mathbb{N}}
\newcommand{\diag}[1]{\textbf{\textup{diag}}\left(#1\right)}
\newcommand{\half}{\frac{1}{2}}

\newcommand{\E}{\mathbb{E}}

\newcommand{\Tr}{\textup{Tr}}

\newcommand{\id}{\mathbf{I}}

\newcommand{\bk}{\color{black}}
\newcommand{\rd}{\color{red}}
\usepackage{xcolor}
\definecolor{burntorange}{rgb}{0.8, 0.33, 0.0}
\newcommand{\kjtian}[1]{\textcolor{burntorange}{\textbf{kjtian:} #1}}

\newcommand{\nnz}{\textup{nnz}}

\newcommand{\poly}{\textup{poly}}
\newcommand{\polylog}{\textup{polylog}}
\newcommand{\Par}[1]{\left(#1\right)}
\newcommand{\Brack}[1]{\left[#1\right]}
\newcommand{\Brace}[1]{\left\{#1\right\}}
\newcommand{\Abs}[1]{\left|#1\right|}

\newcommand{\ind}{\mathbb{I}}
\newcommand{\mzero}{\mathbf{0}}
\newcommand{\Sym}{\mathbb{S}}
\newcommand{\PSD}{\Sym_{\succeq \mzero}}
\newcommand{\PD}{\Sym_{\succ \mzero}}
\newcommand{\simiid}{\sim_{\textup{i.i.d.}}}

\DeclareMathOperator{\linspan}{range}

\newcommand{\Top}{\textup{top}}
\newcommand{\thresh}{\calT}
\newcommand{\is}{i^\star}

\newcommand{\TPM}{\mathsf{RTPM}}

\newcommand{\kSPCA}{\mathsf{kSPCA}}

\newcommand{\ma}{\mathbf{A}}
\newcommand{\mb}{\mathbf{B}}
\newcommand{\mc}{\mathbf{C}}

\newcommand{\me}{\mathbf{E}}

\newcommand{\mg}{\mathbf{G}}
\newcommand{\mh}{\mathbf{H}}

\newcommand{\mm}{\mathbf{M}}
\newcommand{\mn}{\mathbf{N}}

\newcommand{\mpp}{\mathbf{P}}
\newcommand{\mq}{\mathbf{Q}}
\newcommand{\mr}{\mathbf{R}}
\newcommand{\ms}{\mathbf{S}}

\newcommand{\mmu}{\mathbf{U}}
\newcommand{\muu}{\mathbf{U}}
\newcommand{\mv}{\mathbf{V}}
\newcommand{\mw}{\mathbf{W}}
\newcommand{\mx}{\mathbf{X}}
\newcommand{\my}{\mathbf{Y}}
\newcommand{\mz}{\mathbf{Z}}
\newcommand{\msig}{\boldsymbol{\Sigma}}
\newcommand{\hmsig}{\widehat{\boldsymbol{\Sigma}}}

\newcommand{\mlam}{\boldsymbol{\Lambda}}

\newcommand{\0}{\mathbf{0}}
\newcommand{\1}{\mathbf{1}}

\newcommand{\va}{\mathbf{a}}
\newcommand{\vb}{\mathbf{b}}
\newcommand{\vc}{\mathbf{c}}

\newcommand{\ve}{\mathbf{e}}
\newcommand{\vf}{\mathbf{f}}
\newcommand{\vg}{\mathbf{g}}

\newcommand{\vq}{\mathbf{q}}
\newcommand{\vr}{\mathbf{r}}

\newcommand{\vt}{\mathbf{t}}
\newcommand{\vu}{\mathbf{u}}
\newcommand{\vv}{\mathbf{v}}
\newcommand{\vw}{\mathbf{w}}
\newcommand{\vx}{\mathbf{x}}
\newcommand{\vy}{\mathbf{y}}

\newcommand{\calA}{\mathcal{A}}

\newcommand{\calD}{\mathcal{D}}
\newcommand{\calE}{\mathcal{E}}

\newcommand{\calT}{\mathcal{T}}

\newcommand{\calX}{\mathcal{X}}

\newcommand{\hmx}{\widehat{\mx}}

\newcommand{\bas}[1]{\begin{align*}#1\end{align*}}
\newcommand{\ba}[1]{\begin{align}#1\end{align}}
\newcommand{\gap}{\mathsf{gap}}
\newcommand{\vperp}{\mathbf{V}_{\perp}}
\newcommand{\vpperp}{\mathbf{V}_{p,\perp}}
\newcommand{\bbb}[1]{\left[#1\right]}
\newcommand{\bb}[1]{\left(#1\right)}
\newcommand\inner[2]{\left\langle #1, #2 \right\rangle}

\newcommand{\sspan}{\textup{span}}
\newcommand{\pcaoracle}{\mathcal{O}_{\textup{1SPCA}}}
\newcommand{\pcaoracletpm}{\mathcal{O}_{\textup{1SPCA-TPM}}}
\newcommand{\hmm}{\widehat{\mm}}
\newcommand{\Prob}{\mathbb{P}}
\newcommand{\Fail}{\mathsf{Fail}}
\newcommand{\mvp}{\mv_{\perp}}
\newcommand{\mlamp}{\mlam_{\perp}}

\newcommand{\TPMf}{\TPM\textrm{-}\mathsf{full}}
\newcommand{\TPMd}{\TPM\textrm{-}\mathsf{disjoint}}
\begin{document}
\title{Combinatorial Sparse PCA Beyond the Spiked Identity Model}
\date{}
\author{
Syamantak Kumar\thanks{UT Austin, \texttt{syamantak@utexas.edu}} \and 
Purnamrita Sarkar\thanks{UT Austin, \texttt{purna.sarkar@austin.utexas.edu}} \and
Kevin Tian\thanks{UT Austin, \texttt{kjtian@cs.utexas.edu}} \and 
Peiyuan Zhang\thanks{UW Madison, \texttt{pzhang287@wisc.edu}} \and 
}

\maketitle

\begin{abstract}
Sparse PCA is one of the most well-studied problems in high-dimensional statistics. In this problem, we are given samples from a distribution with covariance $\msig$, whose top eigenvector $\vv \in \R^d$ is $s$-sparse. Existing sparse PCA algorithms can be broadly categorized into (1) combinatorial algorithms (e.g., diagonal or elementwise covariance thresholding) and (2) SDP-based algorithms. While combinatorial algorithms are much simpler, they are typically only analyzed under the spiked identity model (where $\msig \propto \id_d + \gamma \vv \vv^\top$ for some $\gamma > 0$), whereas SDP-based algorithms require no additional assumptions on $\msig$. 

We demonstrate explicit counterexample covariances $\msig$ against the success of standard combinatorial algorithms for sparse PCA, when moving beyond the spiked identity model.
In light of this discrepancy, we give the first combinatorial method for sparse PCA that provably succeeds for general $\msig$ using $s^2 \cdot \polylog(d)$ samples and $d^2 \cdot \poly(s, \log(d))$ time, by providing a global convergence guarantee on a variant of the truncated power method of \cite{YuanZ13}. We provide a natural generalization of our method to recovering a vector in a sparse leading eigenspace. Finally, we evaluate our method on synthetic and real-world sparse PCA datasets.
\end{abstract}

\newpage 
\tableofcontents
\newpage

\setcounter{page}{1}

\section{Introduction}
\label{sec:intro}
Principal component analysis (PCA) is a classical and widely-used tool for dimensionality reduction that identifies directions in which the data exhibit the most variance.
Interestingly, the classical intuition that the leading eigenvectors of the sample covariance reliably estimate the population-level principal components can fail dramatically in high dimensions~\cite{paul2007asymptotics, 10.1214/09-AOS709, doi:10.1198/jasa.2009.0121}. 
Even when $\frac{d}{n}\to c>0$ with (the dimension and sample size) $d,n\to\infty$, and the population covariance has a single spiked diagonal entry with all others equal, the BBP phase transition~\cite{baik2005bbp} identifies a sharp \emph{eigenvalue} threshold at which the top empirical eigenvalue separates from the Marchenko--Pastur bulk; below this threshold no outlier emerges, and the leading empirical eigenvector is asymptotically uninformative, exhibiting vanishing correlation with the population eigenvector~\cite{paul2007asymptotics}.

Sparse PCA addresses this inconsistency by positing additional structure: that the leading eigenvector $\vv_1$ of the population covariance $\msig$ is $s$-sparse, 
reflecting the hypothesis that only a small subset of variables drives the dominant variation.  The canonical finite-sample optimization problem is
\begin{equation}
\label{eq:spca}
\max_{\substack{\|\vv\|_2=1 \\ |\supp(\vv)| \le s}}\ \vv^\top \hmsig \vv,
\end{equation}
where $\hmsig$ is the sample covariance.  While sparsity can restore statistical consistency and yield sample complexities $\approx s\log d$ rather than $\Omega(d)$, \eqref{eq:spca} is nonconvex and NP-hard in general~\cite{MagdonIsmail2015NPhardnessAI}, motivating the study of polynomial-time approximation algorithms \cite{d2008optimal,amini2008high,johnstone2009consistency,pmlr-v22-vu12}. 


\textbf{Models of sparse PCA.} Existing sparse PCA algorithms can be grouped by the type of covariance model assumed. The first is the well-known spiked identity covariance model.\footnote{For definitions and notation used throughout the paper, see Section~\ref{sec:prelim}.}

\begin{model}[Spiked identity model]\label{model:spiked_id}
Fix $(s, d, n) \in \N^3$ with $s \in [d]$, $\gamma \in (0, 1)$. In the \emph{spiked identity model}, there is an unknown unit $\vv \in \R^d$ with $\nnz(\vv) \le s$. We obtain $\{\vx_i\}_{i \in [n]} \simiid \calD$, a sub-Gaussian distribution with covariance $\msig = 0.9(\id_d - \vv\vv^\top) + \vv\vv^\top$.
\end{model}

 Model~\ref{model:spiked_id} has inspired elegant yet simple combinatorial principles for solving sparse PCA. Notable examples are the diagonal thresholding method~\cite{johnstone2009consistency,amini2008high}, which simply keeps the top $s$ elements of the marginal variances (the diagonal of the empirical covariance). Another line of work thresholds the sample covariance matrix itself to establish consistency~\cite{Bickel2009CovarianceRB,deshpande2016sparse,wainwright2019high,novikov2023sparse}. While these combinatorial methods are computationally light and interpretable, they typically require structural assumptions, e.g., working under the spiked identity model (Model~\ref{model:spiked_id}). 

Our focus is combinatorial algorithms for the sparse PCA problem in full generality (Model~\ref{model:general}), i.e., assuming only that the top eigenvector of $\msig$ is sparse, with no other assumptions about $\msig$.


\begin{model}[General model]\label{model:general}
Fix $(s, d, n) \in \N^3$ with $s \in [d]$, $\gamma \in (0, 1)$. In the \emph{general model}, there is an unknown unit $\vv \in \R^d$ with $\nnz(\vv) \le s$, the top eigenvector of an unknown $\msig \in \PSD^{d \times d}$ with $\lam_2(\msig) \le 0.9 \lam_1(\msig)$.
We obtain $\{\vx_i\}_{i \in [n]} \simiid \calD$, a sub-Gaussian distribution with covariance $\msig$.
\end{model}

In both Models~\ref{model:spiked_id} and~\ref{model:general}, we set the ``gap'' parameter separating the quality of the ($s$-sparse) top eigenvector $\vv$ from any alternative, to be $0.1$ for simplicity; our main results are derived under a generalization to an arbitrary gap $\gamma > 0$ (cf.\ Model~\ref{model:kpca}). Model~\ref{model:spiked_id} is a special case of Model~\ref{model:general} where the covariance outside the $\vv$ subspace is a (perfectly-spherical) multiple of $\id_d - \vv\vv^\top$ exactly. Thus, Model~\ref{model:general} generalizes Model~\ref{model:spiked_id} by making only the minimal well-posedness assumptions for sparse PCA: that $\lam_2(\msig)\le 0.9\lam_1(\msig)$, and that $\msig$ has a sparse top eigenvector. 

Our starting point is the surprising observation that under the more general Model~\ref{model:general}, standard combinatorial algorithms (many of which have  provable recovery guarantees under Model~\ref{model:spiked_id}) fail with constant probability under the conjectured minimum sample complexity $n \gtrsim s^2 \log d$ for polynomial-time algorithms to exist \cite{BerthetR13}. Indeed, in Section~\ref{sec:counterexamples}, we give counterexamples against diagonal thresholding, entrywise covariance thresholding, and a planted clique-style greedy correlation ranking suggested recently by \cite{BlasiokBKS24}. Together, these counterexamples rule out robustness of typical $\approx d^2 \cdot \poly(s \log d)$-time combinatorial methods beyond Model~\ref{model:spiked_id}.


Under Model~\ref{model:general}, most provable methods use heavy hammers from convex programming, most prominently semidefinite programming (SDP)-based formulations~\cite{d2008optimal,amini2008high,pmlr-v22-vu12, BerthetR13,wang2016statistical,d2008optimal,VuCLR13,10.1145/1273496.1273601,8412518,dey2017sparse,amini2008high}. These algorithms are computationally burdensome and memory-intensive in high dimensions: for the standard $\ell_1$-constrained sparse PCA SDP with $\approx d^2$ constraints and a $d \times d$ decision variable, state-of-the-art SDP solvers run in time $\Omega(d^{2\omega}) \subseteq \Omega(d^{4.5})$ in theory (where $\omega > 2.371$ denotes the current matrix multiplication constant) \cite{SDP_huang2022solving, AlmanDWXXZ25}, and may incur even larger overheads in practice. To our knowledge, the fastest end-to-end implementation (i.e., without initialization assumptions or additional structure) of the SDP-based approach under Model~\ref{model:general} was given by \cite{WangLL14}. Their algorithm requires time $\Omega(d^4 n^{-0.5})$ (see the discussion in their Section 4.4), which in the typical sublinear sample regime $n \ll d$ is still much larger than the input size of $nd$.

This state of affairs begs the following question, central to this work.
\[\text{\emph{Is there a lightweight combinatorial method that solves sparse PCA under Model~\ref{model:general}}?} \]
Concretely, our goal is to design an algorithm succeeding under Model~\ref{model:general} with $\poly(s\log(d))$ samples and $d^2 \cdot \poly(s\log(d))$ time, competitive with existing combinatorial methods for Model~\ref{model:spiked_id}.  
Despite the rich body of work on sparse PCA and many attempts to answer the above question~\cite{YuanZ13,qiu2023gradientbased,KumarS24}, this fundamental problem has remained open.


\textbf{The semi-random lens.} 
Model~\ref{model:general} has connections to a recent line of work on \emph{semi-random models} from theoretical computer science, whose goal is to investigate the robustness of statistical algorithms to their modeling assumptions. Briefly, a (monotone) semi-random adversary changes a standard statistical model by strengthening the desired signal, in a way that breaks an original modeling assumption. The underlying philosophy is that if a statistical algorithm fails under such a ``helpful'' modification, then the design of the algorithm was overfit to the model.

Our setting is reflective of this philosophy, as compared to Model~\ref{model:spiked_id}, Model~\ref{model:general} can only amplify the signal in the $\vv$ direction: it posits that all other eigenvalues are \emph{at most} $0.9 \lam_1(\msig)$, whereas Model~\ref{model:spiked_id} posits that they are all \emph{exactly} $0.9 \lam_1(\msig)$. It is thus potentially troublesome from a robustness standpoint that existing combinatorial heuristics fail under a more general model, that can only amplify the signal towards $\vv$ from an information-theoretic point of view.

The semi-random philosophy has been applied to a wide range of statistical problems, starting from seminal work by \cite{BlumS95, feige2001heuristics} on graph coloring. We are particularly inspired by a line of recent work investigating the brittleness of \emph{lightweight, fast algorithms} to semi-random models (as opposed to less efficient convex programming counterparts). This investigation has been applied to (sparse) linear regression \cite{ChengG18, JambulapatiLMSST23, KelnerLLST23, GaoC23, KelnerLLST24}, learning \cite{BlumGLMSY24, ChandrasekaranKST24}, and clustering \cite{BhaskaraJKMMW24, BlasiokBKS24}. Notably, the work of \cite{BlasiokBKS24} designed fast combinatorial algorithms for a semi-random variant of the planted clique problem (building upon \cite{buhai2023algorithms}, which used SDP machinery). Due to the intimate connection between planted clique and sparse PCA \cite{BerthetR13}, we were directly motivated by the development in \cite{BlasiokBKS24}, and we analyze the combinatorial heuristic in that paper as it applies to our problem in Section~\ref{sec:counterexamples}.

\subsection{Contributions}

We now outline our primary contributions, detailed fully in Section~\ref{sec:main_results}.

\textbf{Restarted truncated power method.} We provide a global convergence analysis on a modification of the truncated power method, proposed by \cite{YuanZ13} as a combinatorial heuristic for sparse PCA, but previously only analyzed under a suitable local initialization. The algorithm applies truncation to the iterates of the power method, keeping the largest $r$ coordinates by magnitude.

\begin{algorithm2e}
\DontPrintSemicolon
\caption{$\TPM(\{\vx_i\}_{i \in [n]}, r, T)$}  \label{alg:tpm}
\textbf{Input:} Dataset $\{\vx_i \in \R^d\}_{i \in [n]}$, sparsity hyperparameter $r \in [d]$, iteration count $T \in \N$\;
$B \gets \lfloor n/T \rfloor$\;
\For{$i \in [d]$}{
    $\vu^{(i)}_0 \gets \ve_i$\;
    \For{$t \in [T]$}{
        $\hmsig_{t} \gets \frac 1 B \sum_{i \in [Bt, B(t+1)]} \vx_i \vx_i^\top$\;
        $\vu^{(i)}_t \gets \Top_r(\hmsig_{t}\vu_{t - 1}^{(i)}) \cdot \norms{\Top_r(\hmsig_{t}\vu_{t - 1}^{(i)})}_2^{-1}$\; \label{line:one_iter_tpm}
    }
}
$\hmsig \gets \frac 1 n \sum_{i \in [n]} \vx_i \vx_i^\top$\;
\Return{$\vu^{(i)}_T$ \textup{where} $i \in \arg\max_{i \in [d]} \langle\vu_T^{(i)}, \hmsig \vu_T^{(i)}\rangle$} \label{line:select}
\end{algorithm2e}


Our modification is described in Algorithm~\ref{alg:tpm}, and we provide the following convergence guarantee. Compared to the original statement in \cite{YuanZ13}, our variant makes the following changes: it splits the dataset so that fresh samples are used in each iteration, restarts the algorithm $d$ times with initialization to each standard basis vector, and performs a final Rayleigh quotient selection step.

\begin{theorem}[Informal, see Theorem~\ref{thm:cpca_guarantee}]\label{thm:informal:cpca_guarantee} Let $\delta \in \bb{0,1}$, and under Model~\ref{model:general}, assume that 
\bas{n = \Omega\Par{s^2\log(s)\log\Par{\frac {d}{\delta}}},\quad r = \Omega\Par{s^2},\quad T = \Omega\Par{\log s}
}
for appropriate constants. 
Then, in time $O\Par{nd^2}$, Algorithm~\ref{alg:tpm} 
returns an $r$-sparse unit vector $\vu$ such that with probability at least $1-\delta$, $\inprod{\vv}{\vu}^2 \ge \frac 9 {10}$. 
\end{theorem}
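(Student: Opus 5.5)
The plan is to analyze only the restart initialized at a good basis vector, show that its correlation with $\vv$ grows geometrically, and then argue that the final Rayleigh quotient selection (Line~\ref{line:select}) cannot pick a worse candidate. Take $i^\star \in \arg\max_{j} |\vv_j|$. Since $\nnz(\vv) \le s$ and $\norm{\vv}_2 = 1$, we have $\inprod{\vv}{\ve_{i^\star}}^2 \ge 1/s$. The remaining $d-1$ restarts matter only through the selection step.

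\emph{Tracking the good restart.} Write $\vu_{t} = \alpha_t \vv + \beta_t \vw_t$ with $\vw_t \perp \vv$ a unit vector, and track the tangent $\tau_t \defeq |\beta_t|/|\alpha_t|$. The update has three pieces, which I will bound in turn.
\begin{enumerate}
\item \emph{Population step.} In $\msig \vu_{t-1}$, the $\vv$-component is multiplied by $\lam_1$, while the orthogonal part has norm at most $0.9\lam_1 |\beta_{t-1}|$. Hence the tangent contracts by a factor $0.9$.
\item \emph{Sampling error.} Because fresh samples are used in each round, $\vu_{t-1}$ is independent of $\hmsig_t$. Sub-Gaussian concentration plus a union bound over $d$ coordinates (and over the $T$ rounds) then gives $\norm{(\hmsig_t - \msig)\vu_{t-1}}_\infty \lesssim \lam_1\sqrt{\log(dT/\delta)/B}$. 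From this I will use two consequences. First, the error in the $\vv$-direction is at most $\sqrt{s}$ times the $\ell_\infty$ error, which is $\le c\lam_1/\sqrt{s} \le c'\lam_1|\alpha_{t-1}|$ once $B \gtrsim s^2\log(d/\delta)$. With $T = \Theta(\log s)$ this gives $n = BT \gtrsim s^2\log s\log(d/\delta)$. Second, the error restricted to any $r$ coordinates has $\ell_2$ norm at most $\sqrt{r}$ times the $\ell_\infty$ error, which is $O(1)\cdot\lam_1$ times a small constant after choosing constants. This only inflates the normalization.
\item \emph{Truncation.} I would prove (or adapt from \cite{YuanZ13}) a truncation lemma: for any $\vy$ and $s$-sparse unit $\vv$,
\[
\frac{|\inprod{\vv}{\Top_r(\vy)}|}{\norm{\Top_r(\vy)}_2} \ge \frac{|\inprod{\vv}{\vy}|}{\norm{\vy}_2} - O\Par{\sqrt{s/r}}\cdot(\text{mass of }\vy\text{ off the top-}r\text{ set}) .
\]
It holds because every coordinate of $\supp(\vv)$ that is dropped is smaller than each of the $r$ kept ones. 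Since the correlation is only $1/\sqrt{s}$ at the start, the additive loss must be $\ll 1/\sqrt{s}$, which is exactly what $r \gtrsim s^2$ buys.
\end{enumerate}
Combining the three pieces gives a recursion of the form $\alpha_t^2 \ge \min\{(1+c)\alpha_{t-1}^2,\ 0.95\}$. Starting from $\alpha_0^2 \ge 1/s$, after $T = O(\log s)$ rounds $\alpha_T^2 \ge 0.95$, say.

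\emph{Selection step.} I will establish uniform concentration of $\hmsig$ over all $(r+s)$-sparse unit vectors, $|\vu^\top(\hmsig - \msig)\vu| \le 0.01\lam_1$, by a standard net argument. This needs $n \gtrsim r\log(d/\delta) \approx s^2\log(d/\delta)$, which is within budget. It handles the dependence between $\hmsig$ and all candidates at once. The good candidate then has $\vu^\top\hmsig\vu \ge (0.95 + 0.05\cdot 0 - 0.01)\lam_1$. Any candidate with $\inprod{\vv}{\vu}^2 = c^2$ has $\vu^\top\msig\vu \le \lam_1(c^2 + 0.9(1-c^2))$. So the selected $\vu$ has $c^2 \ge 9/10$ after adjusting constants.

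\emph{Runtime.} Each matrix--vector product $\hmsig_t\vu$ costs $O(Bd)$ via $\sum_j \vx_j\inprod{\vx_j}{\vu}$. Over $T$ rounds and $d$ restarts this is $O(nd^2)$ in total, plus $O(nd\cdot d)$ for the selection step.

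\emph{Main obstacle.} I expect the hardest part to be making the per-iteration inequality rigorous in the early phase, when $\alpha_{t-1}^2 \approx 1/s$. There the sampling noise spread over the $r \approx s^2$ kept coordinates is comparable to the whole signal norm. I would first try to control the normalization $\norm{\Top_r(\hmsig_t\vu_{t-1})}_2 \le (1+O(c))\norm{\msig\vu_{t-1}}_2 + O(c\lam_1)$. The worry is that $\norm{\msig\vu_{t-1}}_2$ might be much smaller than $\lam_1$ when $\msig$ is nearly degenerate off $\vv$. In that case I would instead track the unnormalized ratio $|\inprod{\vv}{\vy}|/\norm{\vy - \inprod{\vv}{\vy}\vv}_2$, and bound the noise relative to $\lam_1|\alpha_{t-1}|$ rather than relative to $\norm{\vy}_2$.
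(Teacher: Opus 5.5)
Your proposal is correct in outline and has the same architecture as the paper's proof (Lemma~\ref{lem:kpca_prog}, Proposition~\ref{prop:main_kpca} and Theorem~\ref{thm:cpca_guarantee} at $k=1$): seed at the heaviest coordinate of $\vv$, contract the tangent using the gap and fresh batches, take $r \gtrsim s^2$ so the $\sqrt{s/r}$ truncation loss stays below the initial $1/\sqrt{s}$ correlation, then select via uniform $r$-sparse concentration and convert the Rayleigh quotient to correlation through the gap.

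The one real difference is noise control. You bound $\norm{(\hmsig_t-\msig)\vu_{t-1}}_\infty$ via $d$ bilinear forms $\ve_j^\top(\hmsig_t-\msig)\vu_{t-1}$ with the batch-independent $\vu_{t-1}$ (Fact~\ref{fact:bilinear_err}). You then pay $\sqrt{s}$ in the signal direction and $\sqrt{r}$ on the kept coordinates. The paper instead restricts to the data-dependent block $F_t = S\cup\supp(\vu_t)\cup\supp(\vu_{t-1})$, bounds it by the operator norm over all $3r$-sparse blocks (Corollary~\ref{cor:opnorm_err_sparse}), and adds one bilinear bound in the signal direction. Your route is more elementary: a coordinatewise bound covers whatever coordinates truncation keeps, so no union over $\binom{d}{3r}$ supports is needed inside the iteration. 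Its extra $\sqrt{s}$ in the signal direction is harmless, because the orthogonal term already forces $B \gtrsim s^2\log(dT/\delta)$. The paper's route instead packages all noise into the single relative quantity $\epsilon_t$ that plugs directly into its ratio bound.

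Three details need fixing.

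\emph{Truncation inequality.} It is false as stated. Take $s=1$, $\vv=\ve_1$, and let $\vy$ have $r$ other coordinates slightly larger than its first coordinate. Then the left side is $0$, while the right side is about $r^{-1/2}(1-O(r^{-1/2}))>0$. The correct loss (Lemma~\ref{lem:truncate_error}) is $\sqrt{s/r}$ times the mass of $\vy$ on the kept coordinates outside $\supp(\vv)$. It must also be applied to $\vy$ restricted to $\supp(\vv)\cup\supp(\Top_r(\vy))$, which leaves $\Top_r$ unchanged; otherwise the dense noise makes $\norm{\hmsig_t\vu_{t-1}}_2$ useless.

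\emph{Orthogonal noise.} The noise $\sqrt{r}\,\norm{(\hmsig_t-\msig)\vu_{t-1}}_\infty \approx c\lam_1$ must be compared with $\lam_1\Abs{\beta_{t-1}}$, which stays of order $\lam_1\sqrt{\gamma\Delta}$ until the target is reached (as in \eqref{eq:bound_term_2}). Comparing it with $\lam_1\Abs{\alpha_{t-1}}$, as your fallback suggests, would cost an extra factor of $s$ in samples. Your worry about nearly degenerate $\msig$ is moot, since the noise never needs to be small relative to $\norm{\msig\vu_{t-1}}_2$.

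\emph{Selection constants.} With your numbers the selection step certifies only $\inprod{\vv}{\vu}^2\ge 0.3$, from $0.9+0.1\inprod{\vv}{\vu}^2\ge 0.93$. The good restart must reach $\inprod{\vv}{\vu}^2 \ge 1-O(\gamma\Delta)$, here about $0.99$, which is why the paper takes $\eps=\beta=\Delta\gamma/2$. Your recursion still gets there in $O(\log s)$ steps once $r$ and $B$ carry the corresponding constant factors.
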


We remark that the output can be made exactly $s$-sparse via a final truncation step (Lemma~\ref{lem:proper_hypotheses}), at constant overhead to the final approximation error. 
Our formal theorem statement, Theorem~\ref{thm:cpca_guarantee}, is stated for an arbitrary gap $\gamma$ and final correlation $\Delta$ to $\vv$ (with dependences explicitly stated); Theorem~\ref{thm:informal:cpca_guarantee}, specializes the statement to $\Delta = \gamma = \frac 1 {10}$. This marks the first algorithm that achieves our desired efficiency targets of $\poly(s \log(d))$ samples and $d^2 \cdot \poly(s\log(d))$ time, for sparse PCA under Model~\ref{model:general}, improving upon SDP-based counterparts \cite{WangLL14} by $\Omega(d^{2})$ in the runtime. We note that our sample complexity scales as $\approx s^2\log(d)\log(s)$ in the regime $\gamma = \Delta = \Omega(1)$, nearly-matching the conjectured information-theoretically optimal $\approx s^2\log(d)$ samples, that the slower SDP-based methods (as well as any polynomial-time algorithm) require.


To prove our result, we begin by observing that initializing the truncated power method of \cite{YuanZ13} with a standard basis vector indicating the largest coordinate of $\vv$ achieves $\poly(\frac 1 s)$ correlation, making it a suitable local initialization. Thus, we simply try all possible initializations in Theorem~\ref{thm:cpca_guarantee}, a \emph{restarted} truncated power method. We complement this observation with \emph{oversampling the support}, i.e., choosing a truncation parameter $r \gg s$, to account for the low correlation in early iterations. The factors of $s$ in our sample complexity are optimizing by using concentration of empirical bilinear forms, which is where we require the use of sample splitting.

Our final analysis requires care in converting between notions of PCA, as the \cite{YuanZ13} argument proceeds by analyzing the correlation with $\vv$, whereas the selection step in Line~\ref{line:select} cannot measure this quantity and instead uses a selection rule based on quadratic forms.

 

\textbf{Towards sparse subspace recovery.} We complement our results by considering the more general problem of recovering a $k$-dimensional leading eigenspace, whose components all share a support of size $s$. Existing SDP-based algorithms for sparse $1$-PCA also solve this more general problem of sparse subspace recovery, as we recall for completeness in Appendix~\ref{app:sdp} (following \cite{VuCLR13}).

Several works in the literature, \cite{mackey2008deflation, gataric2020sparse} have proposed a deflation-based strategy for reducing the sparse $k$-PCA problem to iteratively solving sparse $1$-PCA problem on a deflated matrix (the original target, with all previously-learned approximate principal components projected out). We provide pseudocode for such a reduction in Algorithm~\ref{alg:deflation}. Recently, the approximation tolerance of this strategy was rigorously analyzed in the non-sparse PCA setting by \cite{jambulapati2024black}.
Unfortunately, despite decades of research, to our knowledge no theoretical analysis of deflation-based $k$-sparse PCA reductions currently exists under Model~\ref{model:kpca}, the natural generalization of Model~\ref{model:general}.

We investigate this lack of theoretical guarantees for deflation-based sparse PCA methods in Section~\ref{ssec:deflation_barrier}. We provide a formal barrier against such  reductions for sparse PCA: namely, that Model~\ref{model:kpca} does not reduce to an instance of itself after deflation, under mild approximation error. Our counterexample (Lemma~\ref{lem:deflation_barrier}) constructs $\msig \in \PSD^{d \times d}$ with a leading top-$2$ eigenspace supported on just $s = 2$ coordinates, such that after projecting out a $3$-sparse vector that is arbitrarily well-correlated with the top eigenvector, the deflated matrix suddenly has a fully-dense top eigenvector. Thus, sparse PCA methods no longer apply to the residual problem. Interestingly, our counterexample leverages the generality of Model~\ref{model:general}, again highlighting the potential brittleness of existing sparse PCA frameworks to their modeling assumptions.

Nevertheless, we show in Theorem~\ref{thm:cpca_guarantee} that under Model~\ref{model:kpca}, $\TPM$ does successfully approximate a single sparse eigenvector in the leading eigenspace. This opens the door to its use in any potential future deflation-based sparse PCA frameworks that can bypass the barrier in Lemma~\ref{lem:deflation_barrier}.

\subsection{Related work}

In addition to the work cited in the introduction, we provide a brief sketch of more related work on sparse PCA in this section, as the literature has studied several variants of the problem.  

\noindent\textbf{Existing algorithms for Model~\ref{model:spiked_id}.} A large body of work \cite{10.1214/13-AOS1097,cai2013sparse,pmlr-v37-yangd15, wang2016statistical, bresler2018sparse, gataric2020sparse, pmlr-v267-cai25h} develops algorithms with provable convergence guarantees for the spiked identity model (Model~\ref{model:spiked_id}) with either $1$ or $k$ spikes. The strategies therein range from using marginal variances for providing a warm start, a thresholding-based variant of power method, a reduction to sparse PCA, and using random projections with a deflation-style approach. While these algorithms tend to be lightweight and combinatorial in nature, none provably extend to Model~\ref{model:general} (and indeed, we provide counterexamples to several prominent ones in Section~\ref{sec:counterexamples}).

We note that certain combinatorial algorithms under Model~\ref{model:spiked_id}, such as diagonal thresholding, can be implemented to run in time $O(d \cdot \poly(s\log(d)))$, i.e., linear in the dimension. We leave obtaining a similar runtime improvement under Model~\ref{model:general} as a natural and interesting open direction.

\noindent\textbf{Existing algorithms for Model~\ref{model:general}.}
As discussed earlier, existing provable algorithms for solving sparse PCA under Model~\ref{model:general} using $\poly(s\log(d))$ samples are based on solving SDPs. Here we outline several computationally-cheaper alternatives in the literature and their guarantees.

Most SDP-free algorithms for the general covariance model are spectral in nature. The truncated power method as proposed and analyzed in \cite{YuanZ13} was previously only known to succeed under a suitable local initialization. Recently,~\cite{qiu2023gradientbased} proposed a single-pass streaming algorithm and claim it is the first to provably reach the global optimum without local initialization under a general $\msig$. The method requires $O(d^2)$ memory, $O(nd^2)$ time, but achieves error $O(\frac{d^{2}}{\sqrt{n}})$, so it requires e.g., $\poly(d)$ samples for constant error.
A follow-up work by~\cite{KumarS24} introduced a computationally lightweight, linear-time streaming algorithm, though its statistical guarantees again require a larger sample size ($\Omega(\poly(d))$) than is information-theoretically optimal. 

\noindent\textbf{Optimization and approximation viewpoints.}
Several works study sparse PCA primarily as a \emph{deterministic} optimization problem by developing exact or approximation algorithms, and geometric characterizations. These do not provide statistical recovery rates as a function of the sample size $n$.
Representative examples include~\cite{chowdhury2020approximation_algorithms_sparse_pca,bertsimas2023sparse_pca_geometric_approach,chen2024new_basis_sparse_pca,delpia2025efficient_sparse_pca_blockdiag,lixie2025exact_and_approx_sparse_pca}. 


\noindent\textbf{Computational-statistical gaps.} Building on the foundational work of \cite{BerthetR13, berthet2016complexity}, where a conjectured computational-statistical gap for sparse PCA was shown via a reduction to the planted clique problem, recent work has studied sparse PCA in intermediate computational regimes, including certification hardness for constrained PCA~\cite{bandeira_et_al:LIPIcs.ITCS.2020.78,potechin2022subexp_sos_lower_bounds_sparse_pca,ding2023subexponential}; closely related computational barriers also appear in sparse CCA~\cite{gao2017sparse}. However, unlike the more fine-grained computational-statistical tradeoffs that our work focuses on, these works primarily examine the gap between the statistical properties of polynomial-time algorithms vs.\ exponential-time algorithms.
\section{Preliminaries}
\label{sec:prelim}

\textbf{Notation}. For $n \in \N$ we let $[n] \defeq \{i \in \N \mid i \le n\}$. We denote vectors in lowercase boldface letters and matrices in capital boldface letters. We denote the $i^{\text{th}}$ canonical basis vector in $\R^d$ by $\ve_i$. For $p \in [1, \infty]$ we let $\norm{\vv}_p$ to denote the $\ell_p$ norm of $\vv$. We use $\nnz$ to denote the number of nonzero entries in a vector or matrix, and $\supp$ to denote the support (i.e., indices of the nonzero entries). We use $\0_d$ to denote the all-zeroes vector and $\1_d$ to denote the all-ones vector in $\R^d$, and for an event $\calE$, we use $\mathbb{1}(\calE)$ to denote the corresponding $0$-$1$ indicator random variable.

For matrices $\ma, \mb$ with the same number of rows, we let $\begin{pmatrix} \ma & \mb \end{pmatrix}$ denote their horizontal concatenation. We let $\id_d$ be the $d \times d$ identity and $\mzero_{m \times n}$ be the all-zeroes $m \times n$ matrix. We say $\mv \in \R^{d \times r}$ is orthonormal if its columns are orthonormal, i.e.\ $\mv^\top\mv = \id_r$; note that $d \ge r$ in this case. We call a set of vectors orthonormal if their horizontal concatenation is orthonormal. We let $\Sym^{d \times d}$ be the set of real symmetric $d \times d$ matrices, which we equip with the Loewner partial ordering $\preceq$ and the Frobenius inner product $\inprod{\mm}{\mn} = \Tr(\mm\mn)$. We let $\PSD^{d \times d}$ and $\PD^{d \times d}$ respectively denote the positive semidefinite and positive definite subsets of $\Sym^{d \times d}$. 

We say $\ma \in \R^{m \times n}$ has singular value decomposition (SVD) $\muu \msig \mv^\top$ if $\ma = \muu\msig\mv^\top$ has rank-$r$, $\msig \in \R^{r \times r}$ is diagonal, and $\muu \in \R^{m \times r}$, $\mv \in \R^{n \times r}$ are orthonormal. We refer to the $i^{\text{th}}$ largest eigenvalue of $\mm \in \Sym^{d \times d}$ by $\lam_i(\mm)$, the corresponding eigenvector by $\vv_i(\mm)$, and the $i^{\text{th}}$ largest singular value of $\mm \in \R^{m \times n}$ by $\sigma_i(\mm)$. For $\mm \in \PSD^{d \times d}$, we use  $\normop{\mm} \defeq \sigma_1(\mm)$ to denote the $\ell_2$ operator norm of $\mm \in \R^{m \times n}$, and we use $\normf{\mm}$ to denote the Frobenius norm. We use $\sspan(\{\vv_i\}_{i \in [k]})$ to denote the span of a set of vectors, and $\linspan(\ma)$, $\ker(\ma)$, $\mathrm{rank}(\ma)$ to denote the span of the columns of matrix, the null space and the rank of $\ma$. For any subspace $\ms$, $\mathsf{dim}(\ms)$ denotes the dimensionality of the subspace. When $\mm \in \R^{m \times n}$ and $S \subseteq [m]$, $T \subseteq [n]$, we use $\mm_{S \times T}$ to denote the submatrix indexed by $S, T$. 

For a parameter $\sig > 0$, we say that a distribution $\calD$ over $\R^d$ is $\sigma$-sub-Gaussian if for all $\vu \in \R^d$,
\[\E_{\vx \sim \calD}\Brack{\exp\Par{\vu^\top \vx}} \le \exp\Par{\half \sigma^2 \norm{\vu}_2^2}.\]
If $\sigma = 1$ we say that $\calD$ is sub-Gaussian. Any $\sig$-sub-Gaussian distribution must have mean $\0_d$ (which follows by taking $\vu \to \0_d$ and differentiating the logarithm of the moment generating function).

We finally provide notation for procedures often used in the paper. For a vector $\vv \in \R^d$ and $k \in [d]$, we use $\Top_k(\vv)$ to denote the vector in $\R^d$ that zeroes out all but the top-$k$ entries of $\vv$ by magnitude (breaking ties arbitrarily). For a vector or matrix argument, and a threshold $\tau > 0$, we use $\thresh_\tau(\cdot)$ to be the vector or matrix that applies the following thresholding operation entrywise:
\begin{equation}\label{eq:thresh_def}\thresh_\tau(c) \defeq \begin{cases} c & |c| \ge \tau \\ 0 & \text{else} \end{cases}.\end{equation}

Our goal in both Models~\ref{model:spiked_id} and~\ref{model:general} is to estimate $\vv$ from samples. As is standard, to quantify the performance of our algorithms, we use the sine-squared error:
\begin{equation}\label{eq:sinesqerror}
\sin^2\angle(\vu, \vv) \defeq 1 - \frac{\inner{\vu}{\vv}^2}{\norm{\vu}_2^2\norm{\vv}_2^2}.
\end{equation}
Typically, our goal will be, for some $\delta, \Delta \in (0, 1)^2$, to output a vector $\vu$ such that with probability $\ge 1 - \delta$ over the random samples and the algorithm, we have $\sin^2 \angle(\vu, \vv) \le \Delta$.

Finally, in Sections~\ref{ssec:kpca} and~\ref{ssec:deflation_barrier} we consider the natural extension of Model~\ref{model:general} to $k$-PCA, where there are multiple sparse principal components that we wish to detect.

\begin{model}[General sparse $k$-PCA model]\label{model:kpca}
Fix $(s, d, n, k) \in \N^4$ with $(s, k) \in [d]^2$, $\gamma \in (0, \half)$, and $\sig > 0$. In the \emph{general sparse $k$-PCA model}, there are unknown orthonormal $\{\vv_i \in \R^d\}_{i \in [k]}$ that span the top-$k$ eigenspace of an unknown $\msig \in \PSD^{d \times d}$, satisfying $\frac{\lam_{k + 1}(\msig)}{\lam_{k}(\msig)} \le 1 - \gamma$, and $|\bigcup_{i \in [k]} \supp(\vv_i)| \le s$. We obtain samples $\{\vx_i\}_{i \in [n]} \simiid \calD$, a $\sigma$-sub-Gaussian distribution with covariance $\msig$.
\end{model}

We observe that Model~\ref{model:general} is simply Model~\ref{model:kpca} specialized to $k = 1$ and $\gamma = 0.1$.

\textbf{Sample covariance facts.} We require the following standard facts to bound the approximation error of random sampling, under our Models~\ref{model:spiked_id},~\ref{model:general}, and~\ref{model:kpca}.

\begin{fact}[Lemma B.3, \cite{ZhuL16}]\label{fact:wedin}
For some $\mu > 0$ and $\tau > 0$, let orthonormal $\mmu$ have columns spanning the eigenspace of $\ma \in \PSD^{d \times d}$ corresponding to eigenvalues $\le \mu$, and let orthonormal $\mv$ have columns spanning the eigenspace of $\mb \in \PSD^{d \times d}$ corresponding to eigenvalues $\ge \mu + \tau$. Then,
\[\normop{\mmu^\top \mv} \le \frac{\normop{\ma - \mb}}{\tau}.\]
\end{fact}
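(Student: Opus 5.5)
The plan is to turn the eigenspace conditions into a Sylvester-type identity for $\mathbf{X} \defeq \mmu^\top \mv$ and then lower bound that identity through a single top singular pair of $\mathbf{X}$.

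\emph{Step 1 (the identity).} Since the columns of $\mmu$ span an invariant subspace of $\ma$, we can write $\ma\mmu = \mmu\mlam_U$, where $\mlam_U$ is symmetric with spectrum contained in $[0,\mu]$. To make $\mlam_U$ diagonal, we choose the orthonormal basis inside the subspace to consist of eigenvectors; this is harmless, because $\normop{\mmu^\top\mv}$ does not change under rotations within either subspace. In the same way, $\mb\mv = \mv\mlam_V$ with the spectrum of $\mlam_V$ contained in $[\mu+\tau,\infty)$. Writing $\me \defeq \mb - \ma$, we get
\[\mmu^\top \me \mv = \mmu^\top\mb\mv - \mmu^\top\ma\mv = \mathbf{X}\mlam_V - \mlam_U \mathbf{X}.\]
Shifting by $\mu$ turns this into
\[\mmu^\top\me\mv = \mathbf{X}\mm + \mn\mathbf{X}, \quad \text{where } \mm \defeq \mlam_V - \mu\id \succeq \tau\id \text{ and } \mn \defeq \mu\id - \mlam_U \succeq \mzero.\]

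\emph{Step 2 (testing against a singular pair).} Let $\sigma = \normop{\mathbf{X}}$, and let unit vectors $\vx,\vy$ satisfy $\mathbf{X}\vy = \sigma\vx$ and $\mathbf{X}^\top\vx = \sigma\vy$. Then
\[\vx^\top(\mathbf{X}\mm + \mn\mathbf{X})\vy = \sigma\,\vy^\top\mm\vy + \sigma\,\vx^\top\mn\vx \ge \sigma\tau + 0.\]
The left-hand side is a bilinear form of $\mmu^\top\me\mv$ evaluated at unit vectors. It is therefore at most $\normop{\mmu^\top\me\mv} \le \normop{\me} = \normop{\ma-\mb}$, because $\mmu$ and $\mv$ have orthonormal columns.

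\emph{Step 3 (conclusion).} Combining the two bounds gives $\sigma\tau \le \normop{\ma-\mb}$, which is the claim after dividing by $\tau > 0$.

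The only point that needs care is the sign bookkeeping in Step 2. The shift by $\mu$ makes $\mm$ and $\mn$ simultaneously positive semidefinite, so neither of the two terms can cancel the other. No positivity of $\ma$ or $\mb$ beyond the stated eigenvalue separation is needed.
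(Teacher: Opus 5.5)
Your proof is correct and complete. The paper does not prove this fact; it imports it from \cite{ZhuL16}. The natural comparison is therefore with the standard norm-based Wedin argument.

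That argument starts from the same Sylvester identity $\mlam_U \mx = \mx \mlam_V - \mmu^\top \me \mv$. It then right-multiplies by $\mlam_V^{-1}$ and takes operator norms, obtaining $\normop{\mx} \le \frac{\mu}{\mu+\tau}\normop{\mx} + \frac{\normop{\me}}{\mu+\tau}$, and rearranges. That route needs $\normop{\mlam_U} \le \mu$, i.e.\ that the eigenvalues of $\ma$ on $\linspan(\mmu)$ lie in $[-\mu,\mu]$. This is exactly what the hypotheses $\ma \in \PSD^{d \times d}$ and $\mu > 0$ supply, along with invertibility of $\mlam_V$.

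Your shift by $\mu$ and test against a top singular pair $(\vx,\vy)$ of $\mx$ instead makes the two contributions $\sigma\vy^\top\mm\vy \ge \sigma\tau$ and $\sigma\vx^\top\mn\vx \ge 0$ separately sign-definite. So you need no norm bound on $\mlam_U$ and no inverse. As you observe, this gives the bound for arbitrary symmetric $\ma,\mb$ and any real $\mu$, a mild generalization of the stated fact.

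A minor remark: diagonalizing $\mlam_U$ and $\mlam_V$ is unnecessary, since only their symmetry and spectral bounds are used.
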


\begin{fact}[Lemma 6.26, \cite{wainwright2019high}]\label{fact:entrywise_err}
Let $\calD$ be a $\sigma$-sub-Gaussian distribution with covariance $\msig \in \PSD^{d \times d}$, let $\{\vx_i\}_{i \in [n]} \simiid \calD$, and let $\hmsig \defeq \frac 1 n \sum_{i \in [n]} \vx_i\vx_i^\top$. There exists a universal constant $C > 0$ such that for all $\delta \in (0, 1)$,
\[\Pr\Brack{\max_{(i, j) \in [d] \times [d]} \Abs{\hmsig_{ij} - \msig_{ij}} \ge C\sigma^2\Par{\sqrt{\frac{\log(\frac d \delta)}{n}} + \frac{\log(\frac d \delta)}{n}}}\le \delta.\]
\end{fact}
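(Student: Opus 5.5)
The plan is the standard route for this fact: polarization to reduce to squares of one-dimensional sub-Gaussians, Bernstein's inequality for sub-exponential averages, and a union bound over the $d^2$ entries. Fix a pair $(i,j) \in [d]\times[d]$. I would write
\[\vx_i \vx_j = \tfrac14\Par{\inprod{\ve_i+\ve_j}{\vx}^2 - \inprod{\ve_i-\ve_j}{\vx}^2},\]
which reduces everything to squares of scalar projections. For a unit vector $\vu$, the definition of $\sigma$-sub-Gaussianity gives $\E[\exp(\lambda\inprod{\vu}{\vx})] \le \exp(\half \lambda^2\sigma^2)$ for all $\lambda\in\R$. The vectors $\ve_i \pm \ve_j$ have norm at most $2$, so $y_\pm \defeq \inprod{\ve_i\pm\ve_j}{\vx}$ are scalar sub-Gaussians with variance proxy $O(\sigma^2)$. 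The case $i=j$ is covered by the same formula, or handled directly with $y = x_i$.

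The next step is to show that $y^2 - \E y^2$ is sub-exponential with parameters $(\nu, b) = (O(\sigma^2), O(\sigma^2))$ for such a $y$. From the MGF bound, Chernoff gives $\Pr[|y|\ge t] \le 2\exp(-t^2/(2\sigma'^2))$. Integrating the tail then yields the moment bounds $\E[y^{2k}] \le 2\,k!\,(2\sigma'^2)^k$. Expanding the exponential series gives $\E[\exp(\lambda(y^2 - \E y^2))] \le \exp(O(\lambda^2\sigma^4))$ for $|\lambda| \le c/\sigma^2$, where centering removes the linear term. A difference of two sub-exponential variables, even dependent ones, is again sub-exponential with comparable parameters, by Cauchy--Schwarz on the MGF at $2\lambda$. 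Hence $Z \defeq \vx_i\vx_j - \msig_{ij}$ is sub-exponential with parameters $O(\sigma^2)$.

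I would then apply Bernstein's inequality for the i.i.d.\ average $\hmsig_{ij} - \msig_{ij} = \frac1n\sum_{\ell}Z_\ell$:
\[\Pr\Brack{\Abs{\hmsig_{ij}-\msig_{ij}} \ge t} \le 2\exp\Par{-c\, n \min\Par{\frac{t^2}{\sigma^4}, \frac{t}{\sigma^2}}}.\]
Set $L \defeq \log(\frac d\delta)$ and $t = C\sigma^2\Par{\sqrt{L/n} + L/n}$.
\begin{itemize}
\item If $L \le n$, then $t^2/\sigma^4 \ge C^2 L/n$ and $t/\sigma^2 \ge CL/n$.
\item If $L > n$, then $t/\sigma^2 \ge CL/n$ and $t^2/\sigma^4 \ge C^2L^2/n^2 \ge C^2 L/n$.
\end{itemize}
In either case the exponent is at least $c\min(C,C^2)L$. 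A union bound over the $d^2$ pairs bounds the failure probability by $2d^2\exp(-c\min(C,C^2)L) = 2d^2(\delta/d)^{c\min(C,C^2)}$. This is at most $\delta$ once $C$ is a large enough universal constant, using $d^2 \le (d/\delta)^2$ and $\delta \le 1$.

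The only step that needs care is the sub-exponential bookkeeping in the second paragraph. There the MGF-based definition of sub-Gaussianity, rather than a $\psi_2$-norm definition, must be converted into explicit moment bounds, and the constants must stay universal multiples of $\sigma^2$. I would follow the standard computation, as in Wainwright Ch.~2, that the square of a $\sigma$-sub-Gaussian variable is $(c\sigma^2, c\sigma^2)$-sub-exponential; the rest is mechanical.
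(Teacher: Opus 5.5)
Your proposal is correct and is essentially the standard argument. The paper gives no proof here (it cites Wainwright), but its proof of Fact~\ref{fact:bilinear_err} is exactly the sub-exponential-plus-Bernstein bound you derive. Applying that fact with $\va=\ve_i$, $\vb=\ve_j$ and a union bound over the $d^2$ entries gives this statement; your polarization step simply replaces the direct ``product of sub-Gaussians is sub-exponential'' lemma.

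One small caveat: your final constant step needs $d\ge 2$. For $d=1$ and $\delta\to 1$ the threshold tends to $0$, and the statement itself then fails (e.g.\ $n=1$ with $x$ equal to $\pm1$ with probability $\frac14$ each and $0$ otherwise, so the error is always $\frac12$). So $d\ge 2$ is the implicit regime anyway.
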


\begin{fact}[Exercise 4.7.3, \cite{Vershynin18}]\label{fact:opnorm_err}
Let $\calD$ be a $\sigma$-sub-Gaussian distribution with covariance $\msig \in \PSD^{d \times d}$, let $\{\vx_i\}_{i \in [n]} \simiid \calD$, and let $\hmsig \defeq \frac 1 n \sum_{i \in [n]} \vx_i\vx_i^\top$. There exists a universal constant $C > 0$ such that for all $\delta \in (0, 1)$,
\[\Pr\Brack{\normop{\hmsig - \msig} \ge C\sigma^2 \Par{\sqrt{\frac{d + \log(\frac 1 \delta)}{n}} + \frac{d + \log(\frac 1 \delta)}{n}}\normop{\msig}} \le \delta.\]
\end{fact}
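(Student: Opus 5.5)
We plan to use the standard $\eps$-net argument combined with Bernstein's inequality for sub-exponential variables. We read the hypothesis in the normalization of \cite{Vershynin18}: for every unit $\vu$, the scalar $\inprod{\vu}{\vx}$ is sub-Gaussian with parameter at most $\sigma\sqrt{\vu^\top\msig\vu}$ (for instance, $\msig^{-1/2}\vx$ is $\sigma$-sub-Gaussian). This is the normalization that produces the factor $\normop{\msig}$. Under the literal hypothesis of the Fact, where $\vx$ itself is $\sigma$-sub-Gaussian, the same argument gives the bound with $\normop{\msig}$ replaced by $1$. Set $\mm \defeq \hmsig - \msig \in \Sym^{d \times d}$. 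Because $\mm$ is symmetric, $\normop{\mm} = \sup_{\norm{\vu}_2 = 1} |\vu^\top \mm \vu|$.

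\textbf{Step 1 (net reduction).} Fix a $\frac14$-net $\mathcal{N}$ of the unit sphere in $\R^d$ with $|\mathcal{N}| \le 9^d$, which exists by a volumetric argument. For any unit $\vu$, choose $\vw \in \mathcal{N}$ with $\norm{\vu - \vw}_2 \le \frac14$. Then
\[|\vu^\top\mm\vu - \vw^\top\mm\vw| = |(\vu-\vw)^\top \mm (\vu+\vw)| \le \tfrac12\normop{\mm}.\]
Taking the supremum over $\vu$ gives $\normop{\mm} \le 2\max_{\vw \in \mathcal{N}} |\vw^\top\mm\vw|$.

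\textbf{Step 2 (one direction).} Fix a unit $\vw$. We have $\vw^\top\mm\vw = \frac1n\sum_{i\in[n]} \Par{\inprod{\vw}{\vx_i}^2 - \E\inprod{\vw}{\vx}^2}$. By assumption $\inprod{\vw}{\vx_i}$ is sub-Gaussian with $\psi_2$-norm $\lesssim \sigma\sqrt{\vw^\top\msig\vw} \le \sigma\normop{\msig}^{1/2}$. Its square is therefore sub-exponential with $\psi_1$-norm $\lesssim \sigma^2\normop{\msig}$, and centering changes this only by a constant factor. Bernstein's inequality for i.i.d.\ centered sub-exponential variables then gives, for every $t>0$,
\[\Pr\Brack{|\vw^\top\mm\vw| \ge t\,\sigma^2\normop{\msig}} \le 2\exp\Par{-c\,n\min(t^2, t)}.\]

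\textbf{Step 3 (union bound and parameter choice).} We union bound over $\mathcal{N}$, so the failure probability is at most $2\cdot 9^d\exp(-cn\min(t^2,t))$. We choose
\[t = C'\Par{\sqrt{\frac{d+\log(\frac1\delta)}{n}} + \frac{d+\log(\frac1\delta)}{n}},\]
which guarantees $n\min(t^2,t) \ge C'(d+\log\frac1\delta)$: if $t \le 1$ the square-root term controls $t^2$, and otherwise the linear term controls $t$. For $C'$ large enough, the failure probability is then at most $\delta$. Combining with Step 1 gives $\normop{\mm} \le 2t\sigma^2\normop{\msig}$, which is the claim with $C = 2C'$.

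The only step requiring care is Step 2. There one must control the $\psi_1$-norm of the squared projection by $\sigma^2\normop{\msig}$ rather than by a dimension-dependent quantity, which is exactly where the normalization of the sub-Gaussian hypothesis matters. The remaining steps are routine.
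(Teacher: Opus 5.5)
Your proof is correct. It is the standard $\eps$-net plus Bernstein argument behind the cited exercise in \cite{Vershynin18}; the paper supplies no proof of its own. The only cosmetic difference is that you skip the whitening step $\vx=\msig^{1/2}\mathbf{z}$ and instead bound the $\psi_1$-norm of $\inprod{\vw}{\vx}^2$ directly by $\sigma^2\vw^\top\msig\vw\le\sigma^2\normop{\msig}$.

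Your normalization caveat is also right. Under the paper's literal MGF definition the stated bound is not scale-invariant: under $\vx\mapsto a\vx$ the left side scales like $a^2$ while the right side scales like $a^4$. The version you obtain, with $\normop{\msig}$ replaced by $1$, is the form the paper actually invokes in Corollary~\ref{cor:opnorm_err_sparse}.
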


As a simple corollary of Fact~\ref{fact:opnorm_err}, we derive a concentration bound for all submatrices.

\begin{corollary}\label{cor:opnorm_err_sparse}
In the setting of Fact~\ref{fact:opnorm_err}, let $s \in [d]$. There exists a universal constant $C > 0$ such that for all $\delta \in (0, 1)$,
\[\Pr\Brack{\max_{\substack{S \subseteq [d] \\ |S| \le s}} \normop{\hmsig_{S \times S} - \msig_{S \times S}} \ge C\sig^2\Par{\sqrt{\frac{s \log(d) + \log(\frac 1 \delta)}{n}}  + \frac{s \log(d) + \log(\frac 1 \delta)}{n}}} \le \delta.\]
\end{corollary}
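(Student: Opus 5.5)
We prove Corollary~\ref{cor:opnorm_err_sparse} by applying Fact~\ref{fact:opnorm_err} to each small principal submatrix and taking a union bound. Fix $S \subseteq [d]$ with $|S| \le s$. The restricted vectors $\{(\vx_i)_S\}_{i \in [n]}$ are i.i.d.\ draws from the marginal of $\calD$ on the coordinates in $S$. This marginal has covariance $\msig_{S \times S}$, and its empirical second moment matrix is exactly $\hmsig_{S \times S}$. It is also $\sigma$-sub-Gaussian: for $\vu \in \R^{|S|}$, extend $\vu$ by zeros to $\tilde\vu \in \R^d$. Then $\vu^\top \vx_S = \tilde\vu^\top \vx$ and $\norm{\tilde\vu}_2 = \norm{\vu}_2$, so the MGF bound is inherited directly.

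Applying Fact~\ref{fact:opnorm_err} in dimension $|S| \le s$ with failure probability $\delta'$, we get that with probability at least $1 - \delta'$,
\[\normop{\hmsig_{S\times S} - \msig_{S \times S}} \le C\sigma^2\Par{\sqrt{\frac{s + \log(\frac 1 {\delta'})}{n}} + \frac{s + \log(\frac 1 {\delta'})}{n}}\normop{\msig_{S \times S}}.\]
Next we take a union bound over all supports. The number of sets $S$ with $|S| \le s$ is at most $\sum_{j \le s}\binom{d}{j} \le (d+1)^s$, which is at most $d^{2s}$ for $d \ge 2$; the case $d = 1$ is trivial. Setting $\delta' = \delta d^{-2s}$ gives $\log(\frac 1 {\delta'}) = 2s\log d + \log(\frac 1 \delta)$. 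Hence $s + \log(\frac 1{\delta'}) \le 3\Par{s\log d + \log(\frac 1 \delta)}$ once $\log d \ge 1$; small $d$ is absorbed into the constant. This yields the stated rate with an enlarged universal constant.

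The one real obstacle is the factor $\normop{\msig_{S\times S}}$, which the corollary's display omits. Sub-Gaussianity removes it: for a unit $\vu$ supported on $S$, the MGF bound gives $\vu^\top \msig \vu = \E(\vu^\top\vx)^2 \le \sigma^2$, by comparing second-order Taylor terms of $\E\exp(t\vu^\top\vx) \le \exp(\frac{t^2\sigma^2}{2})$ as $t \to 0$, using that the mean is $\0_d$. So $\normop{\msig_{S \times S}} \le \sigma^2$.

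This introduces an extra $\sigma^2$, giving $\sigma^4$ in place of $\sigma^2$. That is harmless when $\sigma = O(1)$, and in any case it is an artifact of how Fact~\ref{fact:opnorm_err} is normalized. Under the convention $\sigma^2 \normop{\msig}$ in Fact~\ref{fact:opnorm_err}, the natural form of the bound is $\sigma^2$ times a variance scale. I would either keep the factor $\normop{\msig_{S\times S}} \le \normop{\msig}$ in the final statement or note that it is bounded by $\sigma^2$ and absorb it, with the statement as written interpreting $\sigma$ as normalized.
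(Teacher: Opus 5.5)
Your argument is the same as the paper's: apply Fact~\ref{fact:opnorm_err} to each principal submatrix $\hmsig_{S\times S}$ in dimension at most $s$, then union bound over the $d^{O(s)}$ supports with $\delta$ rescaled. The marginal on $S$ is again $\sigma$-sub-Gaussian, as you check. Your count $(d+1)^s \le d^{2s}$ differs from the paper's $\binom{d}{s} \le d^s$, which tacitly uses monotonicity of $\normop{\cdot}$ under principal submatrices, but this does not matter.

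You are right that the paper's one-line proof silently drops the factor $\normop{\msig_{S\times S}}$. Your patch does not quite close that gap, though. Bounding $\normop{\msig_{S\times S}} \le \sigma^2$ leaves you with $\sigma^4$, which implies the displayed $\sigma^2$ bound only when $\sigma \lesssim 1$.

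The clean fix is to notice that Fact~\ref{fact:opnorm_err} as transcribed mixes two conventions. In the source, the parameter multiplying $\normop{\msig}$ is the dimensionless ratio of the $\psi_2$ norm to the $L^2$ norm of $\langle \vu, \vx\rangle$. Here, $\sigma$ is an absolute scale, so $\sigma^2\normop{\msig}$ does not even have the units of $\hmsig - \msig$.

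Under the paper's definition of $\sigma$-sub-Gaussian, the correct statement is $\normop{\hmsig - \msig} \le C\sigma^2\Par{\sqrt{(d+\log(1/\delta))/n} + (d+\log(1/\delta))/n}$, with no $\normop{\msig}$ factor. This follows from a $\frac 1 4$-net plus Bernstein, since $(\vu^\top \vx)^2$ is sub-exponential at scale $\sigma^2$ for unit $\vu$; it is the sub-Gaussian ensemble bound in Chapter 6 of \cite{wainwright2019high}. Feeding that version into your union bound gives exactly the stated $\sigma^2$ for every $\sigma$, with no need to reinterpret the statement.
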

\begin{proof}
Note that there are $\binom{d}{s} \le d^s$ such submatrices. It suffices to apply Fact~\ref{fact:opnorm_err} to each such submatrix with failure probability $\delta \gets \frac{\delta}{d^s}$ and dimension $d \gets s$, and take a union bound.
\end{proof}

We also require the following standard subexponential concentration fact for our analysis. 

\begin{fact}\label{fact:bilinear_err}
Let $\calD$ be a mean-zero $\sigma$-sub-Gaussian distribution with covariance $\msig \in \PSD^{d \times d}$, let $\{\vx_i\}_{i \in [n]} \simiid \calD$, and let $\hmsig \defeq \frac 1 n \sum_{i \in [n]} \vx_i\vx_i^\top$. There exists a universal constant $C>0$ such that for all fixed $\va,\vb \in \R^d$ and all $\delta \in (0,1)$,
\[
\Pr\Brack{
\Abs{\va^{\top}\bb{\hmsig-\msig}\vb}
\;\ge\;
C\sigma^2 \norm{\va}_{2}\norm{\vb}_{2}\Par{\sqrt{\frac{\log(\frac{2}{\delta})}{n}}+\frac{\log(\frac{2}{\delta})}{n}}
}
\;\le\;\delta.
\]
\end{fact}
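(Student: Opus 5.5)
Since $\va^\top(\hmsig-\msig)\vb = \frac1n\sum_{i\in[n]}\Par{\inprod{\va}{\vx_i}\inprod{\vb}{\vx_i} - \va^\top\msig\vb}$ is an average of i.i.d.\ mean-zero terms, I will show each summand is sub-exponential and then apply Bernstein's inequality. By homogeneity in $\va$ and $\vb$ separately, it suffices to treat unit vectors $\va,\vb$; the general case follows by scaling both sides by $\norm{\va}_2\norm{\vb}_2$. The case $\va=\0_d$ or $\vb=\0_d$ is trivial.

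\textbf{Step 1 (sub-Gaussian marginals).} For unit $\va$, the scalar $X\defeq\inprod{\va}{\vx}$ with $\vx\sim\calD$ satisfies $\E[\exp(\lambda X)]\le\exp(\half\sigma^2\lambda^2)$ for all $\lambda\in\R$, by applying the sub-Gaussian definition with $\vu\gets\lambda\va$. Therefore $\norm{X}_{\psi_2}\le c\sigma$ for a universal constant $c$. The same bound holds for $Y\defeq\inprod{\vb}{\vx}$.

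\textbf{Step 2 (sub-exponential products).} The product of two sub-Gaussian variables is sub-exponential, with $\norm{XY}_{\psi_1}\le\norm{X}_{\psi_2}\norm{Y}_{\psi_2}\le c^2\sigma^2$. Centering changes the $\psi_1$ norm by at most a constant factor, so $Z\defeq XY-\E[XY]$ has $\norm{Z}_{\psi_1}\lesssim\sigma^2$. Here $\E[XY]=\va^\top\msig\vb$. If one prefers to avoid the product inequality, one can instead use the polarization identity $XY=\frac14\Par{(X+Y)^2-(X-Y)^2}$. Each of $X\pm Y$ is sub-Gaussian with parameter at most $2\sigma$, so each square is sub-exponential with $\psi_1$ norm $O(\sigma^2)$.

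\textbf{Step 3 (Bernstein).} Apply Bernstein's inequality for sums of i.i.d.\ centered sub-exponential variables, with $K\lesssim\sigma^2$:
\[\Pr\Brack{\Abs{\tfrac1n\textstyle\sum_{i\in[n]} Z_i}\ge t}\le 2\exp\Par{-c'n\min\Par{\tfrac{t^2}{K^2},\tfrac tK}}.\]
Set $t=C\sigma^2\Par{\sqrt{\log(2/\delta)/n}+\log(2/\delta)/n}$. If $\log(2/\delta)\le n$, then the square-root term makes $t^2/K^2$ large enough. Otherwise, the linear term makes $t/K$ large enough. In both cases the right-hand side is at most $\delta$ for $C$ a large enough constant.

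There is no real obstacle here. The only care needed is tracking constants in the $\psi_1$ bound for the product and choosing $C$ to cover both regimes of Bernstein's inequality.
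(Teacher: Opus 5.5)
Your proposal is correct and follows essentially the same route as the paper: it writes $\va^\top(\hmsig-\msig)\vb$ as an average of the centered products $Z_i$, notes that a product of two sub-Gaussian variables is sub-exponential with scale $\sigma^2\norm{\va}_2\norm{\vb}_2$ (Lemma 2.7.7 of Vershynin), and concludes with Bernstein's inequality for i.i.d.\ sub-exponential variables. Your version simply makes explicit several details the paper leaves implicit: the MGF-to-$\psi_2$ conversion, the centering step, and the check that the Bernstein tail is at most $\delta$ in both regimes.
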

\begin{proof}
Let $Z_i \defeq \va^\top\bb{\vx_i\vx_i^\top-\msig}\vb = \bb{\va^\top \vx_i}\bb{\vb^\top \vx_i}-\E\bbb{\bb{\va^\top \vx}\bb{\vb^\top \vx}}$, so $\va^\top(\hmsig-\msig)\vb = \frac1n\sum_{i=1}^n Z_i$ and $\E[Z_i]=0$. Since $\va^\top \vx_i$ and $\vb^\top \vx_i$ are sub-Gaussian with parameters $ \sigma\norm{\va}_{2}$ and $ \sigma\norm{\vb}_{2}$, their product is sub-exponential with scale $ \sigma^2\norm{\va}_{2}\norm{\vb}_{2}$ (Lemma 2.7.7 \cite{Vershynin18}); Bernstein’s inequality for i.i.d.\ sub-exponential variables (Corollary 2.8.3 \cite{Vershynin18}) yields the claim.
\end{proof}

\section{Counterexamples}
\label{sec:counterexamples}




In this section, we consider existing combinatorial algorithms with runtime $O(d^2) \cdot \poly(s, \log(d))$ for solving sparse PCA under Model~\ref{model:spiked_id}, and exhibit explicit counterexamples against their success under Model~\ref{model:general}. All of our counterexamples take $\gamma = \Theta(1)$ and $\sigma = 1$ in Model~\ref{model:general}.


\subsection{Diagonal thresholding}

We first discuss the diagonal thresholding algorithm initially proposed in \cite{amini2008high, johnstone2009consistency}. The algorithm simply selects the top-$s$ coordinates of the diagonal of the sample covariance matrix as an estimate for the support of the principal component $\vv$ (Algorithm~\ref{alg:diagthresh}), which is known to succeed in recovering heavy coordinates of $\vv$ under Model~\ref{model:spiked_id} (see e.g., Proposition 1, \cite{amini2008high}). 

\begin{algorithm2e}
\DontPrintSemicolon
\caption{\label{alg:diagthresh}$\DT(\{\vx_i\}_{i \in [n]}, s)$}
\textbf{Input:} Dataset $\{\vx_i \in \R^d\}_{i \in [n]}$, sparsity hyperparameter $s \in [d]$\;
$\hmsig \gets \frac 1 n \sum_{i \in [n]} \vx_i \vx_i^\top$\;
$S \gets s$ largest indices $i \in [d]$ ordered by $\hmsig_{ii}$\; 
\Return{ $\vu \gets \vv_1(\hmsig_{S \times S})$}
\end{algorithm2e}

We restate a recent counterexample by \cite{KumarS24}  which shows that under Model~\ref{model:general}, diagonal thresholding can fail to detect any of the support elements.

\begin{lemma}[Proposition 3.4, \cite{KumarS24}]\label{lem:diag_thresh_lower_bound} 
Under Model~\ref{model:general}, for any $n \ge C_1 s^2 \log d$ where $C_1 > 0$ is a universal constant, there is a matrix $\msig$ that satisfies $\gamma = \Theta(1)$, but Algorithm~\ref{alg:diagthresh} applied to any $1$-sub-Gaussian $\calD$ with covariance $\msig$ outputs $\vu$ satisfying $\sin^2 \angle(\vu, \vv) = 1$ with probability $\ge \half$.

\end{lemma}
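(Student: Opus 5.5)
We will build an explicit covariance in which the sparse top eigenvector $\vv$ lives on coordinates whose marginal variances are strictly smaller than those of many off-support coordinates. Diagonal thresholding then picks none of $\supp(\vv)$. Since $S$ is disjoint from $\supp(\vv)$, the output $\vu = \vv_1(\hmsig_{S\times S})$ (padded with zeros) satisfies $\inprod{\vu}{\vv} = 0$, hence $\sin^2\angle(\vu,\vv)=1$.

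\textbf{Construction.} Take $\vv = \frac{1}{\sqrt s}\1_{[s]}$ and set $\msig_{[s]\times[s]} = \id_s + \vv\vv^\top$, giving top eigenvalue $2$ and diagonal entries $1 + \frac1s$ on the support. On the remaining coordinates $[d]\setminus[s]$, use a block-diagonal matrix made of many (at least $s$) small blocks. Each block should have large diagonal entries but top eigenvalue at most $0.9\cdot 2$. For example, take $2\times 2$ blocks $\begin{pmatrix} a & -b\\ -b & a\end{pmatrix}$, or more simply independent coordinates with variance $1.5$. Off-support diagonal entries of $1.5$ are already far above $1+\frac1s$, and $\lam_2(\msig) = \max(1.5, 1) \le 0.9\lam_1(\msig) = 1.8$. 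So $\vv$ is the top eigenvector, the gap is $\gamma=\Theta(1)$, and $\msig$ is PSD. We take $\calD = \mathcal N(\0_d,\msig)$, rescaled by a constant if necessary to be $1$-sub-Gaussian; the rescaling does not affect the argument. The statement as written says ``any $1$-sub-Gaussian $\calD$ with covariance $\msig$'', and the concentration argument below uses only sub-Gaussianity, so it covers that case as well.

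\textbf{Concentration.} By Fact~\ref{fact:entrywise_err} with $\delta=\half$, with probability at least $\half$ every diagonal entry satisfies $|\hmsig_{ii}-\msig_{ii}| \le C\sqrt{\log(2d)/n} + C\log(2d)/n$. When $n \ge C_1 s^2\log d$ this deviation is at most $O(1/s)$, and in particular far below the constant separation $1.5 - (1+\frac1s) \ge 0.25$ for $s\ge 2$. On this event every off-support index has a larger empirical diagonal than every support index. Because there are at least $s$ off-support indices, the top-$s$ set $S$ selected by Algorithm~\ref{alg:diagthresh} is disjoint from $[s]=\supp(\vv)$. Then $\vu$ is supported on $S$, so $\inprod{\vu}{\vv}=0$.

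\textbf{Main obstacle.} The only delicate point is consistency with the model: the off-support coordinates must be ``louder'' on the diagonal yet carry no direction of variance exceeding $0.9\lam_1$. This is where sparsity is used. The support diagonal is only $1+\frac1s$, while the spike eigenvalue $2$ is spread across $s$ coordinates. A dense set of coordinates with diagonal $1.5$ and no correlation achieves the required separation with a constant margin. The required sample bound is mild; any $n\gtrsim \log d$ already suffices, so $n\ge C_1 s^2\log d$ is more than enough. We will also handle the trivial requirement $d \ge 2s$ explicitly.
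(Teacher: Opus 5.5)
The paper itself gives no proof of this lemma: it is imported from Proposition~3.4 of \cite{KumarS24}, so there is no in-paper argument to match against. Your construction is a valid self-contained proof once two details are repaired (below).

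It uses the same block-diagonal template as the paper's covariance-thresholding counterexample (Lemma~\ref{lem:cov_lb}): a spiked block on $\supp(\vv)$ next to a ``loud'' but spectrally harmless block off the support. What it buys is a transparent view of the semi-random mechanism. Model~\ref{model:general}, unlike Model~\ref{model:spiked_id}, lets the bulk variance on $\supp(\vv)$ sit strictly below the off-support variances, while the spike adds only $O(\lam_1(\msig)/s)$ to each support diagonal. It also shows the failure persists for every $n$ above a constant multiple of $\log d$, so the hypothesis $n \ge C_1 s^2\log d$ is more than you need.

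The first detail is arithmetic. The bound $1.5-(1+\frac1s)\ge 0.25$ holds only for $s\ge 4$. At $s=2$ the population diagonals tie at $1.5$, so the separation argument gives nothing. At $s=3$ the margin is $\frac16$, which still suffices.

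The second detail is normalization. A $1$-sub-Gaussian law must have covariance $\preceq \id_d$; expand the moment generating function bound to second order, as the paper does for the mean. With $\lam_1(\msig)=2$, no $1$-sub-Gaussian $\calD$ has covariance $\msig$, so the quantifier over $\calD$ in the statement is empty. It is $\msig$ itself, not $\calD$, that must be normalized. After normalizing, the diagonal gap shrinks while the deviation bound of Fact~\ref{fact:entrywise_err} with $\sigma=1$ does not, so the constants need rechecking.

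Both issues are fixed at once by taking $\msig$ block diagonal, with block $\frac12(\id_s+\vv\vv^\top)$ on $[s]$ (exactly the normalization used in Lemma~\ref{lem:cov_lb}) and block $0.9\,\id_{d-s}$ off it. Then:
\begin{itemize}
\item $\lam_1(\msig)=1$ and $\lam_2(\msig)=0.9$, so $\gamma=0.1$;
\item $\mathcal N(\0_d,\msig)$ is $1$-sub-Gaussian;
\item the support diagonals are $\frac12+\frac1{2s}$, and the off-support diagonals exceed them by $0.4-\frac1{2s}\ge 0.15$ for every $s\ge 2$.
\end{itemize}

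The range $s\ge 2$ (with $d\ge 2s$) is the right one. For $s=1$ we have $\vv=\pm\ve_{\is}$, and every $j\neq\is$ has $\msig_{jj}\le\lam_2(\msig)\le 0.9\,\msig_{\is\is}$. So diagonal thresholding on Gaussian data succeeds, and no counterexample exists there.
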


We remark that requiring $n = \Omega(s^2 \log d)$ is unrestrictive, as this sample complexity is needed in all known algorithms for sparse PCA (and is conjectured to be necessary for any polynomial-time algorithms to exist, even under the simpler Model~\ref{model:spiked_id} \cite{BerthetR13}).

\subsection{Covariance thresholding}\label{ssec:cov_counterex}

We next discuss a counterexample for a covariance thresholding algorithm (reproduced as Algorithm~\ref{alg:covthresh}) that was shown to solve sparse PCA under Model~\ref{model:spiked_id} in \cite{deshpande2016sparse}, and more generally analyzed for sparse covariance estimation in Theorem 6.23, \cite{wainwright2019high}. The algorithm picks a parameter $\tau > 0$, thresholds the entries of the sample covariance matrix at $\tau$, and computes the top eigenvector of this thresholded matrix. This algorithm is more generally known to succeed with an $\Omega(s^2 \log d)$ sample complexity under a row-wise and column-wise $s$-sparsity assumption on the covariance. 

\begin{algorithm2e}
\DontPrintSemicolon
\caption{\label{alg:covthresh}$\CT(\{\vx_i\}_{i \in [n]}, s, \tau)$}
\textbf{Input:} Dataset $\{\vx_i \in \R^d\}_{i \in [n]}$, sparsity hyperparameter $s \in [d]$, threshold $\tau > 0$\;
$\hmsig \gets \frac 1 n \sum_{i \in [n]} \vx_i \vx_i^\top$\;
\Return{ $\vu \gets \vv_1(\thresh_\tau(\hmsig))$}\tcp*{See \eqref{eq:thresh_def} for the definition of $\thresh_\tau$.}
\end{algorithm2e}

We require one helper fact for our counterexample.

\begin{lemma}\label{lem:bound_opnorm}
Let $\mm \in \R^{d \times d}$ have $|\mm_{ij}| \le \tau$ for all $(i, j) \in [d] \times [d]$. Then, $\normop{\mm} \le d\tau$. 
\end{lemma}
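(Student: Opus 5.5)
We bound the operator norm through the Frobenius norm, which is easy to control entrywise. Since $\normop{\mm} = \sigma_1(\mm)$ and $\normf{\mm}^2 = \sum_{i} \sigma_i(\mm)^2 \ge \sigma_1(\mm)^2$, we have $\normop{\mm} \le \normf{\mm}$. The entrywise hypothesis then gives
\[\normf{\mm}^2 = \sum_{(i, j) \in [d] \times [d]} \mm_{ij}^2 \le d^2 \tau^2,\]
and taking square roots yields $\normop{\mm} \le d\tau$.

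As a cross-check, and in case one prefers to avoid singular values, the same bound follows from the variational characterization $\normop{\mm} = \max_{\norm{\vx}_2 = \norm{\vy}_2 = 1} \vx^\top \mm \vy$. For such unit vectors,
\[\vx^\top \mm \vy \le \sum_{i, j} |\vx_i| |\mm_{ij}| |\vy_j| \le \tau \norm{\vx}_1 \norm{\vy}_1.\]
Cauchy--Schwarz gives $\norm{\vx}_1 \le \sqrt{d}\norm{\vx}_2 = \sqrt d$, and likewise $\norm{\vy}_1 \le \sqrt d$, so the right-hand side is at most $d\tau$.

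Neither route has a real obstacle; the only step to check is the elementary comparison $\normop{\cdot} \le \normf{\cdot}$, or the $\ell_1$--$\ell_2$ comparison in the second route. The bound is tight for $\mm = \tau \1_d \1_d^\top$, whose operator norm is exactly $d\tau$.
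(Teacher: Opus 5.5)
Your proof is correct, and both of your routes give exactly $d\tau$. The paper argues a little differently. After normalizing $\tau = 1$, it bounds the bilinear form $\vu^\top \mm \vv$ by a weighted Cauchy--Schwarz inequality (a Schur test), pairing $\vu_i\sqrt{|\mm_{ij}|}$ with $\vv_j\sqrt{|\mm_{ij}|}$. This makes the two factors at most $\sqrt{\max_i \sum_j |\mm_{ij}|}$ and $\sqrt{\max_j \sum_i |\mm_{ij}|}$, each of which is $\le \sqrt d$.

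Your main route via $\normop{\mm} \le \normf{\mm}$ is the shortest argument. Your second route (triangle inequality plus $\norm{\vx}_1 \le \sqrt d$) is closer in spirit to the paper's, since it also works with the bilinear form. Neither of your routes, however, uses the row and column structure of $\mm$.

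That structure is what the Schur-test form buys. It gives $\normop{\mm} \le \sqrt{(\max_i \sum_j |\mm_{ij}|)(\max_j \sum_i |\mm_{ij}|)}$. For example, a matrix with at most $k$ entries of magnitude $\le \tau$ in each row and column then has operator norm $\le k\tau$, whereas the Frobenius bound only yields $\sqrt{dk}\,\tau$. For the lemma as stated, and for its use on the $s \times s$ block in the covariance-thresholding counterexample, this difference does not matter.

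One small point in your favor: the paper's display writes $\mm_{ij}$ rather than $|\mm_{ij}|$ inside the square roots. The absolute values are needed when entries can be negative, and your second route handles signs explicitly.
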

\begin{proof}
Take $\tau = 1$ without loss of generality by scaling. For any unit vectors $\vu, \vv$ we have
\[\vu^\top \mm \vv = \sum_{(i, j) \in [d] \times [d]} \vu_i \vv_j \mm_{ij} \le \sqrt{\sum_{(i, j) \in [d] \times [d]} \vu_i^2  \mm_{ij}} \sqrt{\sum_{(i, j) \in [d] \times [d]} \vv_j^2 \mm_{ij}} \le d.\]
\end{proof}

\begin{lemma}\label{lem:cov_lb}
Under Model~\ref{model:general}, for any $n \ge C_1 s^2 \log d$ and $\tau \ge C_2\sqrt{\frac{\log d}{n}}$ where $C_1, C_2 > 0$ are universal constants, there is a matrix $\msig$ that satisfies $\gamma = \Theta(1)$, but Algorithm~\ref{alg:covthresh} applied to any $1$-sub-Gaussian $\calD$ with covariance $\msig$ outputs $\vu$ satisfying $\sin^2 \angle(\vu, \vv) = 1$ with probability $\ge \half$. 
\end{lemma}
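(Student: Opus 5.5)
Construct $\msig$ so that the sparse spike is invisible to entrywise thresholding, while a dense block of small entries survives it. Let $S=[s]$ and $\vv=\frac{1}{\sqrt s}\1_S$ (padded with zeros). Put a spike on $S$ whose off-diagonal entries are of order $\frac{1}{s}$; since $n\ge C_1 s^2\log d$ gives $\frac1s \lesssim \sqrt{\frac{\log d}{n}}$, with $C_1$ chosen relative to $C_2$ these entries lie strictly below $\tau$ (minus fluctuation). On a disjoint block $D$ of size $m$, where $m$ is of order $s$ or larger, place a dense component $\beta\frac{1}{m}\1_D\1_D^\top$ with entries $\frac{\beta}{m}$ chosen well above $\tau$ plus fluctuation. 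Then compare the two blocks.

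1. Fix a concrete choice: $\msig = \id_d + \alpha\vv\vv^\top + \beta\vw\vw^\top$ with $\vw=\frac{1}{\sqrt m}\1_D$, requiring $1+\alpha \ge (1+\beta)/0.9$. This makes $\vv$ the top eigenvector with constant gap. The entries $\frac{\alpha}{s}$ are below the threshold, while the entries $\frac{\beta}{m}$ exceed $\tau$ whenever $m < \beta/\tau$. Since $\tau$ is at most a small constant (otherwise enlarge constants, or note the counterexample can be rescaled), we can take $m\ge 2$ with $\frac{\beta}{m}\ge 2\tau$.

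2. Control the sampling error with Fact~\ref{fact:entrywise_err}: with probability $\ge\half$, every entry of $\hmsig-\msig$ is at most $C\sqrt{\log(2d)/n}\le \tau/2$, using $C_2$ large and $n\ge \log d$. On this event:
\begin{itemize}
\item All off-diagonal entries of $\hmsig$ on $S\times S$ and outside $D\times D$ are below $\tau$ and are zeroed. This needs $\frac{\alpha}{s}\le\tau/2$, which follows from $\tau\ge C_2\sqrt{\log d/n}$ only if $n\lesssim s^2\log d$. That is the wrong direction, so see below.
\item Diagonal entries are about $1$, plus $\frac{\alpha}{s}$ on $S$ and $\frac{\beta}{m}$ on $D$, and they survive.
\item The $D$ block survives, so $\thresh_\tau(\hmsig)$ is block diagonal. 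Its $D$ block has top eigenvalue $\approx 1+\beta$, and the rest is diagonal with entries $\le 1+\frac{\alpha}{s}+\tau/2$.
\end{itemize}
Hence the top eigenvector is supported on $D$, which is disjoint from $S$, giving $\sin^2\angle(\vu,\vv)=1$.

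The main obstacle is the first bullet. The statement only lower-bounds $\tau$ and $n$, so $\tau$ may be much smaller than $\frac1s$, and then the spike entries survive. To handle this, let the counterexample depend on $\tau$. Set $\vv$ to have entries of magnitude about $\tau/4$ on a support of size $s$, and use the remaining mass on one coordinate. Then most off-diagonal entries of $\alpha\vv\vv^\top$ fall below threshold, and the surviving part of the spike carries eigenvalue well below $1+\beta$. Lemma~\ref{lem:bound_opnorm} bounds the operator norm of the surviving small-entry perturbation: any leftover block with entries $\le\tau$ on $s$ coordinates has norm $\le s\tau$. A Weyl-type comparison then shows the $D$ block still dominates. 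Choosing $\beta$ and $m$ so that $1+\beta$ exceeds every competing eigenvalue by a constant, while keeping $\lam_2(\msig)\le 0.9\lam_1(\msig)$, is the delicate bookkeeping. I would first try $\alpha$ constant, $\beta=\alpha/2$, and $m=\lfloor\beta/(2\tau)\rfloor$.
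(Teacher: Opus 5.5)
Your first construction works when $\tau \gtrsim \alpha/s$. You correctly flag that it breaks when $\tau \ll 1/s$, and the statement does include that regime (take $n \gg s^2\log d$ and $\tau = C_2\sqrt{\log d/n}$). But the proposed fix fails.

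If $\vv$ has $s-1$ entries of size about $\tau/4$ and the rest of its mass on one coordinate, then $\vv_1^2 = 1 - O(s\tau^2) \approx 1$. The diagonal entry $\msig_{11} = 1 + \alpha\vv_1^2 \approx 1+\alpha$ always survives thresholding, since every diagonal entry is $\approx 1 \gg \tau$. Your competing block has top eigenvalue only about $1+\beta = 1+\alpha/2$. So the output of Algorithm~\ref{alg:covthresh} is essentially $\ve_1$, with $\sin^2\angle(\ve_1,\vv) = O(s\tau^2) \approx 0$, and thresholding succeeds on your instance. The claim that ``the surviving part of the spike carries eigenvalue well below $1+\beta$'' is false.

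This is a structural problem, not bookkeeping. The fact you cite, Lemma~\ref{lem:bound_opnorm}, shows that thresholding perturbs the $s\times s$ block on $\supp(\vv)$ by at most $s\tau$ in operator norm. So when $s\tau \ll 1$, the thresholded $S$ block keeps top eigenvalue $\approx \lambda_1(\msig)$ for \emph{any} choice of $\vv$: the spike cannot be hidden. The only way out is for thresholding to \emph{increase} the top eigenvalue of the competing part past roughly $\lambda_1(\msig)$, even though its population value is at most $0.9\lambda_1(\msig)$. Your block $\id + \beta\vw\vw^\top$ is entrywise nonnegative with all entries surviving, so thresholding leaves it unchanged.

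The missing idea is a large block of $u \gtrsim 1/\tau$ coordinates, so that Lemma~\ref{lem:bound_opnorm} permits a change of order $u\tau \gtrsim 1$. Its off-diagonal entries are of order $\tau$ with mixed signs that cancel at the population level. The positive entries sit above the threshold and the negative ones below it.

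The paper takes $\msig_{U\times U} = \frac12(\id_u + 3\tau\mh)$ with $\mh = \ma - \frac14(\1_u\1_u^\top - \id_u)$, for a spectrally expanding $\frac{u-1}{4}$-regular graph with adjacency matrix $\ma$.
\begin{itemize}
\item The bound $\normop{\mh} \le 2\sqrt u$ keeps this block's eigenvalues in $[\frac14,\frac34]$ for $u \le \frac{1}{144\tau^2}$.
\item The entries $\frac{9\tau}{8}$ on edges survive and the entries $-\frac{3\tau}{8}$ on non-edges are zeroed. So the thresholded block dominates $\frac14\id_u + \tau\ma$ entrywise.
\item Perron--Frobenius then gives it norm $\ge \frac{\tau u}{4} \ge 2 + s\tau$ once $u \ge 4s + \frac{8}{\tau}$. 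This beats the $\frac32 + s\tau$ bound on the thresholded $S$ block.
\end{itemize}
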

\begin{proof} Throughout the proof, fix some $u \in \N$ satisfying
\[4s + \frac 8 \tau \le u \le \frac 1 {144\tau^2}.\]
Let $r := \frac{u-1} 4 $ and $p := \frac r {u - 1} = \frac 1 4$. There exists an $r$-regular graph $G$ (see \cite{friedman2003proof}) on $u$ vertices with adjacency matrix $\ma \in \left\{0,1\right\}^{u \times u}$, and
\bas{
    \max_{i \geq 2}|\lambda_{i}\bb{\ma}| \leq 3\sqrt{r}, \;\; \lambda_{1}\bb{\ma} = r, \;\; \vv_{1}\bb{\ma} := \frac{1}{\sqrt{u}}\1_{u}.
}
At a high level, our strategy is to construct $\msig$ involving $\ma$ such that without thresholding, only the second eigenvalue of $\ma$ is exposed, but such that thresholding at $\tau$ exposes the top eigenvalue and changes the top eigenvector of $\msig$, even after accounting for sampling error.
Define the matrix 
\bas{
    \mh := \ma - p\bb{\1_{u}\1_{u}^{\top} - \id_{u}}.
}
Then $\mh\1_{u} = \0_{u}$. Moreover, for all vectors $\vu$ such that $\vu^{\top}\1_{u} = 0$, $\mh\vu = \bb{\ma + p\id_{u}}\vu$. Thus,
\ba{
    \normop{\mh} \leq \max\left\{p, \max_{i \geq 2}|\lambda_{i}\bb{\ma} + p|\right\} \leq 3\sqrt{r} + p \leq 2\sqrt{u}. \label{eq:opnormH}
}
Partition $[d] := S \cup U$ where $S := [s]$, $U \in [d] \setminus S$. Let $\vv := \frac{1}{\sqrt{s}}\1_{s}$, and define $\msig_{S \times S} := \half(\id_{k} +  \vv\vv^{\top})$, $\msig_{U \times U} := \half(\id_{u} + 3\tau \mh)$, and let all other entries of $\msig$ be $0$. Note that for $i\neq j \in U$, 
\ba{
    \msig_{ij} = \begin{cases}
        \frac 3 2 \tau\bb{1-p} = \frac{9\tau}{8} & \ma_{ij} = 1 \\
        -\frac 3 2 \tau p = -\frac{3\tau}{8} & \ma_{ij} = 0
    \end{cases}. \label{eq:Sigma_U_entries}
}
Also, by using \eqref{eq:opnormH}, $\|\msig_{U\times U} - \frac{1}{2}\id_{u}\|_{\text{op}} \le 3\tau \sqrt u \le \frac 1 4$, so by Weyl's perturbation theorem, for all $i \in [u]$,
\bas{
    \frac{1}{4} \leq \lambda_{i}\bb{\msig_{U \times U}} \leq \frac{3}{4}.
}
Therefore the top eigenvalue of $\msig$ is achieved by the $S \times S$ block, and $\frac{\lam_1(\msig)}{\lam_2(\msig)} \ge \frac 4 3$.

Next, following notation from Algorithm~\ref{alg:covthresh}, with probability $\ge \half$,
\begin{equation}\label{eq:good_event}
    \max_{(i, j) \in [d] \times [d]} \Abs{\hmsig_{ij} - \msig_{ij}} \leq \frac{\tau}{8},\quad \normop{\hmsig_{S \times S} - \msig_{S \times S}} \le \half,
\end{equation}
so long as $C_1$ and $C_2$ in the statement are large enough functions of the constant $C$ in  Corollary~\ref{cor:opnorm_err_sparse} and Fact~\ref{fact:entrywise_err}.
Condition on \eqref{eq:good_event} henceforth. Then, if $\msig_{ij} = 0$, $\thresh_{\tau}(\hmsig_{ij}) = 0$, and using \eqref{eq:Sigma_U_entries}, for $i \neq j \in U$, 
\[
    \hmsig_{ij} \begin{cases}
        \geq \tau & \ma_{ij} = 1 \\
        \in [-\frac{\tau} 2, -\frac{\tau}{4}] & \ma_{ij} = 0
    \end{cases} .\label{eq:Sigma_hat_U_entries}
\]
Now, $\thresh_\tau(\hmsig)_{U \times U}$ is entrywise nonnegative and dominates $\frac 1 4 \id_{u} + \tau \ma$ entrywise. By the Perron-Frobenius theorem, if $\mx$ and $\my$ are symmetric and entrywise nonnegative matrices such that $\mx_{ij} \ge \my_{ij}$ for all indices $(i, j)$, then $\normop{\mx} \ge \normop{\my}$. Consequently, 
\bas{
    \normop{\thresh_\tau(\hmsig)_{U \times U}} \geq \frac 1 4 + \tau r \ge \frac{\tau u}{4} \ge 2 + s\tau.
}
However, by using \eqref{eq:good_event} and Lemma~\ref{lem:bound_opnorm},
\bas{
    \normop{\thresh_\tau(\hmsig)_{S \times S}} \leq \lambda_{1}\bb{\msig_{S \times S}} + \normop{\hmsig_{S \times S} - \msig_{S \times S}} + \normop{\thresh_\tau(\hmsig)_{S \times S} - \hmsig_{S \times S}} \le \frac 3 2 + s\tau.
}
Therefore, if \eqref{eq:good_event} holds, then for sufficiently large $s$, the leading eigenvector, $\vv_{1}(\thresh_\tau(\hmsig))$, output by Algorithm~\ref{alg:covthresh} lies completely in the $U \times U$ block, orthogonal to $\vv$.
\end{proof}

To remark briefly on the parameter restrictions in Lemma~\ref{lem:cov_lb}, the lower bound on $n$ is again essentially without loss of generality, and the lower bound on $\tau$ is the standard setting of the entrywise threshold parameter suggested in \cite{wainwright2019high} (up to a constant factor). Our counterexample is quite robust within these ranges, e.g., if the sample size $n$ is taken sufficiently large, Lemma~\ref{lem:cov_lb} shows that no setting of the threshold $\tau$ can ever recover $\vv$ to nontrivial error.

\subsection{Greedy correlation}\label{ssec:friends}

Finally, we consider a natural analog of an algorithm proposed in recent work \cite{BlasiokBKS24}, which studied a semi-random variant of the planted clique problem, introduced earlier in \cite{charikar2017learning}.
The \cite{BlasiokBKS24} algorithm proceeded via a combinatorial ``greedy correlation'' heuristic that aims to find many candidate sets of correlated vertices, one of which overlaps well with the true planted clique.
Due to the intimate connection between the planted clique and sparse PCA problems \cite{BerthetR13}, it is natural to conjecture that a similar combinatorial heuristic robustly solves Model~\ref{model:general}.


In Algorithm~\ref{alg:topcorr}, we reproduce this heuristic for sparse PCA, where $\mm_{j:}$ denotes the $j^{\text{th}}$ row of a matrix $\mm$.\footnote{The strongest result in \cite{BlasiokBKS24} actually applies the greedy correlation heuristic to a tensored adjacency matrix. However, even the untensored variant gives semi-random planted clique recovery for sublinear clique sizes $k \approx n^{\frac 3 4}$ (cf.\ Section 2.1, \cite{BlasiokBKS24}), so it is natural to conjecture that Algorithm~\ref{alg:topcorr} obtains nontrivial recovery at a (possibly lossy) $\poly(s, \log(d))$ sample complexity, for the semi-random variant of sparse PCA in Model~\ref{model:general}.} Particularly, given a seed index $\is \in [d]$, the algorithm greedily computes the rows most correlated with $\hmsig_{\is:}$ as a candidate support. We can then brute force over all $\is$, if we can ensure that when $\is \in \supp(\vv)$, the most correlated indices to $\is$ have nontrivial overlap with $\supp(\vv)$.

\begin{algorithm2e}
\DontPrintSemicolon
\caption{\label{alg:topcorr}$\TC(\{\vx_i\}_{i \in [n]}, s, \is)$}
\textbf{Input:} Dataset $\{\vx_i \in \R^d\}_{i \in [n]}$, sparsity hyperparameter $s \in [d]$, seed coordinate $\is \in [d]$\;
$\hmsig \gets \frac 1 n \sum_{i \in [n]} \vx_i \vx_i^\top$\;
$G \gets s$ largest indices $i \in [d]$ ordered by $|\langle \hmsig_{\is:}, \hmsig_{i:}\rangle|$\;
\Return{ $\vu \gets \vv_1(\hmsig_{G \times G})$}
\end{algorithm2e}
The following lemma demonstrates that Algorithm~\ref{alg:topcorr} can fail to recover more than a single element in the true support under Model~\ref{model:general}, even when seeded with an index from the true support.

\begin{lemma}\label{lem:friends_lower_bound}
Under Model~\ref{model:general}, for any $n \ge C_1 s^2 \log d$ where $C_1 > 0$ is a universal constant, there is a matrix $\msig$ that satisfies $\gamma = \Theta(1)$, but Algorithm~\ref{alg:diagthresh} applied to any $1$-sub-Gaussian $\calD$ with covariance $\msig$ outputs $\vu$ satisfying $\sin^2 \angle(\vu, \vv) \ge 1 - \frac 1 s$ with probability $\ge \half$, even when $\is \in \supp(\vv)$.

\end{lemma}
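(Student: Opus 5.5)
The plan is to build an explicit $\msig$ in which the seed row $\hmsig_{\is:}$ is far more correlated with a set of \emph{decoy} rows outside $\supp(\vv)$ than with rows inside $\supp(\vv)$, while $\vv$ remains the top eigenvector with a constant gap. (I read the statement as referring to Algorithm~\ref{alg:topcorr}, since that is the algorithm this section studies.) If the top-$s$ set $G$ returned by $\TC$ satisfies $G \cap \supp(\vv) = \{\is\}$, then any vector $\vu$ supported on $G$ has $\inprod{\vu}{\vv}^2 \le \vv_{\is}^2 \norm{\vu}_2^2$. With $\vv = \frac{1}{\sqrt s}\1_S$ this gives $\sin^2\angle(\vu,\vv) \ge 1 - \frac 1 s$, which is the claim.

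\textbf{Construction.} Partition $[d] = S \cup D \cup R$ with $S = [s]$, $\is = 1$ and $|D| = s - 1$. Set $\msig_{S\times S} = \half(\id_s + \vv\vv^\top)$ with $\vv = \frac{1}{\sqrt s}\1_S$, and $\msig_{D \times D} = \half \id$, $\msig_{R\times R} = \half\id$. The key couplings are as follows: for every $j \in D$, set $\msig_{j\is} = a$ and $\msig_{jk} = -\frac{a}{s-1}$ for $k \in S\setminus\{\is\}$. This gives $\msig_{D\times S}\vv = \0$, so $\vv$ is still an eigenvector with eigenvalue $\frac{1}{2}(1+1)=1$.

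The off-diagonal block has operator norm at most its Frobenius norm, which is $\approx a\sqrt{s}$. Taking $a = \frac{c}{\sqrt s}$ with a small constant $c$, Weyl's theorem keeps every other eigenvalue of $\msig$ at most $\half + O(c) \le 0.9$, so $\gamma = \Theta(1)$. It also keeps $\msig \succeq \mzero$, and $\vv$ stays the top eigenvector.

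\textbf{Population correlations.} For $j \in D$, $\inprod{\msig_{\is:}}{\msig_{j:}} = \msig_{\is\is}\, a + \msig_{jj}\, a + O(a/s) \approx \frac{c}{\sqrt s}$. For $k \in S\setminus\{\is\}$, the inner product is $\frac{1}{2s}\cdot O(1) - (s-1)\frac{a^2}{s-1} = O\left(\frac{1+c^2}{s}\right)$. For $k \in R$ it is $0$. So there is a multiplicative gap of order $\sqrt s$ in favour of the decoys, and at the population level $G = \{\is\}\cup D$.

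\textbf{Sampling error; main obstacle.} It remains to show that, with probability $\ge \half$ once $n \ge C_1 s^2\log d$, the empirical inner products $(\hmsig^2)_{\is j}$ stay within $o(1/\sqrt s)$ of the population ones, uniformly over $j$. The structured part is easy. The rows $\msig_{\is:}$ and $\msig_{j:}$ have $\ell_1$ norm $O(1)$, and Fact~\ref{fact:entrywise_err} gives entrywise error $\eps \lesssim \sqrt{\log d/n} \le \frac{1}{s\sqrt{C_1}}$. Hence the signal-times-noise cross terms are $O(\eps)$.

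The hard part is the noise-times-noise term $\sum_{k}(\hmsig-\msig)_{\is k}(\hmsig-\msig)_{jk}$ over all $d$ coordinates, which a naive entrywise bound puts at $d\eps^2$, far too large. My first attempt would write this term as $\ve_{\is}^\top(\hmsig-\msig)^2\ve_j$ and use that it is nearly the same for every $j$: its diagonal-type part $\frac{1}{n^2}\sum_i \norm{\vx_i}_2^2 (\vx_i)_{\is}(\vx_i)_j$ is a common scalar $\approx \frac{d}{n}$ times $\msig_{\is j}$-like quantities. This common shift would then cancel in the ranking, leaving fluctuations controlled via Fact~\ref{fact:bilinear_err} (and a union bound over $j$) at scale $\sqrt{d\log d}/n$.

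If that fluctuation is not small enough, I would fall back on choosing the distribution to be Gaussian, so that the needed decoupling can be computed explicitly. Alternatively I would enlarge $a$ toward a constant, placing the decoys in a separate lower-eigenvalue block so the gap survives, which makes the decoy margin $\Theta(1)$ and robust to this noise. This step is where I expect the real work to lie.
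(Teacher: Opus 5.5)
You correctly locate the difficulty, but you leave it unresolved, and your padding block $R$ of arbitrary size makes this step genuinely problematic. Write $\me \defeq \hmsig - \msig$. The noise--noise term $(\me^2)_{\is j} = \sum_{k \in [d]} \me_{\is k}\me_{kj}$ has $d$ summands, and none of your remedies controls it.
\begin{itemize}
\item Its diagonal part $\frac{1}{n^2}\sum_{i} \norm{\vx_i}_2^2 (\vx_i)_{\is}(\vx_i)_j$ sits (for Gaussian data) near $\frac{\Tr(\msig)}{n}\msig_{\is j}$. That is a multiple of $\msig_{\is j}$, not a common additive shift that cancels in the ranking. For a general sub-Gaussian $\calD$, its mean is a fourth-moment quantity that $\msig$ does not determine.
\item The remaining fluctuation, at your own scale $\sqrt{d\log d}/n$, stays below the decoy margin $c/\sqrt s$ only when $d \lesssim C_1^2 s^3 \log d$.
\item Restricting to Gaussian data does not prove the lemma, which is quantified over every $1$-sub-Gaussian $\calD$ with covariance $\msig$.
\item A $\Theta(1)$ decoy margin is impossible. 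Since $\normop{\msig} \le 1$, we have $\sum_j (\msig^2)_{\is j}^2 = \norm{\msig^2 \ve_{\is}}_2^2 \le 1$, so some decoy has $|(\msig^2)_{\is j}| \le (s-1)^{-1/2}$.
\end{itemize}

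The missing idea is simpler. The lemma only asks for one $\msig$, and the paper's proof just fixes $d = 2s-1$. Do the same: drop $R$ so that $[d] = S \cup D$. Fact~\ref{fact:entrywise_err} gives $\eps \lesssim 1/(s\sqrt{C_1})$ with probability $\ge \half$, so $|(\me^2)_{\is j}| \le d\eps^2 = O(1/(C_1 s))$.

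With this change your construction is a valid alternative to the paper's. The paper instead builds $\msig = \vv\vv^\top + 0.9\sum_r \vu_r\vu_r^\top$ from the rotated basis of Lemma~\ref{lem:good_ortho_basis}, with decoy correlation $\approx 0.4/\sqrt s$.

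One correction to your ``easy'' part. The row $\msig_{\is:}$ has $s-1$ entries equal to $a$, so $\norm{\msig_{\is:}}_1 \approx c\sqrt s$, not $O(1)$. The $\ell_1$ bound on the cross term $(\msig\me)_{\is j}$ is therefore $\eps c \sqrt s \lesssim c/\sqrt{C_1 s}$. This is the same order as the decoy signal and is beaten only by taking $C_1$ large, just as the paper compares $\frac{\sqrt d}{10 s}$ with $\frac{0.4}{\sqrt s}$. Alternatively, Fact~\ref{fact:bilinear_err} with $\va = \msig\ve_{\is}$ and a union bound over $j$ gives an $O(\eps)$ bound.

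With these bounds, the population separation $c/\sqrt s$ versus $O((1+c^2)/s)$ persists for large $s$ and $C_1$. This forces $D \subseteq G$, hence $|G \cap S| = 1$.
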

\begin{proof}
Throughout this proof, let $S := [s]$, $d = 2s-1$, $\vv \defeq \frac 1 {\sqrt s} \1_s$, and fix $\is = 1$. By Lemma~\ref{lem:good_ortho_basis}, there is an orthonormal set $\{\vg_r\}_{r \in [s - 1]} \cup \{\vv\}$ such that $\inner{\vg_r}{\ve_1} = \frac 1 {\sqrt s}$ for all $r \in [s - 1]$. Let
\bas{
    \vu_{r} := \frac{1}{\sqrt{2}}\vg_{r} + \frac{1}{\sqrt{2}}\ve_{s + r - 1}, \text{ for all }r \in [s - 1].
}
Then $\{\vu_{r}\}_{r\in[s-1]} \cup \{\vv\}$ is an orthonormal set of vectors. We define
\bas{
    \msig := \vv\vv^{\top} + 0.9\sum_{r\in[s-1]}\vu_{r}\vu_{r}^{\top}.
}
Then $\msig$ has one eigenvector $\vv$ with eigenvalue $1$, and $s - 1$ eigenvectors (with equal span to $\{\vu_r\}_{r \in [s - 1]}$) with eigenvalue $0.9$, so it meets the conditions of Model~\ref{model:general}.

Next, for any $\mm \in \Sym^{d \times d}$ and $(i, j) \in [d] \times [d]$, $\inprod{\mm_{i:}}{\mm_{j:}} = (\mm^2)_{ij}$. Thus Algorithm~\ref{alg:topcorr} computes $S$ which is the largest $s$ entries of $\hmsig^2_{1:}$. Our main goal is to establish that with probability $\ge \half$,
\begin{equation}\label{eq:all_wrong_corr}\min_{s + 1 \le j \le 2s - 1} \Abs{(\hmsig^2)_{1 j}} > \max_{2 \le j \le s } \Abs{(\hmsig^2)_{1 j}}.\end{equation}
If \eqref{eq:all_wrong_corr} occurs, then all coordinates $s + 1 \le j \le 2s - 1$ will necessarily be included in $G$ computed by Algorithm~\ref{alg:topcorr}, and hence it can only pick up at most one true entry of $\vv$, proving the claim.

We now prove that \eqref{eq:all_wrong_corr} occurs under a high-probability event. Specifically, for the rest of the proof, let $\me \defeq \hmsig - \msig$ be the sampling error, and condition on the event
\begin{equation}\label{eq:err_bound}\max_{(i, j) \in [d] \times [d]} |\me_{ij}| \le \frac 1 {20s},\end{equation}
which occurs with probability $\ge \half$ by Fact~\ref{fact:entrywise_err}. Moreover,
\bas{
    \msig^{2} = \vv\vv^\top + 0.81\sum_{r \in [s - 1]} \vu_{r}\vu_{r}^{\top},
}
so that for $2 \le j \le s$,
\begin{equation}\label{eq:true_supp_corr}
\begin{aligned}
\Abs{(\msig^2)_{1 j}} = \Abs{\ve_1^\top\Par{\vv\vv^\top + 0.81\sum_{r \in [s - 1]} \vu_{r}\vu_{r}^{\top}}\ve_j  }  = \Abs{\frac 1 s + \frac{0.81}{2} \cdot \ve_1^\top\Par{\id_s - \vv\vv^\top}\ve_j} \le \frac 1 s.
\end{aligned}
\end{equation}
In contrast, for $s + 1\le j \le 2s-1$, the only $\vu_r$ not orthogonal to $\ve_j$ satisfies $s + r - 1 = j$, so 
\begin{equation}\label{eq:fake_supp_corr}
\begin{aligned}
\Abs{(\msig^2)_{1 j}} = \Abs{\ve_1^\top\Par{\vv\vv^\top + 0.81\sum_{r \in [s - 1]} \vu_{r}\vu_{r}^{\top}}\ve_j  } = 0.81 \Abs{\ve_1^\top \vu_{j + 1 - s} \vu_{j + 1 - s}^\top \ve_j} \ge \frac{0.4}{\sqrt s}.
\end{aligned}
\end{equation}

Thus \eqref{eq:all_wrong_corr} holds at the population level. We now consider the effect of sampling. For any $j \in [d]$,
\begin{align*}
\Abs{(\hmsig^2)_{1j} - (\msig^2)_{1j}} &= \Abs{\sum_{\ell \in [d]} \msig_{1\ell} \me_{\ell j} + \me_{1\ell} \msig_{\ell j} + \me_{1\ell} \me_{\ell j}} \\
&\le \frac{1}{20s} \Par{\norm{\msig_{1:}}_1 + \norm{\msig_{j:}}_1} + d\Par{\frac{1}{20s}}^2 \le \frac 1 {20s} \Par{\norm{\msig_{1:}}_1 + \norm{\msig_{j:}}_1} + \frac{1}{200s},
\end{align*}
where we used the assumption \eqref{eq:err_bound}. Now for any coordinate $i \in [d]$, we must have $\norm{\msig_{i:}}_2 \le 1$ (as $\msig_{i:} = \msig \ve_i$ and $\normop{\msig} \le 1$), so $\norm{\msig_{i:}}_1 \le \sqrt d$. In conclusion,
\[\Abs{(\hmsig^2)_{1j} - (\msig^2)_{1j}} \le \frac{\sqrt d}{10s} + \frac 1 {200s} \le \frac{0.15}{\sqrt s},\]
so the separation between \eqref{eq:true_supp_corr} and \eqref{eq:fake_supp_corr} persists under sampling error and \eqref{eq:all_wrong_corr} holds true.

\end{proof}
\section{Main Results}
\label{sec:main_results}

We provide our main sparse PCA results under Models~\ref{model:general} and~\ref{model:kpca} in this section. We state our main result (Theorem~\ref{thm:cpca_guarantee}) in the more general Model~\ref{model:kpca} for the sparse subspace estimation problem in Section~\ref{ssec:kpca} and note that it applies to Model~\ref{model:general} directly (by taking $k = 1$). In Section~\ref{ssec:deflation_barrier} we demonstrate a counterexample to the self-reducibility of the sparse $k$-PCA problem (in the form of Model~\ref{model:kpca}) upon recursing with an approximate sparse $1$-PCA oracle.



\subsection{Sparse subspace estimation}\label{ssec:kpca} 

For convenience, in this section we follow the notation of Model~\ref{model:kpca}. We denote the collective support of the top-$k$ eigenspace as $S := \bigcup_{j \in [k]} \supp(\vv_j)$, and we let $\mv \in \R^{d \times k}$ be the horizontal concatenation of the $\{\vv_i\}_{i \in [k]}$, and we denote the horizontal concatenation of a basis of their orthogonal complement subspace $\{\vv_i\}_{i \in [k+1,d]}$ as $\vperp \in \R^{d \times d - k}$. Before we proceed, we require the following useful result.

\begin{lemma}\label{lem:gap_existence} Under Model~\ref{model:kpca}, for every $0 < \beta \leq \gamma$ there exists $p \in [k]$ such that the following hold.
\begin{enumerate}
    \item $\frac{\lambda_{p}(\msig)}{\lambda_{1}(\msig)} \geq 1-\beta$.
    \item $\frac{\lambda_{p+1}(\msig)}{\lambda_{p}(\msig)} \leq 1-\frac \beta k$.
\end{enumerate}
\end{lemma}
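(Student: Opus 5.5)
Write $\lam_i \defeq \lam_i(\msig)$. The natural approach is a pigeonhole argument on the consecutive ratios $\rho_i \defeq \frac{\lam_{i+1}}{\lam_i}$ for $i \in [k]$. I will pick $p$ as the \emph{first} index at which the spectrum drops by a factor of at least $1 - \frac{\beta}{k}$. Concretely, set
\[p \defeq \min\Brace{i \in [k] \,\middle|\, \rho_i \le 1 - \tfrac{\beta}{k}}.\]
Such an $i$ exists because $\rho_k \le 1 - \gamma \le 1 - \beta \le 1 - \frac{\beta}{k}$ by Model~\ref{model:kpca} and $\beta \le \gamma$. Item 2 then holds by the definition of $p$.

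For item 1, minimality of $p$ gives $\rho_i > 1 - \frac{\beta}{k}$ for every $i < p$. Telescoping,
\[\frac{\lam_p}{\lam_1} = \prod_{i=1}^{p-1} \rho_i \ge \Par{1 - \frac{\beta}{k}}^{p-1} \ge \Par{1 - \frac{\beta}{k}}^{k-1}.\]
Bernoulli's inequality $(1-x)^m \ge 1 - mx$, valid for $x \in [0,1]$, bounds the right-hand side below by $1 - \frac{(k-1)\beta}{k} \ge 1 - \beta$. This proves item 1. When $p = 1$ the product is empty and item 1 is trivial.

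Two degenerate cases need a brief check. If $\lam_i = 0$ for some $i \le k$, the ratios are undefined, but the gap condition $\lam_{k+1} \le (1-\gamma)\lam_k$ already forces attention to $\lam_k > 0$ in any meaningful instance; I would assume $\lam_k > 0$, or else interpret $\frac{0}{0}$ so that the first zero index can serve as $p$. Also $\beta \le \gamma < \half$, so $1 - \frac{\beta}{k} \in (0,1)$ and Bernoulli's inequality applies. I do not expect a real obstacle. The only point needing care is choosing $p$ minimally so that all earlier ratios are large, which is exactly what makes the telescoping bound in item 1 work.
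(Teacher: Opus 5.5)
Your proposal is correct and follows the paper's proof essentially verbatim: choose $p$ as the first index in $[k]$ whose consecutive eigenvalue ratio is at most $1-\frac{\beta}{k}$ (it exists because $i=k$ qualifies), then telescope the earlier ratios and apply Bernoulli's inequality. Your exponent $p-1 \le k-1$ is marginally tighter than the paper's $(1-\frac{\beta}{k})^p$, but both give the bound $1-\beta$.
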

\begin{proof} Let $p := \min\{i \in [k] \mid \frac{\lambda_{i+1}(\msig)}{\lambda_{i}(\msig)} \leq 1-\frac \beta k\}$. Such an $i \in [k]$ always exists since $i=k$ satisfies this bound. We claim that $\frac{\lam_p(\msig)}{\lam_1(\msig)} \ge 1 - \beta$. Indeed since $p$ is the first index witnessing an eigengap, 
\bas{
    \frac{\lambda_{p}(\msig)}{\lambda_{1}(\msig)} = \prod_{i \in [p-1]}\frac{\lambda_{i+1}(\msig)}{\lambda_{i}(\msig)} \geq \Par{1 - \frac \beta k}^p \geq 1 - \beta.
}
\end{proof}

Using Lemma~\ref{lem:gap_existence}, we denote by $p \in [k]$, the index satisfying
\ba{
    \frac{\lambda_{p}(\msig)}{\lambda_{1}(\msig)} \geq 1-\beta, \;\; \frac{\lambda_{p+1}(\msig)}{\lambda_{p}(\msig)} \leq 1 - \rho, \;\; \rho \defeq \frac{\beta}{k} \label{eq:gap_existence_definition_p},
}
for a choice of $\beta \in (0,\gamma]$ to be specified later. The reason for performing these steps is because of our need for converting between PCA approximation notions. Particularly, our potential function is based on the subspace correlation, whereas our selection rule is based on the quadratic form. The gap and well-conditioning assumptions in Lemma~\ref{lem:gap_existence} will help facilitate these conversions.

Let $\mv_{p} \in \R^{d \times p}$ be the top-$p$ eigenspace of $\msig$ and $\mv_{p,\perp}$ denote the residual orthonormal eigenspace, so
\[\msig = \mv_p \mlam_p \mv_p^\top + \mv_{p,\perp}\mlam_{p, \perp} \mv_{p, \perp}^\top. \]

Then, we analyze the potential function 
\begin{equation}\label{eq:potk}\psi_{t}^{(i)} \defeq \frac{\norm{\mv_{p}^{\top}\vu_{t}}_{2}}{\norm{\vpperp^{\top}\vu_{t}}_{2}}
\end{equation}
as $\vu_t^{(i)}$ evolves according to the \emph{restarted truncated power method} (Algorithm~\ref{alg:tpm}).




We first require a helper lemma from \cite{YuanZ13} on the effect of truncation, with a slight modification. We provide a self-contained proof in Lemma~\ref{lem:truncate_error_twosided} in the Appendix.

\begin{lemma}[Lemma 4, \cite{YuanZ13}]\label{lem:truncate_error}
Let $\vu \in \R^d$, $\vv \in \R^d$ be unit vectors with $\nnz(\vv) \le s$. Then,
\[\Abs{\inprod{\Top_r(\vu)}{\vv}} \ge \Abs{\inprod{\vu}{\vv}} - \sqrt{\frac{s}{r}}\min\Brace{\sqrt{1 - \inprod{\vu}{\vv}^2}, \Par{1 + \sqrt{\frac s r}}\Par{1 - \inprod{\vu}{\vv}^2}},\text{ for all } r \in [d].\]
\end{lemma}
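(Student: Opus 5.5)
Write $\vw \defeq \Top_r(\vu)$, $T \defeq \supp(\vw)$ with $|T| = r$ (if $\nnz(\vu) < r$ the claim is trivial), $F \defeq \supp(\vv)$, and $\alpha \defeq \inprod{\vu}{\vv}$. Replacing $\vv$ by $-\vv$ if needed, assume $\alpha \ge 0$. The truncation error is
\[\inprod{\vu}{\vv} - \inprod{\vw}{\vv} = \sum_{j \in F \setminus T} \vu_j \vv_j,\]
so it suffices to bound $\Abs{\sum_{j \in F\setminus T} \vu_j\vv_j}$ by each of the two quantities inside the minimum. The key combinatorial fact is this: every coordinate $j \notin T$ satisfies $\vu_j^2 \le \min_{\ell \in T} \vu_\ell^2$. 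Since $|F \setminus T| \le s$ while $|T| = r$, averaging gives
\[\norm{\vu_{F \setminus T}}_2^2 \le \frac{s}{r}\norm{\vu_T}_2^2 \le \frac sr\norm{\vu_{T\setminus F}}_2^2 + \frac sr \norm{\vu_{T \cap F}}_2^2.\]
The cleaner form I will use is: for any set $A \subseteq T$ with $|A| \ge |F\setminus T|\cdot\frac rs$, the bound $\norm{\vu_{F\setminus T}}_2^2 \le \frac sr \norm{\vu_A}_2^2$ holds.

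\textbf{First bound.} Decompose $\vu = \alpha\vv + \vu^\perp$, where $\vu^\perp \perp \vv$ and $\norm{\vu^\perp}_2 = \sqrt{1-\alpha^2}$. On $F\setminus T$ the truncation removes $\alpha \vv$-mass, and the removed coordinates of $\vu$ must be no larger than those kept. If the discarded error were large, then the kept coordinates outside $F$ (which are pure $\vu^\perp$) would carry comparable mass. Concretely, I will pair the discarded set $F\setminus T$ against $T \setminus F$. Since $|T\setminus F| \ge r - s$, I get $\norm{\vu_{F\setminus T}}_2^2 \le \frac{|F\setminus T|}{|T\setminus F|}\norm{\vu^\perp_{T\setminus F}}_2^2$, using that $\vu_{T\setminus F} = \vu^\perp_{T\setminus F}$. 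Then
\[\Abs{\sum_{F\setminus T}\vu_j\vv_j} \le \norm{\vu_{F\setminus T}}_2\norm{\vv_{F\setminus T}}_2 \lesssim \sqrt{s/r}\,\norm{\vu^\perp}_2.\]
The loss from $|T \setminus F| \ge r-s$ instead of $r$ needs care. I would instead compare against all of $T$, and handle the $T\cap F$ part via the orthogonal split, in order to obtain exactly $\sqrt{s/r}$.

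\textbf{Second bound.} This is the main obstacle. I want to gain the extra factor $\sqrt{1-\alpha^2}$, so the bound should be quadratic in $\norm{\vu^\perp}_2$. The idea is that the error $\sum_{F\setminus T}\vu_j\vv_j = \alpha\norm{\vv_{F\setminus T}}_2^2 + \inprod{\vu^\perp_{F\setminus T}}{\vv_{F\setminus T}}$ is only large if $\vv_{F\setminus T}$ is large. But the coordinates $j \in F\setminus T$ were beaten by coordinates of $T\setminus F$, which carry only $\vu^\perp$ mass. This gives roughly $\alpha|\vv_j| \le |\vu^\perp_j| + \max_{T \setminus F}|\vu^\perp|$, which forces $\alpha\norm{\vv_{F\setminus T}}_2 \le \norm{\vu^\perp_{F\setminus T}}_2 + \sqrt{s/r}\,\norm{\vu^\perp}_2$. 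Substituting this back, both terms of the error become products of $\norm{\vv_{F\setminus T}}_2$ with $\vu^\perp$-norms. Bounding $\norm{\vv_{F\setminus T}}_2$ itself by $O(\norm{\vu^\perp}_2/\alpha)$ then yields a bound of the form $\sqrt{s/r}(1+\sqrt{s/r})(1-\alpha^2)$ in the regime where $\alpha$ is bounded below. When $\alpha$ is small, the first bound is already smaller, since then $\sqrt{1-\alpha^2} \le (1+\sqrt{s/r})(1-\alpha^2)$ fails only for $\alpha$ near $1$.

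I expect tracking the exact constants $(1+\sqrt{s/r})$, rather than some $O(1)$, to be the delicate part. I would follow the original argument of \cite{YuanZ13} Lemma 4 closely there.
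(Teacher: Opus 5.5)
You have a genuine gap in the second (quadratic) bound. Decomposing $\vu = \alpha\vv + \vu^\perp$ and solving for $\vv_{F\setminus T}$ forces a division by $\alpha$. You get $\norm{\vv_{F\setminus T}}_2 \le (1+\sqrt{s/r})\sqrt{1-\alpha^2}/\alpha$, and hence an error of at most $\sqrt{s/r}\,(1+\sqrt{s/r})(1-\alpha^2)/\alpha$.

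The factor $1/\alpha$ is a strict loss, and no regime split removes it. Precisely when the quadratic term is the smaller one, i.e.\ for $0<\sqrt{1-\alpha^2} < 1/(1+\sqrt{s/r})$, both of your candidate bounds strictly exceed the target. Indeed $\sqrt{1-\alpha^2} > (1+\sqrt{s/r})(1-\alpha^2)$ there, and $(1+\sqrt{s/r})(1-\alpha^2)/\alpha > (1+\sqrt{s/r})(1-\alpha^2)$ since $\alpha<1$. So the minimum of your two bounds is strictly larger than the claimed right-hand side. This regime is also not confined to $\alpha$ near $1$: for small $s/r$ it contains every $\alpha$ above roughly $(4s/r)^{1/4}$.

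Deferring the exact constant to \cite{YuanZ13} therefore leaves exactly the crux unproved. The exact form matters downstream, because Lemma~\ref{lem:kpca_prog} uses it (via Lemma~\ref{lem:trunc_cor_subspace}) at correlations as small as about $\sqrt{p/s}$, where a $1/\alpha$ loss would inflate the required $r$.

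The missing idea is to decompose the other way. Write $\vv = \alpha\vu + \vv'$ with $\vv' \perp \vu$ and $\norm{\vv'}_2 = \sqrt{1-\alpha^2}$. Restricting to $F\setminus T$ gives $\norm{\vv_{F\setminus T}}_2 \le \alpha\norm{\vu_{F\setminus T}}_2 + \sqrt{1-\alpha^2} \le (1+\sqrt{s/r})\sqrt{1-\alpha^2}$ with no division by $\alpha$. Hence $\Abs{\inprod{\vu_{F\setminus T}}{\vv_{F\setminus T}}} \le \sqrt{s/r}\,(1+\sqrt{s/r})(1-\alpha^2)$ for every $\alpha$. The paper reaches the same estimate $\norm{\vv_{F\setminus T}}_2 \le \norm{\vu_{F\setminus T}}_2 + \sqrt{1-\alpha^2}$ through a quadratic inequality (an angular triangle inequality) in Case 2 of Lemma~\ref{lem:truncate_error_twosided}.

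Both bounds rest on $\norm{\vu_{F\setminus T}}_2 \le \sqrt{s/r}\,\sqrt{1-\alpha^2}$, and your worry about a loss in the first bound is unfounded. Set $k_1 = |F\setminus T|$, $k_2 = |F\cap T|$, $k_3 = |T\setminus F|$. Then $k_1+k_2 \le s$ and $k_2 + k_3 = r$, so $k_1/k_3 \le (s-k_2)/(r-k_2) \le s/r$. Your pairing against $T\setminus F$ already yields the exact constant, and this is the paper's key step. The proposed detour through all of $T$ is unnecessary and harder, since $\norm{\vu_T}_2$ is not controlled by $\sqrt{1-\alpha^2}$.

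Finally, the argument needs $r \ge s$, as Lemma~\ref{lem:truncate_error_twosided} assumes. For $r<s$ the statement as written fails: take $\vu = \vv = s^{-1/2}\1_F$ with $|F| = s \ge 2$ and $r=1$. Then the left side is $1/s$ while the right side is $1$.
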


Next we give a simple bound on the runtime of Algorithm~\ref{alg:tpm}.

\begin{lemma}\label{lem:impl}
Algorithm~\ref{alg:tpm} can be implemented in time $O(nd^2)$.
\end{lemma}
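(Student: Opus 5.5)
The plan is to count the cost of each line of Algorithm~\ref{alg:tpm} directly, being careful never to form a $d \times d$ sample covariance unless it is actually needed. There are $d$ restarts and $T$ iterations per restart. In each iteration, Line~\ref{line:one_iter_tpm} only needs the vector $\hmsig_t \vu_{t-1}^{(i)}$, and we never form $\hmsig_t$ itself. Writing $\hmsig_t = \frac{1}{B}\sum_{j} \vx_j \vx_j^\top$ over the $B$ samples in batch $t$, we compute
\[\hmsig_t \vu = \frac 1 B \sum_j \vx_j \inprod{\vx_j}{\vu}.\]
This takes $O(Bd)$ time: $B$ inner products in $\R^d$, followed by a weighted sum of $B$ vectors. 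We also use that $\vu_{t-1}^{(i)}$ is $r$-sparse for $t \ge 2$ and $1$-sparse at $t = 1$, so each inner product actually costs only $O(r)$; but the bound $O(Bd)$ suffices.

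Next, we extract $\Top_r$ of the resulting vector in $O(d)$ time via linear-time selection (or $O(d\log d)$ via sorting, which is dominated if we use selection), and normalize in $O(d)$ time. One iteration therefore costs $O(Bd + d) = O(Bd)$, since $B \ge 1$ (we may assume $n \ge T$). Over $T$ iterations this gives $O(TBd) \le O(nd)$ because $B = \lfloor n/T \rfloor$, and over all $d$ restarts the main loop costs $O(nd^2)$ in total.

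For the selection step in Line~\ref{line:select}, we again avoid forming $\hmsig$. For each candidate $\vu_T^{(i)}$ we compute
\[\inprod{\vu_T^{(i)}}{\hmsig\vu_T^{(i)}} = \frac 1 n \sum_{j\in[n]} \inprod{\vx_j}{\vu_T^{(i)}}^2\]
in $O(nd)$ time, hence $O(nd^2)$ over all $i \in [d]$; the argmax then costs $O(d)$. Alternatively, we could form $\hmsig$ in $O(nd^2)$ time and evaluate each quadratic form in $O(r^2) \le O(d^2)$ time, which also totals $O(nd^2)$.

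Summing the main loop and the selection step gives $O(nd^2)$. There is no real obstacle here. The only point requiring care is to use matrix-free products with the batch, so as not to pay $O(Bd^2)$ per iteration for forming $\hmsig_t$; forming $\hmsig_t$ once per $t$ and reusing it across the restarts would also work, at $O(Bd^2)$ per $t$ plus $O(d^2)$ per multiply.
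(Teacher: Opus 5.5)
Your argument is correct and is essentially the paper's. The paper stacks the $d$ restarts as the columns of $\mw_{t-1}$ and computes $\mz(\mz^\top \mw_{t-1})$ in $O(Bd^2)$ time per iteration, deferring normalization; this is your matrix-free product $\frac 1 B \sum_j \vx_j \inprod{\vx_j}{\vu}$ applied column by column, and you additionally cost out the selection step in Line~\ref{line:select}, which the paper leaves implicit. Only your closing asides are loose: reusing a formed $\hmsig_t$ for $d$ separate products costs $O(d^2 r)$ per iteration, and evaluating $d$ quadratic forms at $O(r^2)$ each costs $O(dr^2)$, so those variants are $O(nd^2)$ only under extra conditions such as $Tr = O(n)$ or $r = O(n)$, whereas your main route needs none.
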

\begin{proof}
Observe that scalar multiplication cannot change the relative rank of entries by magnitude, so we may perform all of the normalization steps in Line~\ref{line:one_iter_tpm} at the end of the algorithm in $O(d^2)$ time without loss of generality. For convenience in this proof, write $\vw_t^{(i)}$ to be $\vu_t^{(i)}$ had none of the normalization steps in Line~\ref{line:one_iter_tpm} taken place, and write
\[\mz \defeq \frac{1}{\sqrt B}\begin{pmatrix} \vx_{Bt} & \vx_{Bt + 1} & \ldots & \vx_{B(t+1)}\end{pmatrix} \in \R^{d \times B},\] 
so that $\hmsig_{t} = \mz\mz^\top$. Finally, write $\mw_t$ to be the $d \times d$ matrix whose $i^{\text{th}}$ column is $\vw_t^{(i)}$. With this notation in hand, we have the recursion $\mw_0 \gets \id_d$, and
\[\mw_{t} \gets \Top_r\Par{\mz\mz^\top \mw_{t - 1}},\]
where the operation $\Top_r$ is applied columnwise. The matrix-matrix multiplication $\mz^\top \mw_{t - 1}$ can be computed in $O(Bdr)$ time, because it consists of $d$ matrix-vector products with $r$-sparse vectors, and computing $\mz(\mz^\top \mw_{t - 1})$ then takes $O(Bd^2)$ time. Finally, the $\Top_r$ operation takes $O(d^2)$ time. The result then follows by noting that $n = BT$.
\end{proof}

We next extend Lemma~\ref{lem:truncate_error} to analyze the effect of truncation on subspace correlation. 

\begin{lemma}[Truncation preserves subspace correlation]\label{lem:trunc_cor_subspace}
Let $\mr=(\vr_1 \dots \vr_k)\in\R^{d\times k}$ be any orthonormal matrix such that $|\bigcup_{j \in [t]} \supp(\vr_j)| \le s$ and $\vu \in\R^d$ be a unit vector. Then, 
\bas{\norm{\mr^{\top}\Top_r(\vu)}_{2} \ge \norm{\mr^{\top}\vu}_{2} - \sqrt{\frac{s}{r}}\min\Brace{\sqrt{1 - \norm{\mr^{\top}\vu}_{2}^2}, \Par{1 + \sqrt{\frac s r}}\Par{1 - \norm{\mr^{\top}\vu}_{2}^2}},\text{ for all } r \geq s.}
\end{lemma}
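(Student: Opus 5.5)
Since the left-hand side is at most $\norm{\mr^\top\Top_r(\vu)}_2$, the plan is to reduce to the single-vector bound in Lemma~\ref{lem:truncate_error}. If $\mr^\top\vu = \0_k$ the right-hand side is nonpositive (because $r \ge s$), so the claim is trivial. Otherwise set
\[\vv \defeq \frac{\mr\mr^\top \vu}{\norm{\mr^\top \vu}_2},\]
which is the unit vector in $\linspan(\mr)$ best aligned with $\vu$. Two properties of $\vv$ drive the argument. First, $\vv$ is a linear combination of the columns $\vr_j$, so $\supp(\vv) \subseteq \bigcup_j \supp(\vr_j)$ and $\nnz(\vv) \le s$. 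Second, $\inprod{\vu}{\vv} = \norm{\mr\mr^\top\vu}_2^2/\norm{\mr^\top\vu}_2 = \norm{\mr^\top\vu}_2$, using that $\mr$ is orthonormal so $\norm{\mr\mr^\top\vu}_2 = \norm{\mr^\top\vu}_2$.

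Next, for any $\vw$, Cauchy--Schwarz gives
\[\norm{\mr^\top \vw}_2 = \sup_{\vz \in \linspan(\mr),\, \norm{\vz}_2=1} \inprod{\vz}{\vw} \ge \Abs{\inprod{\vv}{\vw}}.\]
Applying this with $\vw = \Top_r(\vu)$ yields $\norm{\mr^\top\Top_r(\vu)}_2 \ge \abs{\inprod{\Top_r(\vu)}{\vv}}$.

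Finally, Lemma~\ref{lem:truncate_error}, applied to the unit vectors $\vu$ and $\vv$ with $\nnz(\vv)\le s$, lower bounds $\abs{\inprod{\Top_r(\vu)}{\vv}}$ by
\[\abs{\inprod{\vu}{\vv}} - \sqrt{\tfrac{s}{r}}\min\Brace{\sqrt{1-\inprod{\vu}{\vv}^2},\ \Par{1+\sqrt{\tfrac sr}}\Par{1-\inprod{\vu}{\vv}^2}}.\]
Substituting $\inprod{\vu}{\vv} = \norm{\mr^\top\vu}_2$ turns this into exactly the claimed bound.

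The main care point is only the degenerate case $\mr^\top\vu=\0_k$, handled above. The hypothesis $r\ge s$ is not needed in the main reduction; it is used there to make the right-hand side nonpositive.
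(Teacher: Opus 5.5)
Your proposal is correct and matches the paper's proof. Both use the same auxiliary unit vector $\mr\mr^\top\vu/\norm{\mr^\top\vu}_2$, which is $s$-sparse and has correlation exactly $\norm{\mr^\top\vu}_2$ with $\vu$, then apply the single-vector truncation bound and the lower bound $\norm{\mr^\top\Top_r(\vu)}_2 \ge \Abs{\inprod{\Top_r(\vu)}{\vv}}$. One small correction to your closing remark: the degenerate case needs no hypothesis, since its right-hand side is $-\sqrt{s/r}<0$ for every $r$. The condition $r\ge s$ is actually needed to invoke the single-vector bound, which is proved only for $r\ge s$ (Lemma~\ref{lem:truncate_error_twosided}) and fails otherwise, e.g.\ for $\vu=\vv$ uniform on $s>r$ coordinates.
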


\begin{proof}
Let $\alpha := \|\mr^\top \vu\|_2 \in[0,1]$ and $S := \bigcup_{j \in [k]} \supp(\vr_j)$. If $\alpha=0$, the claim is trivial since the
right-hand side is $\le 0$ while the left-hand side is $\ge 0$, so assume $\alpha>0$. Define
\bas{
\vv_\star \;:=\; \frac{\mr\mr^\top \vu}{\|\mr^\top \vu\|_2} \;=\; \frac{\mr\mr^\top \vu}{\alpha}.
}
Since $\mr^\top \mr=\id_k$, $\mpp:=\mr\mr^\top$ is the orthogonal projector onto $\linspan(\mr)$.
Hence $\vv_\star\in\linspan(\mr)$ and
\bas{
\|\vv_\star\|_2^2 = \frac{\|\mpp\vu\|_2^2}{\alpha^2} = \frac{\vu^\top \mpp^\top \mpp \vu}{\alpha^2} = \frac{\vu^\top \mpp \vu}{\alpha^2} = \frac{\|\mv^\top \vu\|_2^2}{\alpha^2} = 1,
}
so $\vv_\star$ is unit. Moreover, $\mpp\vu=\mr(\mr^\top \vu)=\sum_{j=1}^r (\mr^\top \vu)_j \vr_j$ is a linear combination
of the columns of $\mr$, so $\supp(\mpp\vu)\subseteq \bigcup_{j=1}^r \supp(\vr_j)=S$, and thus
$\supp(\vv_\star)\subseteq S$ and $|\supp(\vv_\star)|\le |S|\leq s$. Next,
\bas{
\langle \vu, \vv_\star\rangle = \frac{\vu^\top \mpp \vu}{\alpha}
=
\frac{\|\mr^\top \vu\|_2^2}{\alpha}
=\alpha,
\;\text{hence}\;
1-\langle \vu,\vv_\star\rangle^2 = 1-\alpha^2.
}
By Lemma~\ref{lem:truncate_error_twosided} applied to $(\vu,\vv_\star)$ (with $|\supp(\vv_\star)|\le s$),
\bas{
\Abs{\inprod{\Top_r(\vu)}{\vv_\star}} \ge \Abs{\inprod{\vu}{\vv_\star}} - \sqrt{\frac{s}{r}}\min\Brace{\sqrt{1 - \alpha^2}, \Par{1 + \sqrt{\frac s r}}\Par{1 - \inprod{\vu}{\vv_\star}^2}}.
}
Substituting $|\langle \vu,\vv_\star\rangle|=\alpha$ and $1-\langle \vu,\vv_\star\rangle^2=1-\alpha^2$ yields
\bas{
\Abs{\inprod{\Top_r(\vu)}{\vv_\star}} \ge \alpha - \sqrt{\frac{s}{r}}\min\Brace{\sqrt{1 - \alpha^2}, \Par{1 + \sqrt{\frac s r}}\Par{1 - \alpha^2}}.
}
Finally, since $\vv_\star\in\linspan(\mr)$ and $\|\vv_\star\|_2=1$, we have
\bas{
\|\mr^\top \Top_r(\vu)\|_2
=
\max_{\substack{\vx\in\linspan(\mr)\\ \|\vx\|_2=1}} |\langle \Top_r(\vu),\vx\rangle|
\ge
|\langle \Top_r(\vu), \vv_\star\rangle|.
}
Combining the last two displays and recalling $\alpha=\|\mr^\top \vu\|_2$ proves the claim.
\end{proof}

\begin{lemma}\label{lem:kpca_prog}
Suppose that under Model~\ref{model:kpca}, for $p,\rho$ be as defined in \eqref{eq:gap_existence_definition_p} and the potential function defined in \eqref{eq:potk}, we have in Algorithm~\ref{alg:tpm} that for some $i \in [d]$ and $t \in [T]$,
\ba{\psi_{t - 1}^{(i)} \in \Brack{ \sqrt{\frac{p}{s-p}}, \sqrt{\frac{1-\Delta}{\Delta}}}.\label{eq:psi_range}} 
Then, for $\delta \in (0,1)$, if we take
\bas{
     r = \Omega\Par{\frac{s^2}{p}\cdot \frac{1}{\rho^{2}}},\quad  B =\Omega\Par{ \bb{\frac{\sigma^{2}}{\lambda_{p}(\msig)}}^{2}\cdot\frac{s^2}{p}\cdot\frac{1}{\Delta\rho^{4}}\cdot \log\bb{\frac{dT}{\delta}}}
}
for appropriate constants,
with probability at least $1-\delta$, we have $\psi_t^{(i)} \ge (1 + \frac{3\rho}{10}) \psi_{t-1}^{(i)}$. 
\end{lemma}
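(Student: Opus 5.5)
Write $\vu \defeq \vu_{t-1}^{(i)}$, $\vw \defeq \hmsig_t \vu$, and $\ve \defeq (\hmsig_t - \msig)\vu$, so that $\vw = \msig\vu + \ve$. The plan has three steps. First, we track the untruncated power step $\msig\vu$ and show that $\psi$ grows by a factor $(1-\rho)^{-1}$. Second, we control the sampling error $\ve$ projected onto the relevant sparse subspaces. Third, we use Lemma~\ref{lem:trunc_cor_subspace} to show that truncation loses only a small fraction of this gain.

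\textbf{Population step.} Since $\mv_p$ and $\vpperp$ are eigenspaces of $\msig$, we have $\norm{\mv_p^\top\msig\vu}_2 \ge \lam_p\norm{\mv_p^\top\vu}_2$ and $\norm{\vpperp^\top\msig\vu}_2 \le \lam_{p+1}\norm{\vpperp^\top\vu}_2 \le (1-\rho)\lam_p\norm{\vpperp^\top\vu}_2$. Hence the ratio for $\msig\vu$ is at least $\psi_{t-1}/(1-\rho) \ge (1+\rho)\psi_{t-1}$.

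\textbf{Noise step.} The sample covariance $\hmsig_t$ uses a fresh batch that is independent of $\vu$, and $\vu$ is $r$-sparse. So Fact~\ref{fact:bilinear_err} applies to the fixed pairs $(\vv_j,\vu)$ for $j \in [p]$; I will take an orthonormal basis of $\mv_p$ (whose union support has size $\le s$, since the top-$p$ eigenspace sits inside the top-$k$ one). A union bound over these pairs and over all $d$ restarts and $T$ iterations (this is where $\log(dT/\delta)$ enters) gives
\[\norm{\mv_p^\top\ve}_2 \lesssim \sigma^2\sqrt{p\log(dT/\delta)/B}.\]
For the orthogonal side I avoid controlling $\vpperp^\top\ve$ in all of $\R^d$. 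Instead, I measure $\psi_t$ only after truncation and bound $\norm{\Top_r(\vw)}_2$ directly. Note $\norm{\vpperp^\top\Top_r(\vw)}_2^2 = \norm{\Top_r(\vw)}_2^2 - \norm{\mv_p^\top\Top_r(\vw)}_2^2$, and $\norm{\Top_r(\vw)}_2 \le \norm{\vw}_2$. I will bound $\norm{\vw}_2 \le \norm{\msig\vu}_2 + \norm{\ve}_2$, where $\norm{\ve}_2$ is at most the operator norm of the error restricted to an $r$-sparse column set. This would cost a $\sqrt{r\log d/B}$ factor. To avoid that, I will instead write $\vw$ as a component in $\linspan(\mv_p)$ plus a residual, and apply Lemma~\ref{lem:trunc_cor_subspace} with $\mr=\mv_p$, which only needs $\norm{\mv_p^\top\vw}_2/\norm{\vw}_2$.

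\textbf{Truncation step.} Apply Lemma~\ref{lem:trunc_cor_subspace} to $\vw/\norm{\vw}_2$ with $\mr=\mv_p$. Let $\alpha$ be the correlation before truncation and $\alpha'$ the correlation after. The lemma gives
\[\alpha' \ge \alpha - \sqrt{s/r}\,(1+\sqrt{s/r})(1-\alpha^2),\]
and it also gives the alternative form $\alpha' \ge \alpha - \sqrt{s/r}\sqrt{1-\alpha^2}$. Convert both statements into the ratio $\psi = \alpha/\sqrt{1-\alpha^2}$. The loss is a multiplicative factor $1-O(\sqrt{s/r}\,\sqrt{1-\alpha^2}/\alpha)$, i.e.\ $1-O(\sqrt{s/r}/\psi)$. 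At the lower end of the range~\eqref{eq:psi_range}, $\psi \gtrsim \sqrt{p/s}$, so this loss is $O(s/\sqrt{rp})$. With $r = \Omega(s^2/(p\rho^2))$ it becomes at most $c\rho$ for a small constant $c$. This explains both the choice of $r$ and the lower endpoint of~\eqref{eq:psi_range}. The upper endpoint $\psi \le \sqrt{(1-\Delta)/\Delta}$ guarantees $\norm{\vpperp^\top\vu}_2 \ge \sqrt{\Delta}$.

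\textbf{Combining, and the main obstacle.} The relative size of the noise must be at most $c\rho$. Concretely, we need $\sigma^2\sqrt{\log(dT/\delta)/B} \lesssim \rho\,\lam_p\,\sqrt{p/s}\,\sqrt{\Delta}\,\rho$: the noise is compared to the smaller of the two components, which is about $\lam_p\sqrt{p/s}\sqrt{\Delta}$ in the worst case, times a further $\rho$ margin. Squaring recovers the stated $B$, up to the exact $p$ and $s$ factors. Adding the three losses, each $\le c\rho$ for small $c$, to the population gain of $(1+\rho)$ gives $\psi_t \ge (1+\tfrac{3\rho}{10})\psi_{t-1}$.

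The main obstacle is the orthogonal component of the noise, which is not sparse-controlled. I would handle it by noting that the truncated vector lives on a support of size $r$, together with $S$. I would then try to bound the residual only through $\norm{\mv_p^\top\vw}_2$ and the exact denominator identity above, rather than bounding $\vpperp^\top\ve$ directly. If that fails, the fallback is to accept an extra factor from Corollary~\ref{cor:opnorm_err_sparse} on $O(r)$-sparse sets.
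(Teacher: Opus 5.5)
\textbf{There is a genuine gap in how you handle the orthogonal noise.} Your population step and truncation step are essentially the paper's. However, what you call the fallback is not optional: it is exactly the paper's argument.

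Your main line fails because $\ve = (\hmsig_t - \msig)\vu$ is a \emph{dense} vector. Restricting only the columns of the error to $\supp(\vu)$ still leaves all $d$ rows. So $\norm{\ve}_2$ can be of order $\sigma^2\sqrt{d/B}$ (e.g.\ in the spiked identity model), not $\sigma^2\sqrt{r\log(d)/B}$. When $B \ll d$ this dwarfs $\norm{\msig\vu}_2 \le \lam_1(\msig)$. The pre-truncation correlation $\alpha = \norm{\mv_p^\top\vw}_2/\norm{\vw}_2$ can then be as small as order $\sqrt{B/d}$, and Lemma~\ref{lem:trunc_cor_subspace} applied to $\vw/\norm{\vw}_2$ becomes vacuous. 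Splitting $\vw$ into a $\linspan(\mv_p)$ part plus a residual does not help, because the lemma's loss is governed by $1-\alpha^2$, i.e.\ by the full norm of that residual.

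\textbf{The missing idea is that truncation only sees the coordinates it keeps.} Let $F_t \defeq S \cup \supp(\vu_{t-1}) \cup \supp(\vu_t)$, so $|F_t| \le 3r$.
\begin{itemize}
\item Since $\supp(\vu_t) \subseteq F_t$, you have $\Top_r(\vw) = \Top_r(\id_{F_t}\vw)$.
\item Since $S \subseteq F_t$, you have $\mv_p^\top \id_{F_t} = \mv_p^\top$. So the numerator, and your bilinear bound on $\mv_p^\top\ve$, are unchanged.
\item The population bound $\norm{\vpperp^\top \id_{F_t}\msig\vu}_2 \le \lam_{p+1}(\msig)\norm{\vpperp^\top\vu}_2$ still holds.
\end{itemize}
Now run the argument on $\id_{F_t}\vw$. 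The denominator noise satisfies $\norm{\id_{F_t}\ve}_2 \le \max_{|F|\le 3r}\normop{(\hmsig_t-\msig)_{F\times F}} \lesssim \sigma^2\sqrt{r\log(dT/\delta)/B}$ by Corollary~\ref{cor:opnorm_err_sparse}. The uniform bound over all supports is needed because $F_t$ depends on the current batch.

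This noise must be compared to $\rho\lam_p(\msig)\norm{\vpperp^\top\vu}_2 \ge \rho\lam_p(\msig)\sqrt{\Delta}$. That is separate from the numerator noise, which is compared to $\rho\lam_p(\msig)\sqrt{p/s}$. The product $\sqrt{p/s}\sqrt{\Delta}$ in your requirement is not the right comparison. With $r \asymp s^2/(p\rho^2)$, the denominator condition gives
\[
B \gtrsim \Par{\frac{\sigma^2}{\lam_p(\msig)}}^2\frac{r}{\rho^2\Delta}\log\Par{\frac{dT}{\delta}} \asymp \Par{\frac{\sigma^2}{\lam_p(\msig)}}^2\frac{s^2}{p\Delta\rho^4}\log\Par{\frac{dT}{\delta}},
\]
which is exactly the stated $B$. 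The factor $s^2/p$ there equals $r\rho^2$, i.e.\ it comes from the $\sqrt{r}$ in the sparse operator-norm bound. Your accounting reaches only $s/p$ and dismisses the discrepancy as ``up to the exact $p$ and $s$ factors''. The lemma's sample size is designed precisely to pay for the $\sqrt{r}$ you were trying to avoid.
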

\begin{proof} We omit the superscript $(i)$ for clarity of notation and define all terms for the $i^{\text{th}}$ restarted index for which the potential satisfies \eqref{eq:psi_range}. Recall that $S := \bigcup_{j \in [k]} \supp(\vv_j)$. Define the sets,
\ba{
  \forall t \in [T], \;F_{t} := S \cup \supp(\vu_{t}) \cup \supp(\vu_{t-1}). \label{eq:growth_1}
}
Then, $|F_{t}| \leq |\supp(\vu_t)| + |\supp(\vu_{t-1})| + |S| \leq 3r$. For any $F \subseteq [d]$, define $\id_{F}$ as the diagonal matrix with $\id_{F}(i,i) = \ind(i \in F)$. Next, for all $t \in [T]$, define
\ba{
 &\hmsig_{t} \defeq \id_{F_{t}}\hmsig\id_{F_{t}}, \; \msig_{t} \defeq \id_{F_{t}}\msig\id_{F_{t}}, \; \epsilon_{t} := \frac{\normop{\hmsig_{t} - \msig_{t}}}{\lambda_{p}(\msig_{t})}, \; \omega_{t} := \frac{\Abs{\inprod{\va_t}{\Par{\hmsig_t - \msig_t}\vu_{t - 1}}}}{\lambda_{p}(\msig_{t})}, \label{eq:growth_2}
}
where $\va_t$ is the unit vector in $\linspan(\mv_t)$ that maximizes correlation with $\msig_t \vu_{t - 1}$. Note that $\va_t$ is a deterministic function of $\msig$, $\vu_{t - 1}$, and $F_t$.

Observe that since $S \subseteq F_t$, then $\lambda_{p}(\msig_{t}) = \lambda_{p}(\msig)$. From the above definitions and from Algorithm~\ref{alg:tpm}, for $\vy_{t-1} := \hmsig_{t}\vu_{t-1}\cdot\norms{\hmsig_{t}\vu_{t-1}}_{2}^{-1}$, since $\supp(\vu_t) \subseteq F_{t}$, we have  
\bas{
\vu_{t} = \frac{\Top_{r}\bb{\hmsig\vu_{t-1}}}{\norm{\Top_{r}\bb{\hmsig\vu_{t-1}}}_{2}} = \frac{\Top_{r}\bb{\hmsig_{t}\vu_{t-1}} }{\norm{\Top_{r}\bb{\hmsig_{t}\vu_{t-1}}}_{2}} = \frac{\Top_{r}\bb{\vy_{t-1}} }{\norm{\Top_{r}\bb{\vy_{t-1}}}_{2}}.
}
Then, for $\vpperp$, we have,
\ba{
    \psi_{t} &= \frac{\norm{\mv^{\top}_{p}\vu_{t}}_{2}}{\norm{\vpperp^{\top}\vu_{t}}_{2}} 
    = \frac{\norm{\mv^{\top}_{p}\Top_{r}\bb{\vy_{t-1}}}_{2}}{\norm{\vpperp^{\top}\Top_{r}\bb{\vy_{t-1}}}_{2}} = \frac{\norm{\mv^{\top}_{p}\Top_{r}\bb{\vy_{t-1}}}_{2}}{\sqrt{1 - \norm{\mv_{p}^{\top}\Top_{r}\bb{\vy_{t-1}}}_{2}^{2}}} \notag \\
    &\stackrel{(a)}{\geq} \frac{\norm{\mv^{\top}_{p}\vy_{t-1}}_{2} - \sqrt{s/r}(1 + \sqrt{s/r})\norm{\vpperp^{\top}\vy_{t-1}}_{2}^{2}}{\sqrt{1 - \bb{\norm{\mv_{p}^{\top}\vy_{t-1}}_{2} - \sqrt{s/r}(1 + \sqrt{s/r})\norm{\vpperp^{\top}\vy_{t-1}}_{2}^{2}}^{2}}} \notag \\
    &= \frac{\norm{\mv^{\top}_{p}\vy_{t-1}}_{2} - \sqrt{s/r}(1 + \sqrt{s/r})\norm{\vpperp^{\top}\vy_{t-1}}_{2}^{2}}{\sqrt{\norm{\vpperp^{\top}\vy_{t-1}}_{2}^{2}  - (s/r)(1 + \sqrt{s/r})^{2}\norm{\vpperp^{\top}\vy_{t-1}}_{2}^{4} + 2\sqrt{s/r}(1 + \sqrt{s/r})\norm{\vpperp^{\top}\vy_{t-1}}_{2}^{2}}} \notag \\
    &= \frac{\norm{\mv^{\top}_{p}\vy_{t-1}}_{2} - \sqrt{s/r}(1 + \sqrt{s/r})\norm{\vpperp^{\top}\vy_{t-1}}_{2}^{2}}{\norm{\vpperp^{\top}\vy_{t-1}}_{2}\sqrt{1  - (s/r)(1 + \sqrt{s/r})^{2}\norm{\vpperp^{\top}\vy_{t-1}}_{2}^{2} + 2\sqrt{s/r}(1 + \sqrt{s/r})}} \notag \\
    &\geq  \frac{1}{\sqrt{1 + 2\sqrt{s/r}(1 + \sqrt{s/r})}}\cdot\bb{\frac{\norm{\mv^{\top}_{p}\vy_{t-1}}_{2}}{\norm{\vpperp^{\top}\vy_{t-1}}_{2}} - \sqrt{\frac{s}{r}}\bb{1 + \sqrt{\frac{s}{r}}}} \label{eq:growth_3}
}
where we used Lemma~\ref{lem:trunc_cor_subspace} in $(a)$. We further bound
\ba{
    \frac{\norm{\mv^{\top}_{p}\vy_{t-1}}_{2}}{\norm{\vpperp^{\top}\vy_{t-1}}_{2}} &= \frac{\norm{\mv^{\top}_{p}\hmsig_{t}\vu_{t-1}}_{2}}{\norm{\vpperp^{\top}\hmsig_{t}\vu_{t-1}}_{2}} \geq \frac{\va_t^\top\msig_{t}\vu_{t-1} - \lambda_{p}(\msig)\omega_{t}}{\norm{\vpperp^{\top} \msig_{t}\vu_{t-1}}_{2} + \lambda_{p}(\msig)\epsilon_{t}} \\
    &= \frac{\norm{\mv_p^\top \msig_t \vu_{t - 1}}_2- \lambda_{p}(\msig)\omega_{t}}{\norm{\vpperp^{\top} \msig_{t}\vu_{t-1}}_{2} + \lambda_{p}(\msig)\epsilon_{t}} \geq \frac{\lambda_{p}(\msig)\norm{\mv^{\top}_{p}\vu_{t-1}}_{2} - \lambda_{p}(\msig)\omega_{t}}{\lambda_{p+1}(\msig)\norm{\vpperp^{\top}\vu_{t-1}}_{2} + \lambda_{p}(\msig)\epsilon_{t}}. \label{eq:growth_4}
}
The first inequality used that $\va_t$ is one unit vector in $\linspan(\mv_p)$, so its correlation with $\hmsig_t \vu_{t - 1}$ lower bounds $\norms{\mv_p^\top \hmsig_t \vu_{t - 1}}_2$. The second line used the definitions of $\va_t$ and $\mv_p$.
Now note that,
\bas{
    \psi_{t}^{2} = \frac{\norm{\mv^{\top}_{p}\vu_{t}}_{2}^{2}}{\norm{\vpperp^{\top}\vu_{t}}_{2}^{2}} = \frac{\norm{\mv^{\top}_{p}\vu_{t}}_{2}^{2}}{1- \norm{\mv^{\top}_{p}\vu_{t}}_{2}^{2}} = \frac{1 - \norm{\vpperp^{\top}\vu_{t}}_{2}^{2}}{ \norm{\vpperp^{\top}\vu_{t}}_{2}^{2}}
}
Therefore, for the stated domain on $\psi_{t}$, we have
\ba{
  \frac{p}{s} \leq \norm{\mv^{\top}_{p}\vu_{t}}_{2}^{2} \leq 1-\Delta, \;\;\;\;\; \Delta \leq \norm{\vpperp^{\top}\vu_{t}}_{2}^{2} \leq \frac{s-p}{s} \label{eq:convert_potential_bounds_to_corr}
}
Further, we note that for the stated range of $r$ and $n$, we have for all $t \in [T], i \in [d]$, using Corollary~\ref{cor:opnorm_err_sparse} and Fact~\ref{fact:bilinear_err}, for a universal constant $C > 0$ and $B \defeq n/T$, 
\bas{
    \omega_{t} \leq C\frac{\sigma^{2}}{\lambda_{p}(\msig)}\bb{\sqrt{\frac{\log\bb{\frac{2dT}{\delta}}}{B}} + \frac{\log\bb{\frac{2dT}{\delta}}}{B}}, \;\;\; \epsilon_{t} \leq C\frac{\sigma^{2}}{\lambda_{p}(\msig)}\cdot \bb{\sqrt{\frac{r\log\bb{\frac{2dT}{\delta}}}{B}} + \frac{r\log\bb{\frac{2dT}{\delta}}}{B}}.
}
In particular, to obtain the bound on $\omega_t$ we union bounded over all of the $\le d^{3r}$ square submatrices of $\msig$ that have size $\le 3r$, as was done in the proof of Corollary~\ref{cor:opnorm_err_sparse}. Each uniquely determines a choice of $\va_t$ via its support, and thus Corollary~\ref{cor:opnorm_err_sparse} with $\va \gets \va_t$ and $\vb \gets \vu_{t - 1}$ gives the claim.

Then, using the stated bounds on $n, r$ with \eqref{eq:convert_potential_bounds_to_corr}, we have, 
\ba{
    &\sqrt{\frac{s}{r}} \leq \frac{\rho}{10}\sqrt{\frac{p}{s-p}} \leq \frac{\rho}{10}\psi_{t-1}, \label{eq:bound_term_1} \\  & \epsilon_{t} \leq \frac{\rho(1-\rho)\sqrt{\Delta}}{5} \leq \frac{\rho(1-\rho)}{5}\norm{\vpperp^{\top}\vu_{t}}_{2}, \label{eq:bound_term_2} \\
    & \omega_{t} \leq \frac{\rho(1-\rho)}{5}\sqrt{\frac{p}{s}} \leq \frac{\rho(1-\rho)}{5}\norm{\mv_{p}^{\top}\vu_{t}}_{2} = \frac{\rho(1-\rho)}{5}\norm{\vpperp^{\top}\vu_{t}}_{2}\psi_{t-1}. \label{eq:bound_term_3}
}
Therefore, continuing from \eqref{eq:growth_4}, we have using \eqref{eq:bound_term_2} and \eqref{eq:bound_term_3},
\ba{
    \frac{\norm{\mv^{\top}_{p}\vy_{t-1}}_{2}}{\norm{\vpperp^{\top}\vy_{t-1}}_{2}} &\geq \frac{\lambda_{p}(\msig)\norm{\mv^{\top}_{p}\vu_{t-1}}_{2} - \lambda_{p}(\msig)\omega_{t}}{\lambda_{p+1}(\msig)\norm{\vpperp^{\top}\vu_{t-1}}_{2} + \lambda_{p}(\msig)\epsilon_{t}} \notag \\
    &= \frac{(\lambda_{p}(\msig)/\lambda_{p+1}(\msig))\psi_{t-1} - (\lambda_{p}(\msig)/\lambda_{p+1}(\msig))\omega_{t}\norms{\vpperp^{\top}\vu_{t-1}}_{2}^{-1}}{1 + (\lambda_{p}(\msig)/\lambda_{p+1}(\msig))\epsilon_{t}\norms{\vpperp^{\top}\vu_{t-1}}_{2}^{-1}} \notag \\
    &\geq \frac{(1+\rho)\psi_{t-1} - \omega_{t}\norms{\vpperp^{\top}\vu_{t-1}}_{2}^{-1}(1-\rho)^{-1}}{1 + \epsilon_{t}\norms{\vpperp^{\top}\vu_{t-1}}_{2}^{-1}(1-\rho)^{-1}} \notag \\
    &\geq \frac{(1+\rho)\psi_{t-1} - \frac{\rho}{5}\psi_{t-1}}{1 + \frac{\rho}{5}} \geq \bb{1 + \frac{3\rho}{5}}\psi_{t-1}.
}
Finally, using \eqref{eq:growth_3}, \eqref{eq:convert_potential_bounds_to_corr} and \eqref{eq:bound_term_1}, we have
\bas{
    \psi_{t} &\geq \frac{1}{\sqrt{1 + 2\sqrt{s/r}(1 + \sqrt{s/r})}}\cdot\bb{\bb{1 + \frac{3}{5}\rho}\psi_{t-1} - \sqrt{\frac{s}{r}}} \\
    &\geq \frac{1}{\sqrt{1 + 4\sqrt{s/r}}}\cdot\bb{\bb{1 + \frac{3}{5}\rho}\psi_{t-1} - \sqrt{\frac{s}{r}}} \\
    &\geq \bb{1 - 2\sqrt{\frac{s}{r}}}\cdot\bb{\bb{1 + \frac{3}{5}\rho}\psi_{t-1} - \sqrt{\frac{s}{r}}} \\
    &\geq \bb{1-\frac{\rho}{5}}\bb{\bb{1 + \frac{3}{5}\rho}\psi_{t-1} - \frac{\rho}{10}\psi_{t-1}} \geq \bb{1 + \frac{3\rho}{10}}\psi_{t-1}
}
completing the proof.
\end{proof} 

\begin{proposition}\label{prop:main_kpca}
Let $\delta \in \bb{0,1}$, $\epsilon \in (0,\frac{1}{2})$, and under Model~\ref{model:kpca}, assume that 
\bas{
    n = \Omega\Par{\bb{\frac{\sigma^{2}}{\lambda_{k}(\msig)}}^{2}s^{2}\bb{\frac{k^{5}}{\eps\beta^{5}} + \frac{k^{2}}{\eps^{2}\beta^{2}}}\log\bb{\frac{dT}{\delta}}\log\bb{\frac{s}{\eps}}}, \quad r =\Omega\Par{\frac{s^{2}k^{2}}{\beta^{2}}}, \quad T = \Omega\Par{\frac{k\log\Par{\frac s \eps}}{\beta}}
}
for appropriate constants. 
Then, in time $O\Par{nd^2}$, Algorithm~\ref{alg:tpm}
returns an $r$-sparse unit vector $\vu$ such that with probability at least $1-\delta$, $\inprod{\vu}{\msig \vu} \ge (1 - \eps - \beta)\lam_1(\msig)$.

\end{proposition}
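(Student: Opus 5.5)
We fix $p\in[k]$ and $\rho=\beta/k$ via Lemma~\ref{lem:gap_existence} and \eqref{eq:gap_existence_definition_p}, and set $\Delta \defeq \eps/2$. The argument has three steps: exhibit a good restart, run Lemma~\ref{lem:kpca_prog} until the potential reaches the target, and then transfer the guarantee to the Rayleigh quotient selection of Line~\ref{line:select}.

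\textbf{Good restart.} Consider the unit vector $\vv_\star \in \linspan(\mv_p)$ whose coordinates are supported in $S$, $|S|\le s$. We have $\sum_{j\in S}\norm{\mv_p^\top \ve_j}_2^2 = \normf{\mv_p}^2 = p$. Hence some $i\in S$ has $\norm{\mv_p^\top\ve_i}_2^2 \ge p/s$. For this $i$,
\[\psi_0^{(i)} = \frac{\norm{\mv_p^\top \ve_i}_2}{\sqrt{1-\norm{\mv_p^\top \ve_i}_2^2}} \ge \sqrt{\frac{p}{s-p}},\]
so the lower end of the range \eqref{eq:psi_range} is met at $t=0$.

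\textbf{Growth phase.} Apply Lemma~\ref{lem:kpca_prog} at each step $t\in[T]$, with failure probability $\delta/(2T)$, using fresh samples; the $\log(dT/\delta)$ term absorbs the union bound. As long as $\psi_{t-1}^{(i)}$ stays in the range \eqref{eq:psi_range}, it grows by a factor $1+\frac{3\rho}{10}$. It starts at $\ge\sqrt{p/(s-p)}\ge 1/\sqrt s$ and the target is $\sqrt{(1-\Delta)/\Delta}$. So after $O(\rho^{-1}\log(s/\eps)) = O(k\beta^{-1}\log(s/\eps))$ steps it leaves the range from above, matching $T$.

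The step I expect to be the main obstacle is showing that the potential stays above the target afterwards. Lemma~\ref{lem:kpca_prog} does not cover $\psi>\sqrt{(1-\Delta)/\Delta}$. I would redo its estimate there with $\Delta$ replaced by the current value of $\norms{\vpperp^\top\vu}_2^2$, and show that $\psi_t \ge \min\{(1+\frac{3\rho}{10})\psi_{t-1}, \sqrt{(1-\Delta)/\Delta}\}$. The error terms $\epsilon_t,\omega_t$ and the truncation loss $\sqrt{s/r}$ are then absolute, and the choices of $r$ and $B$ give a fixed point strictly above the target. The parameter settings support this:
\begin{itemize}
\item $r = \Omega(s^2k^2/\beta^2)$ is Lemma~\ref{lem:kpca_prog}'s requirement with $p\ge1$ and $\rho=\beta/k$.
\item $B=n/T$ matches via $\frac{s^2}{\Delta\rho^4} = \frac{s^2k^4}{\eps\beta^4}$, times $T\approx k\beta^{-1}\log(s/\eps)$. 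This produces the $k^5/(\eps\beta^5)$ term.
\item The extra $k^2/(\eps^2\beta^2)$ term is for the final selection step below.
\end{itemize}
We also use $\lambda_p(\msig)\ge\lambda_k(\msig)$ throughout. Thus the good index ends with $\norm{\mv_p^\top\vu_T^{(i)}}_2^2 \ge 1-\Delta$.

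\textbf{Conversion to a quadratic form and selection.} For the good index,
\[\inprod{\vu}{\msig\vu} \ge \lambda_p(\msig)\norm{\mv_p^\top\vu}_2^2 \ge (1-\beta)(1-\Delta)\lambda_1(\msig).\]
For the selection, note every candidate $\vu_T^{(j)}$ is $r$-sparse. By Corollary~\ref{cor:opnorm_err_sparse} on the full sample, with sparsity $r$, uniformly over all such vectors
\[|\inprod{\vu}{(\hmsig-\msig)\vu}| \le C\sigma^2\sqrt{r\log(d/\delta)/n} \le \frac{\eps}{4}\lambda_1(\msig).\]
This holds with $r\approx s^2k^2/\beta^2$ and $n \gtrsim (\sigma^2/\lambda_k)^2 s^2k^2\eps^{-2}\beta^{-2}\log(d/\delta)$, which is the second term in $n$. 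The empirical argmax therefore loses at most $\frac{\eps}{2}\lambda_1(\msig)$ relative to the good index, giving $\inprod{\vu}{\msig\vu}\ge(1-\eps-\beta)\lambda_1(\msig)$. The runtime bound is Lemma~\ref{lem:impl}.
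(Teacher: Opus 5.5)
Your proposal is correct and follows the paper's proof essentially step for step. It uses the same good restart $\is \in S$ with $\psi_0 \ge \sqrt{p/(s-p)}$, geometric growth of the potential via Lemma~\ref{lem:kpca_prog} over $T \approx \rho^{-1}\log(s/\eps)$ steps, and uniform $r$-sparse quadratic-form concentration (the source of the $k^2/(\eps^2\beta^2)$ term) to get through the Rayleigh-quotient selection; the only differences are cosmetic constants ($\Delta = \eps/2$ instead of the paper's $\eps/3$). Your explicit handling of the overshoot regime $\psi_{t-1} > \sqrt{(1-\Delta)/\Delta}$, where Lemma~\ref{lem:kpca_prog} no longer applies, covers a point the paper's proof passes over silently, and your sketch does go through: the bounds $\epsilon_t \lesssim \rho\sqrt{\Delta}$ and $\omega_t, \sqrt{s/r} \lesssim \rho$ keep the potential above the target once it gets there.
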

\begin{proof} 
The time complexity immediately follows from Lemma~\ref{lem:impl}. We now provide the convergence argument. 
Let $S := \bigcup_{i \in [k]}\supp(\vv_{i})$. Since $\mv_{p}$ has orthonormal columns, $\sum_{i \in S}\norm{\mv_{p}^{\top}\ve_{i}}_{2}^{2} = \normf{\mv_{p}}^{2} = p$, and hence there exists $\is \in S$ such that 
\bas{
 \norm{\mv^{\top}_{p}\ve_{\is}}_{2} \geq \sqrt{\frac{p}{s}} \implies  \psi_{0}^{(\is)} = \frac{\norm{\mv^{\top}_{p}\ve_{\is}}_{2}}{\norm{\mv^{\top}_{p,\perp}\ve_{\is}}_{2}} \geq \sqrt{\frac{p}{s-p}}.
}
Therefore, for the stated ranges of $r$ and $n$, using Lemma~\ref{lem:kpca_prog}, for $\Delta \gets \frac \epsilon 3$,  $T \geq \Omega(\frac{\log(s/\Delta)}{\rho})$, we have
\bas{
    \psi_{T} \geq \sqrt{\frac{1-\Delta}{\Delta}} \implies \norm{\vpperp^{\top}\vu_{T}}_{2}^{2} \leq \Delta
}
Thus, for the output $\vu^{(\is)} \defeq \vu^{(\is)}_T$,
\begin{equation}\label{eq:corr_1}\norm{\mv^{\top}_{p}\vu^{(i_{\star})}}_{2}^{2} \geq 1-\frac \eps 3 \end{equation} with probability at least $1-\frac \delta 2$ for the stated range of $n$. Denote this event by $\mathcal{E}$. Then, since $\vu^{(i_{\star})}$ is $r$-sparse,
\begin{equation}\label{eq:corr_2}\begin{aligned}
(\vu^{(i_{\star})})^{\top}\hmsig\vu^{(i_{\star})} &\geq (\vu^{(i_{\star})})^{\top}\msig\vu^{(i_{\star})} - \max_{F \subseteq [d], |F| \leq r}\normop{\hmsig_{F \times F} - \msig_{F \times F}} \\
    &= (\vu^{(i_{\star})})^{\top}(\mv_{p}\mlam\mv_{p}^{\top} + \mv_{p,\perp}\mlam_{p,\perp}\mv_{p,\perp}^{\top})\vu^{(i_{\star})} - \max_{F \subseteq [d], |F| \leq r}\normop{\hmsig_{F \times F} - \msig_{F \times F}} \\
    &\geq (\mv_{p}^{\top}\vu^{(i_{\star})})^{\top}\mlam_{p}(\mv_{p}^{\top}\vu^{(i_{\star})}) - \max_{F \subseteq [d], |F| \leq r}\normop{\hmsig_{F \times F} - \msig_{F \times F}} \\
    &\geq \lambda_{p}(\msig)\norm{\mv_{p}^{\top}\vu^{(i_{\star})}}_{2}^{2} - \max_{F \subseteq [d], |F| \leq r}\normop{\hmsig_{F \times F} - \msig_{F \times F}}.
\end{aligned}\end{equation}
Using Corollary~\ref{cor:opnorm_err_sparse}, for sufficiently large
\bas{
    n = \Omega\Par{\bb{\frac{\sigma^{2}}{\lambda_{p}(\msig)}}^{2}\frac{r}{\epsilon^{2}}\log\bb{\frac{d}{\delta}}},
}
with probability at least $1- \frac \delta 2$, $\max_{F \subseteq [d], |F| \leq r}\normsop{\hmsig_{F \times F} - \msig_{F \times F}} \leq \frac \eps 3 \lambda_{p}(\msig)$. Condition on this and $\calE$ in the rest of the proof, giving the failure probability. Under these events, \eqref{eq:corr_1} and \eqref{eq:corr_2} yield
\[
    (\vu^{(i_{\star})})^{\top}\hmsig\vu^{(i_{\star})} \geq \Par{1 - \frac {2\eps}{3}}\lambda_{p}(\msig).
\]
Finally, let $j := \argmax_{i \in [d]} (\vu^{(i)})^{\top}\hmsig\vu^{(i)}$ be the selected index on Line~\ref{line:select}. Then,
\ba{(\vu^{(j)})^{\top}\hmsig\vu^{(j)} \geq (\vu^{(i_{\star})})^{\top}\hmsig\vu^{(i_{\star})} \geq \Par{1 - \frac{2\eps}{3}}\lambda_{p}(\msig) \geq  \Par{1 - \frac{2\eps}{3}-\beta}\lambda_{1}(\msig) \label{eq:quad_form_lower_bound}
}
where in the last inequality we used the definition of $p$ in \eqref{eq:gap_existence_definition_p}. 
The result follows from $r$-sparsity of $\vu^{(j)}$, so that $(\vu^{(j)})^{\top}\msig\vu^{(j)} \ge (\vu^{(j)})^{\top}\hmsig\vu^{(j)} - \frac{\eps}{3}\lam_p(\msig)$ under our earlier conditioning event.
\end{proof}

Proposition~\ref{prop:main_kpca} gives a quadratic form guarantee on the selected output in Line~\ref{line:select}, because it needs to handle the error of this selection step. For convenience, in our final theorem statement, we convert this to a direct correlation guarantee with respect to the leading eigenspace $\mv$.

\begin{theorem}\label{thm:cpca_guarantee} Let $\delta \in \bb{0,1}$, $\Delta \in (0,1)$, and under Model~\ref{model:kpca}, assume that 
\begin{gather*}n = \Omega\Par{\bb{\frac{\sigma^{2}}{\lambda_{k}(\msig)}}^{2}s^{2}\bb{\frac{k^{5}}{\Delta^{6}\gamma^{6}} + \frac{k^{2}}{\Delta^{2}\gamma^{6}}}\log\bb{\frac{dT}{\delta}}\log\bb{\frac{s}{\Delta\gamma}}}, \\ r =\Omega\Par{\frac{s^{2}k^{2}}{\Delta^2\gamma^2}}, \qquad T = \Omega\Par{\frac{k\log\Par{\frac{s}{\Delta\gamma}}}{\Delta\gamma}},\end{gather*}
for appropriate constants. 
Then, in time $O\Par{nd^2}$, Algorithm~\ref{alg:tpm} 
returns an $r$-sparse unit vector $\vu$ such that with probability at least $1-\delta$, $\norm{\mv^\top \vu}_2^2 \ge 1 - \Delta$.
\end{theorem}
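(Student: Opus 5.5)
Our plan is to derive Theorem~\ref{thm:cpca_guarantee} from Proposition~\ref{prop:main_kpca} by converting its quadratic-form guarantee on the selected output $\vu$ into a subspace-correlation guarantee with respect to the full top-$k$ eigenspace $\mv$. Write $\alpha \defeq \norm{\mv^\top \vu}_2^2$, so that $\norm{\vperp^\top \vu}_2^2 = 1 - \alpha$ because $\vu$ is a unit vector.

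We first bound the quadratic form from above. Decomposing $\msig = \mv\mlam\mv^\top + \vperp\mlam_\perp\vperp^\top$ gives
\[\inprod{\vu}{\msig\vu} \le \lam_1(\msig)\,\alpha + \lam_{k+1}(\msig)(1-\alpha) \le \lam_1(\msig)\Par{\alpha + \frac{\lam_{k+1}(\msig)}{\lam_1(\msig)}(1-\alpha)}.\]
The gap assumption of Model~\ref{model:kpca} gives $\lam_{k+1}(\msig) \le (1-\gamma)\lam_k(\msig) \le (1-\gamma)\lam_1(\msig)$. Hence $\inprod{\vu}{\msig\vu} \le \lam_1(\msig)\bigl(1 - \gamma(1-\alpha)\bigr)$.

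We then combine this with the lower bound $\inprod{\vu}{\msig\vu} \ge (1-\eps-\beta)\lam_1(\msig)$ from Proposition~\ref{prop:main_kpca}. This yields $\gamma(1-\alpha) \le \eps + \beta$, that is, $1 - \alpha \le (\eps+\beta)/\gamma$. Choosing $\eps = \beta = \frac{\Delta\gamma}{2}$ gives $\alpha \ge 1 - \Delta$. This choice is admissible: $\beta \le \gamma$ is required by Lemma~\ref{lem:gap_existence}, and $\eps < \frac12$ holds since $\Delta, \gamma < 1$.

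The remaining step is routine bookkeeping: substitute $\eps = \beta = \Theta(\Delta\gamma)$ into the parameter requirements of Proposition~\ref{prop:main_kpca}.
\begin{itemize}
\item For $r$, the requirement $\Omega(s^2k^2/\beta^2)$ becomes $\Omega(s^2k^2/(\Delta^2\gamma^2))$.
\item For $T$, the requirement $\Omega(k\log(s/\eps)/\beta)$ becomes $\Omega\bigl(k\log(\frac{s}{\Delta\gamma})/(\Delta\gamma)\bigr)$.
\item For $n$, the term $\frac{k^5}{\eps\beta^5}$ becomes $\frac{k^5}{\Delta^6\gamma^6}$. The term $\frac{k^2}{\eps^2\beta^2}$ becomes $\frac{k^2}{\Delta^4\gamma^4}$, which is dominated by $\frac{k^2}{\Delta^2\gamma^6}$ after an appropriate consolidation, or is in any case covered by the first term. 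Finally, $\log(s/\eps) = O(\log(\frac{s}{\Delta\gamma}))$.
\end{itemize}
The runtime $O(nd^2)$ and the $r$-sparsity of $\vu$ carry over directly from the proposition, as does the failure probability $\delta$.

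The only potential subtlety is this exponent bookkeeping in $n$: we must check that the stated bound dominates every substituted term. If the second term does not match as written, we would absorb it using $\Delta,\gamma < 1$ and $k \ge 1$, since then $\frac{k^2}{\Delta^4\gamma^4} \le \frac{k^5}{\Delta^6\gamma^6}$. The conceptual step, converting the quadratic form into correlation via the gap $\gamma$, is a single line.
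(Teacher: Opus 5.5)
Your proposal is correct and matches the paper's proof: you set $\epsilon = \beta = \frac{\Delta\gamma}{2}$ in Proposition~\ref{prop:main_kpca}, then use $\lam_{k+1}(\msig) \le (1-\gamma)\lam_1(\msig)$ to turn the quadratic-form lower bound into $\norm{\mv^\top\vu}_2^2 \ge 1-\Delta$. One caveat on the bookkeeping: the substituted term $\frac{k^2}{\Delta^4\gamma^4}$ is dominated by $\frac{k^2}{\Delta^2\gamma^6}$ only when $\gamma \le \Delta$, so it is your fallback (absorbing it into $\frac{k^5}{\Delta^6\gamma^6}$) that actually covers it.
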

\begin{proof} Let $\eps = \beta = \frac{\Delta \gamma}{2}$, which matches the requirements of Lemma~\ref{lem:gap_existence} and Proposition~\ref{prop:main_kpca}, so that
\ba{
\inner{\vu}{\msig\vu} \geq (1-\Delta\gamma)\lambda_{1}(\msig), \label{eq:quad_form_bound} 
}
with probability $\ge 1 - \delta$ for the stated ranges of $n$, $r$.
Then for $\alpha := \norm{\vperp^{\top}\vu}_{2}$, 
\ba{
\inner{\vu}{\msig\vu} 
&= \vu^{\top}(\mv\mlam\mv^{\top} + \mv_{\perp}\mlam_{\perp}\mv_{\perp}^{\top})\vu \notag \\
&\leq \lambda_{1}(\msig)(1-\alpha^{2}) + \lambda_{k+1}(\msig)\alpha^{2} \notag \\
&\leq \lambda_{1}(\msig)(1-\alpha^{2}) + \lambda_{1}(\msig)\alpha^{2}\bb{1-\gamma} \notag \\
&= \lambda_{1}(\msig) + \lambda_{1}(\msig)\bb{\alpha^{2}\bb{1-\gamma} - \alpha^{2}} = \lambda_{1}(\msig) - \lambda_{1}(\msig)\alpha^{2}\gamma.\label{eq:cpca_conversion}
}
From \eqref{eq:quad_form_bound} and \eqref{eq:cpca_conversion}, we conclude the desired bound $\alpha^2 \le \Delta$.
\end{proof}

It is often convenient to convert the output candidate sparse principal component $\vu$ into a proper estimate, by truncating it to be exactly $s$-sparse. We give a helper result showing that this sparse projection step only worsens the final bound by a constant factor.

\begin{lemma}\label{lem:proper_hypotheses} In the setting of Model~\ref{model:kpca}, let $\vu \in \R^{d}$ be an $r$-sparse unit vector with $r \geq s$ satisfying $\norm{\mv^{\top}\vu}_{2}^{2} \geq 1-\Delta$ for $\Delta \in (0,\frac{1}{2})$. Then, 
\bas{
    \norm{\mv^{\top}\frac{\Top_{s}(\vu)}{\norm{\Top_{s}(\vu)}_{2}}}_{2}^{2} \geq 1-5\Delta.
}
\end{lemma}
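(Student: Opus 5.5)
The plan is to apply Lemma~\ref{lem:trunc_cor_subspace} with $\mr \gets \mv$ and truncation level $r \gets s$, and then to check that normalizing $\Top_s(\vu)$ can only help. The lemma applies because $\mv$ is orthonormal, the union of the supports of its columns is $S$ with $|S| \le s$, and $r = s \ge s$ is allowed. Write $\alpha \defeq \norm{\mv^\top \vu}_2$, so by hypothesis $\alpha^2 \ge 1 - \Delta > \frac 1 2$. With $\sqrt{s/r} = 1$, the lemma gives
\[\norm{\mv^\top \Top_s(\vu)}_2 \ge \alpha - x, \qquad x \defeq \min\Brace{\sqrt{1-\alpha^2},\, 2(1-\alpha^2)}.\]

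Next I would handle the normalization. Since $\Top_s(\vu)$ only zeroes out coordinates of the unit vector $\vu$, we have $\norm{\Top_s(\vu)}_2 \le 1$. If $\Top_s(\vu) = \0_d$ then the lower bound forces $\alpha \le x \le \sqrt{1-\alpha^2}$, contradicting $\alpha^2 > \frac 1 2$, so the quotient is well defined. Dividing by $\norm{\Top_s(\vu)}_2 \le 1$ therefore only increases the left-hand side, and it suffices to show $(\alpha - x)^2 \ge 1 - 5\Delta$, provided $\alpha - x \ge 0$.

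Nonnegativity comes from the first branch of the minimum: $x \le \sqrt{1-\alpha^2} \le \alpha$, because $\alpha^2 \ge \frac 1 2$, which is exactly where $\Delta < \frac 1 2$ is used. The quantitative bound comes from the second branch. Since $\alpha - x \ge 0$, we have $(\alpha - x)^2 \ge \alpha^2 - 2\alpha x$. Using $x \le 2(1-\alpha^2) \le 2\Delta$ and $\alpha \le 1$, this gives
\[(\alpha - x)^2 \ge \alpha^2 - 4\Delta \ge 1 - \Delta - 4\Delta = 1 - 5\Delta.\]

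The main subtlety is that neither branch of the minimum suffices alone. The linear branch $2(1-\alpha^2)$ gives the $5\Delta$ constant, while the square-root branch is what guarantees $\alpha - x \ge 0$, so that squaring the lower bound is legitimate. Checking that the hypothesis $\Delta < \frac 1 2$ is exactly strong enough for the latter is the step I expect to need care. The rest is routine bookkeeping.
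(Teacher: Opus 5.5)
Your proposal is correct and follows the paper's proof. Both apply Lemma~\ref{lem:trunc_cor_subspace} with truncation level $s$, use the branch $2(1-\alpha^2)\le 2\Delta$, and then normalize using $\norm{\Top_s(\vu)}_2\le 1$. Your use of the $\sqrt{1-\alpha^2}$ branch to certify $\alpha - x\ge 0$ before squaring is slightly more careful than the paper, which replaces that branch by $1$ and asserts $\alpha-2\Delta\ge\sqrt{1-5\Delta}$; that step implicitly requires $\Delta\le\frac15$, the remaining range being trivial.
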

\begin{proof}
Let $\alpha:=\|\mv^\top \vu\|_2\in[0,1]$, so $\alpha^2\ge 1-\Delta$.
Applying Lemma~\ref{lem:trunc_cor_subspace} to the unit vector $\vu$:
\bas{
\|\mv^\top \Top_s(\vu)\|_2
&\ge
\|\mv^\top \vu\|_2 - \sqrt{\frac{s}{s}}\min\Brace{1,\Par{1+\sqrt{\frac{s}{s}}}\Par{1-\|\mv^\top \vu\|_2^2}}\\
&=
\alpha-\min\Brace{1,2(1-\alpha^2)}.
}
Since $\Delta\in(0,\frac12)$ and $1-\alpha^2\le \Delta$, we have $2(1-\alpha^2)\le 2\Delta<1$, hence
\bas{
\|\mv^\top \Top_s(\vu)\|_2 \ge \alpha-2(1-\alpha^2)\ge \alpha-2\Delta \ge \sqrt{1 - 5\Delta}.
}
The result follows after normalizing by $\norm{\Top_s(\vu)}_2 \le 1$ and then squaring.
\end{proof}

\subsection{Barrier for sparse PCA deflation methods}\label{ssec:deflation_barrier}

To solve the $k$-sparse PCA problem in Model~\ref{model:kpca}, a natural strategy is to use a \emph{deflation method} that repeatedly projects out learned sparse components, and recursively calls a $1$-sparse PCA oracle on the remaining ``deflated'' matrix. We give pseudocode for such a method in Algorithm~\ref{alg:deflation}.

\begin{algorithm2e}
\DontPrintSemicolon
\caption{\label{alg:deflation}$\kSPCA(\msig, k, h,\pcaoracle)$}
\textbf{Input:} $\msig \in \PSD^{d \times d}$, eigenspace dimension $k \in \N$, sparsity parameter $h$, $1$-SPCA oracle $\pcaoracle$\;
$\mpp_{0} \leftarrow \id_{d}$\;
\For{$i \in [k]$}{
    $\mm_{i} \leftarrow \mpp_{i-1}\msig\mpp_{i-1}$\;
    $\vu_{i} \leftarrow \pcaoracle\bb{\mm_{i}, \mpp_{i-1}}$\;\label{line:spca_oracle}
    $\mpp_{i} \leftarrow \mpp_{i-1} - \vu_{i}\vu_{i}^{\top}$
}
\Return{$\muu \leftarrow \{\vu_i\}_{i \in [k]} \in \R^{d \times k}$}
\end{algorithm2e}

Such a strategy was popularized by \cite{mackey2008deflation} (e.g., Algorithm 1 of the paper is essentially Algorithm~\ref{alg:deflation}), and its approximation tolerance was analyzed rigorously by \cite{jambulapati2024black} in the dense setting ($s = d$). While we do not fully specify the guarantees of the sparse $1$-PCA oracle on Line~\ref{line:spca_oracle}, at minimum, the following requirements are natural for the recursion to make sense.

\begin{enumerate}
    \item $\vu_i$ is a unit vector in $\linspan(\mpp_{i - 1})$.
    \item $\vu_i$ correlates with the top eigenspace of $\mpp_{i - 1} \msig \mpp_{i-1}$ (as it solves a residual $1$-PCA problem).
    \item $\vu_i$ should itself be sparse.
\end{enumerate}

Under these properties, one may hope that the residual problem that $\vu_i$ is solving is itself a sparse PCA problem in the sense of Model~\ref{model:kpca}, so we can apply an oracle with the guarantees of Theorem~\ref{thm:cpca_guarantee} and iterate. This would require the top eigenspace of the deflated matrix $\mpp_{i - 1} \msig \mpp_{i - 1}$ to itself be in the form required by Model~\ref{model:kpca}, i.e., it should be sparsely supported. Because the original top eigenspace of $\msig$ is sparse, and we only project out sparse components that are well-aligned with the top eigenspace of $\msig$, it is plausible that this sparsity property persists.

We demonstrate a major barrier to this strategy, by showing that even when $s = 2$ in Model~\ref{model:kpca}, and the gap parameter $\gamma$ and correlation with the top eigenvector $\Delta$ are allowed to vary arbitrarily, the top eigenspace of the residual matrix can be made arbitrarily dense in one iteration.

\begin{lemma}\label{lem:deflation_barrier}
Let $(\Delta, \gamma) \in (0, 1)^2$ and $d \ge 4$. There exists $\msig \in \PSD^{d \times d}$ such that $(1 - \gamma)\lam_2(\msig) \ge \lam_3(\msig)$, and $\vv_1, \vv_2$, the top two eigenvectors of $\msig$, are both supported on a $2$-sparse set, yet for a $3$-sparse unit $\vu \in \R^d$ with $\inprod{\vu}{\vv_1}^2 \ge 1 - \Delta$, if $\mpp \defeq \id_d - \vu\vu^\top$, then $|\supp(\vv_1(\mpp\msig\mpp))| =d$.
\end{lemma}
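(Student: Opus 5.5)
The plan is an explicit block construction in which the deflated eigen-equation can be solved in closed form. Write $R \defeq \{3, \ldots, d\}$ and let $\vw \in \R^{|R|}$ be a unit vector with all entries nonzero, e.g.\ $\vw = \frac{1}{\sqrt{d-2}}\1_{d-2}$. Fix $\lam_1 > \lam_2 > 0$ and $0 < \nu < \mu \le (1-\gamma)\lam_2$. Set
\[\msig \defeq \lam_1 \ve_1\ve_1^\top + \lam_2 \ve_2\ve_2^\top + \begin{pmatrix} \mzero_{2\times 2} & \mzero \\ \mzero & \msig_R\end{pmatrix}, \qquad \msig_R \defeq \mu \vw\vw^\top + \nu(\id_{d-2} - \vw\vw^\top).\]
Then $\vv_1 = \ve_1$ and $\vv_2 = \ve_2$ are $2$-sparse, and $\lam_3(\msig) = \mu \le (1-\gamma)\lam_2(\msig)$. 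For the deflation direction I take the $3$-sparse unit vector $\vu = u_1\ve_1 + u_2\ve_2 + u_3\ve_3$ with $u_2, u_3 \neq 0$ small enough that $u_1^2 \ge 1-\Delta$. The point is that the component $u_3$ couples coordinate $3$ into $R$, where $\ve_3$ is not an eigenvector of $\msig_R$, and this spreads mass over all of $R$.

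Let $\vx$ be any unit top eigenvector of $\mpp\msig\mpp$ with eigenvalue $\lam$. By interlacing, $\lam \ge \lam_2(\msig) = \lam_2 > \mu$. Since $\lam > 0$, the vector $\vx$ lies in $\linspan(\mpp)$, so $\mpp\vx = \vx$. The eigen-equation $\mpp\msig\vx = \lam\vx$ then becomes
\[\msig\vx = \lam\vx + c\,\vu, \qquad c \defeq \vu^\top\msig\vx.\]

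First I rule out $c = 0$. If $c = 0$, then $\vx$ is an eigenvector of $\msig$ with eigenvalue $\lam > \mu$, so $\vx \in \{\pm\ve_1, \pm\ve_2\}$. This contradicts $\vx \perp \vu$, because $u_1, u_2 \neq 0$.

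Next I read off coordinates. Coordinates $1$ and $2$ give $(\lam_i - \lam)x_i = c\,u_i$. Since $c u_i \neq 0$, this forces $\lam \neq \lam_i$ and $x_i = c u_i/(\lam_i - \lam) \neq 0$ for $i = 1, 2$. On the block $R$, using $\lam > \mu \ge \normop{\msig_R}$, we get
\[\vx_R = c\,u_3(\msig_R - \lam\id)^{-1}\ve_3 = c\,u_3\Par{\frac{w_3}{\mu - \lam}\vw + \frac{1}{\nu - \lam}\Par{\ve_3 - w_3\vw}}.\]

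For $j \ge 4$ this gives
\[x_j = c\,u_3 w_3 w_j\Par{\frac{1}{\mu-\lam} - \frac{1}{\nu-\lam}},\]
which is nonzero because $\mu \ne \nu$ and all entries of $\vw$ are nonzero. For $j = 3$ it gives
\[x_3 = c\,u_3\Par{\frac{w_3^2}{\mu-\lam} + \frac{1-w_3^2}{\nu-\lam}},\]
which is a sum of two strictly negative terms, hence nonzero. Therefore $|\supp(\vx)| = d$.

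I expect the main obstacle to be the bookkeeping that makes $\vv_1(\mpp\msig\mpp)$ well defined and controls its eigenvalue. Specifically, I must ensure $\lam$ exceeds $\normop{\msig_R}$ and differs from $\lam_1, \lam_2$, which the $c \neq 0$ argument handles. I also want to note that the argument applies to every top eigenvector, so multiplicity does not matter. Everything else is the resolvent computation above.
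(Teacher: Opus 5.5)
Your construction and argument are correct, and they take a genuinely different route from the paper.

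\textbf{The paper's route.} The paper uses a rank-$3$ covariance $\msig = \frac{1}{\Delta}\vv_1\vv_1^\top + \vv_2\vv_2^\top + (1-\gamma)\vw\vw^\top$. Here $\vv_{1,2} = \frac{1}{\sqrt 2}(\ve_1 \pm \ve_2)$, and the third eigenvector $\vw = \frac{1}{\sqrt 2}(\ve_3 + \vq)$ straddles $\ve_3$ and a dense vector $\vq$ on coordinates $4,\ldots,d$. It deflates by $\vu = \sqrt{1-\Delta}\vv_1 + \sqrt{\Delta}\ve_3$, which is orthogonal to $\vv_2$. It then notes that $\mpp\msig\mpp$ has invariant subspaces $\sspan\{\vv_2\}$ and $\sspan\{\vt,\vq\}$, and computes the $2\times 2$ compression on the latter. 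Density is read off from the nonzero off-diagonal entry. Because $\vv_2$ survives deflation with eigenvalue $1$, the top eigenvalue $\frac{1}{\Delta}$ is tuned so that $\vt^\top\msig\vt > 1$.

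\textbf{Your route.} You work with the secular form $\msig\vx = \lam\vx + c\vu$ of the deflated eigenproblem and the resolvent of the non-isotropic block $\msig_R$. This avoids any invariant-subspace bookkeeping. Density comes from $\ve_3$ not being an eigenvector of $\msig_R$, together with $\vw$ being dense. Letting $\vu$ leak into $\ve_2$ (i.e., $u_2 \neq 0$) is what makes interlacing alone give $\lam \ge \lam_2 > \mu$ and makes the $c \neq 0$ argument go through, with no tuning of $\lam_1$.

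\textbf{Trade-offs.} The paper's version is fully explicit, and its $\vu$ has no component along $\vv_2$, so the oracle's error lies entirely outside the top-$2$ eigenspace. That is a slightly more stringent counterexample. If you insisted on $u_2 = 0$, then $\ve_2$ would remain an eigenvector of $\mpp\msig\mpp$ with eigenvalue $\lam_2$, and you would need a condition like $\lam_1 u_3^2 > \lam_2$ to beat it. This is essentially the paper's tuning.

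Your version is more general and robust: it works for any $\lam_1 > \lam_2$, any small nonzero $u_2, u_3$, and a positive definite $\msig$. Your formulas also show that any eigenvector with eigenvalue $\lam$ equals $c$ times a fixed vector. So the top eigenspace is one-dimensional, and $\vv_1(\mpp\msig\mpp)$ is unambiguously dense.
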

\begin{proof}
Throughout this proof, let $\vv_1 = \frac 1 {\sqrt 2}(\ve_1 + \ve_2)$, $\vv_2 = \frac 1 {\sqrt 2}(\ve_1 - \ve_2)$, and define
\[\vq \defeq \frac 1 {\sqrt{d-3}} \sum_{i = 4}^d \ve_i,\quad \vw \defeq \frac 1 {\sqrt 2}\Par{\ve_3 + \vq}.\]
We also define
\[\msig \defeq \frac 1 \Delta \vv_1\vv_1^\top + \vv_2\vv_2^\top + (1 - \gamma) \vw\vw^\top,\quad \mpp \defeq \id_d - \vu\vu^\top \text{ where } \vu \defeq \sqrt{1 - \Delta}\vv_1 + \sqrt{\Delta}\ve_3.\]
It is clear that all of the conditions of the lemma statement are now met, and what remains to be shown is that the support of $\vv_1(\mpp\msig\mpp)$ has large cardinality. For convenience we also let
\[\vt \defeq \sqrt{\Delta} \vv_1 - \sqrt{1 - \Delta} \ve_3.\]
Now observe that
\[\mpp \msig \mpp = \frac 1 \Delta \mpp \vv_1\vv_1^\top \mpp + \vv_2\vv_2^\top + (1 - \gamma) \mpp \vw\vw^\top \mpp,\]
so that $\linspan(\mpp\msig\mpp) = \sspan\{\mpp \vv_1, \vv_2, \mpp \vw\}$. It is straightforward to check that
\[\mpp \vv_1 = \sqrt{\Delta} \vt,\quad \mpp \vw = \vw - \sqrt{\frac \Delta 2}\vu = \frac 1 {\sqrt 2}\Par{\vq - \sqrt{1 - \Delta}\vt},  \]
and that $\vt \perp \vq$ are unit vectors, so equivalently $\linspan(\mpp\msig\mpp) = \sspan\{\vt, \vv_2, \vq\}$. This same calculation also shows that $\sspan\{\vt, \vq\}$ and $\sspan\{\vv_2\}$ are invariant subspaces with respect to $\mpp\msig\mpp$. Thus, the top eigenvector is either $\vv_2$, or a vector in $\sspan\{\vt, \vq\}$.

We next compute the compression of $\mpp \msig \mpp$ in the basis $\sspan\{\vt, \vq\}$: because $\vt \perp \vu$ and $\vq \perp \vu$,
\begin{align*}\vt^\top \mpp \msig \mpp \vt &= \vt^\top \msig \vt = \frac 1 \Delta \inprod{\vv_1}{\vt}^2 + (1 - \gamma)\inprod{\vw}{\vt}^2 =1 + \frac{(1-\gamma)(1-\Delta)}{2} ,\\
\vt^\top \mpp\msig\mpp \vq &= \vt^\top \msig \vq = (1-\gamma)\vt^\top\vw\vw^\top\vq  = -\frac{(1-\gamma)\sqrt{1-\Delta}}{2},\\
\vq^\top \mpp \msig \mpp \vq &= \vq^\top \msig \vq = (1-\gamma) \inprod{\vw}{\vq}^2 = \frac{1-\gamma}{2}.\end{align*}
Thus, the compression of $\mpp \msig \mpp$ in this invariant subspace is
\[\mc \defeq \begin{pmatrix} 
1 + \frac{(1-\gamma)(1-\Delta)}{2} & -\frac{(1-\gamma)\sqrt{1-\Delta}}{2}\\
-\frac{(1-\gamma)\sqrt{1-\Delta}}{2} & \frac{1-\gamma}{2}
\end{pmatrix}.\]
The top eigenvector of $\mpp\msig\mpp$ in this subspace has nonzero mass on both the $\vt$ and $\vq$ directions (because $\mc$ is not diagonal), so it has density $d$, and its corresponding eigenvalue is clearly $> 1$ as witnessed by the top-left entry, so it is also the top eigenvector overall (as $\mpp \msig \mpp \vv_2 = \vv_2$).
\end{proof}

Lemma~\ref{lem:deflation_barrier} is a strong barrier towards provable guarantees for strategies such as Algorithm~\ref{alg:deflation}, as it holds even if the correlation $\Delta$ and gap $\gamma$ are arbitrarily small and are allowed to depend on each other. Moreover, it holds even in the first iteration, and when the sparse $1$-PCA oracle has strong correlation to the top eigenvector (not just the sparse eigenspace $\sspan(\{\vv_1, \vv_2\})$. We leave it as an important open problem to design a provable combinatorial method for solving sparse $k$-PCA (e.g., under Model~\ref{model:kpca}) in full generality, whether via a modified deflation method or otherwise.

\section{Experiments}
\label{sec:experiments}
This section presents a series of empirical tests over both synthetic and real-world datasets to compare the efficiency and performance of our new method with heuristic and SDP-based methods.

\textbf{Sample-splitting ablations.} In reporting all other experiments than those in the current discussion, whenever we describe an algorithm as $\TPM$, we formally mean a variant of Algorithm~\ref{alg:tpm} that we call $\TPMf$ for disambiguation. This variant reuses the full-sample empirical covariance $\hmsig$ at every iteration, as opposed to Algorithm~\ref{alg:tpm} which splits the sample into independent batches for use in each iteration. We similarly call the variant described in Algorithm~\ref{alg:tpm} $\TPMd$.

We choose to report our experimental results for $\TPMf$ because we believe this better reflects the use of our strategy in practical sparse PCA instances. The sample complexity of $\TPMf$ automatically saves a $T$ factor over the corresponding $\TPMd$ counterpart, which discards samples at every iteration. While our proofs only apply to $\TPMd$ due to our use of independent sub-exponential concentration via Fact~\ref{fact:bilinear_err}, here we conduct an ablation study comparing the effect of sample-splitting on a challenging sparse PCA instance.

We specifically perform our ablation on the $\TC$ counterexample construction (Lemma~\ref{lem:friends_lower_bound} in Section~\ref{sec:counterexamples}). We compare two variants that are identical in all algorithmic choices: they use the same truncation level (set to $r=5s$), the same set of restarts (all standard basis vectors $\{\ve_i\}_{i\in[d]}$, processed in restart batches), the same number of iterations $T$, and the same final selection rule based on maximizing the Rayleigh objective. The only difference is how the empirical covariance is formed across iterations. We report the resulting $\sin^{2}$ error as a function of $T$. As Figure~\ref{fig:spca_ablations} shows, both algorithms achieve similar $\sin^2$ error, which justifies the use of $\TPM$-$\mathsf{full}$ for our experiments. We believe that our theoretical analyses for the batched variant, $\TPM$-$\mathsf{disjoint}$, could potentially be extended to $\TPM$-$\mathsf{full}$, which we leave as an interesting open direction.  

\begin{figure*}[!hbt]
  \centering
  \begin{subfigure}[t]{0.42\textwidth}
    \centering
    \includegraphics[width=\linewidth]{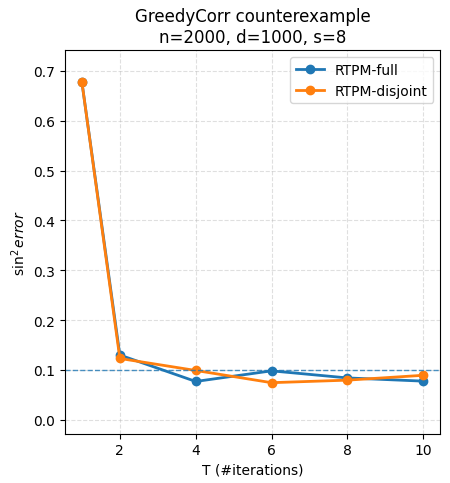}
  \end{subfigure}
  \begin{subfigure}[t]{0.42\textwidth}
    \centering
    \includegraphics[width=\linewidth]{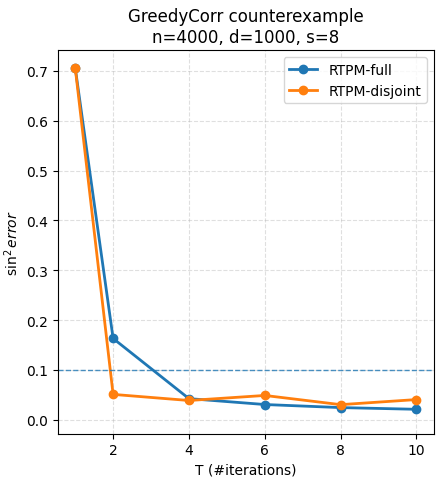}
  \end{subfigure}

  \begin{subfigure}[t]{0.42\textwidth}
    \centering
    \includegraphics[width=\linewidth]{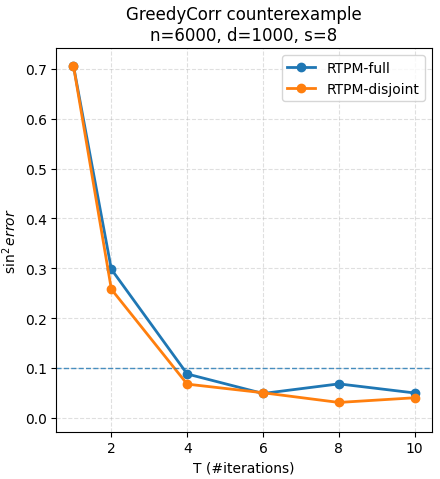}
  \end{subfigure}
  \begin{subfigure}[t]{0.42\textwidth}
    \centering
    \includegraphics[width=\linewidth]{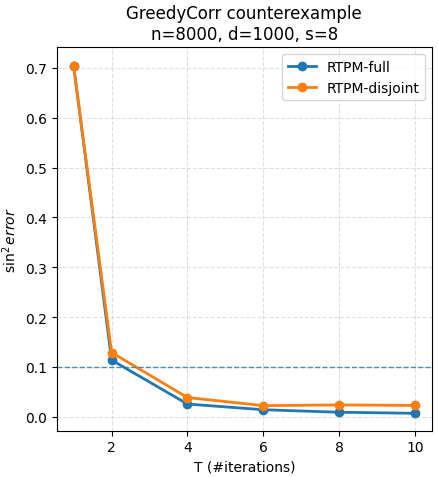}
  \end{subfigure}
  \caption{Comparison of performance of full-batch $\TPM$ using all samples together as compared to Algorithm~\ref{alg:tpm}, with varying choices of number of iterations, $T$.}
  \label{fig:spca_ablations}
\end{figure*}

\textbf{Runtime-accuracy tradeoff.}
We empirically compare the runtime efficiency of our proposed algorithm ($\TPM$, Algorithm~\ref{alg:tpm}) with the SDP-based method and other heuristic combinatorial algorithms discussed in Section~\ref{sec:counterexamples}. For the SDP-based approach, we utilize the Fantope Projection and Selection (FPS) method from \cite{VuCLR13}, which implements an alternating direction method of multipliers (ADMM)-based algorithm, although without provable convergence guarantees. State-of-the-art SDP solvers with provable convergence guarantees take time $\Omega(d^{2\omega})$ in theory \cite{SDP_huang2022solving}, and practical SDP solvers with provable guarantees have even larger runtimes, making them infeasible even for moderate dimensionalities, as the sparse PCA SDP has $\Theta(d^2)$ constraints.

The combinatorial algorithms we compare with are $\DT$ (Algorithm~\ref{alg:diagthresh}), $\CT$ (Algorithm~\ref{alg:covthresh}) and $\TC$ (Algorithm~\ref{alg:topcorr}).  Some methods we consider involve tunable parameters, such as the number of iterations, which naturally trades off runtime and performance. Other methods are one-shot and not tunable. For a fair comparison, for tunable algorithms, we report the trajectory of the squared cosine correlation versus cumulative runtime as the number of iterations increases. Non-tunable algorithms appear as single scattered points under the same configuration.

We generate $n$ samples according to Model~\ref{model:spiked_id} and the counterexample in Lemma~\ref{lem:friends_lower_bound}, and
test the runtime for all the methods above. The latter experiment is to illustrate runtime behavior on a non-spiked sparse PCA model as defined in Model~\ref{model:general}. For a meaningful comparison, we choose parameters such that the targeted heuristic methods do not fail.

As seen in Figure~\ref{fig:placeholder}, our $\TPM$ is the only algorithm that consistently achieves high correlation with the target vector across all sample sizes and covariance instances.

\begin{figure}[ht]
    \centering
    \includegraphics[width=\linewidth]{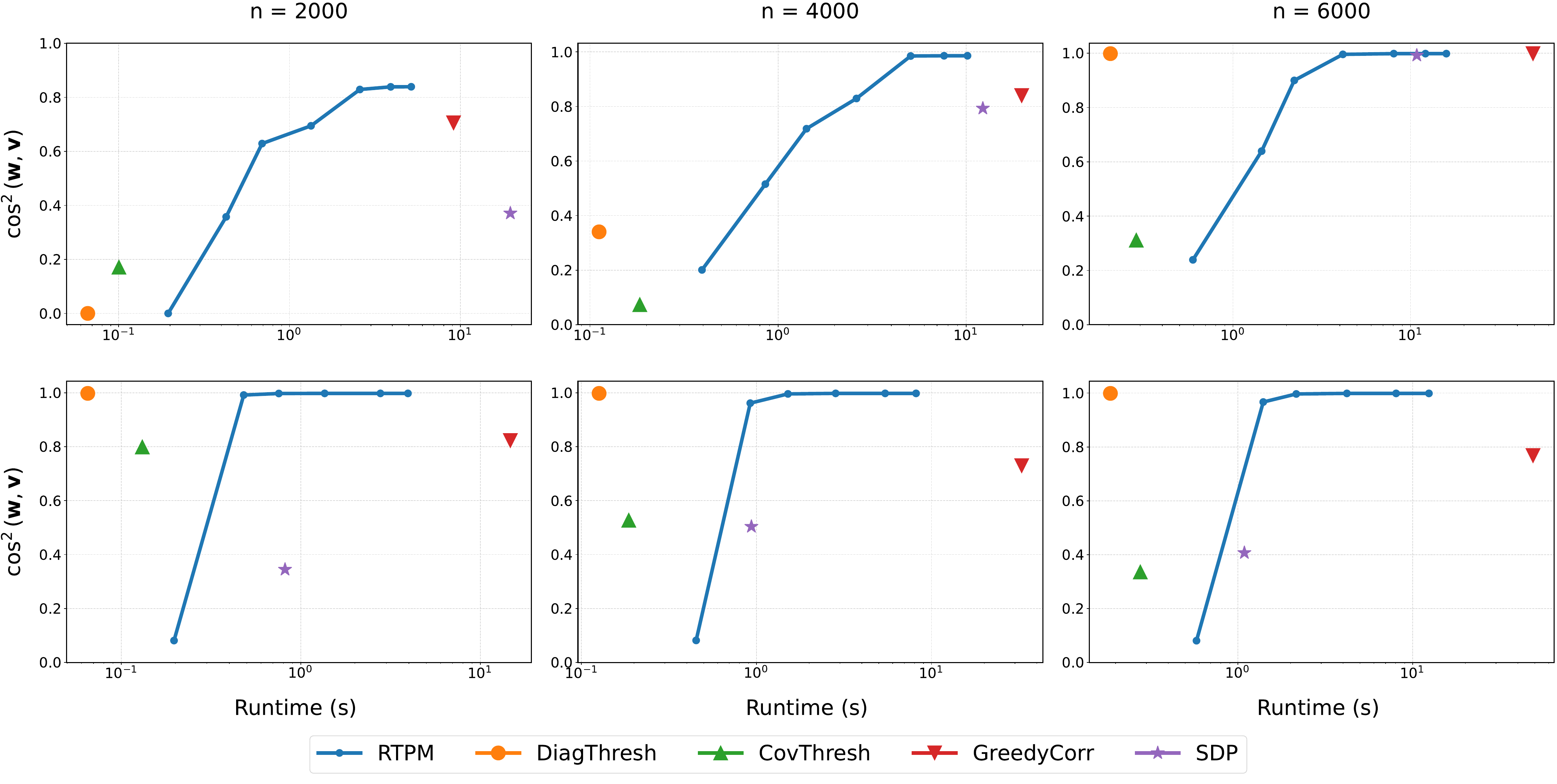}
    \caption{\small Runtime versus accuracy for $\TPM$, heuristic and SDP-based methods. First row: spiked identity in Model~\ref{model:spiked_id} with $d = 1000$ and $s = 8$. Second row: counterexample against Greedy Correlation presented in Lemma~\ref{lem:friends_lower_bound} with $d = 1000$, $s = 8$, and $\lambda_1(\msig) = 1.2$, $\lambda_2(\msig) = 0.8$. RTPM runs with $r = s$ and the relaxation coefficient for the SDP-based method is set as suggested in \cite{VuCLR13}. }
    \label{fig:placeholder}
\end{figure}

\textbf{Counterexamples.} For each counterexample discussed in Section~\ref{sec:counterexamples}, we will compare the performance of the targeted heuristic method, $\TPM$, and the SDP-based method. Some parameters in data generation and algorithmic implementations are not set exactly as in their theoretical analysis, as they often rely on some universal constants that are impractical to compute. Instead, we adopt a pragmatic strategy to combine heuristic choices with a coarse grid search. 

The experiments in Figure~\ref{fig:counterexample} confirm the theoretical guaranties that our counterexamples successfully fool the simple heuristic methods. Moreover, these tests also indicate that our method is able to overcome these counterexamples and performs well on many instances under the general Model~\ref{model:general}.

\begin{figure}[ht]
    \centering
    \includegraphics[width=\linewidth]{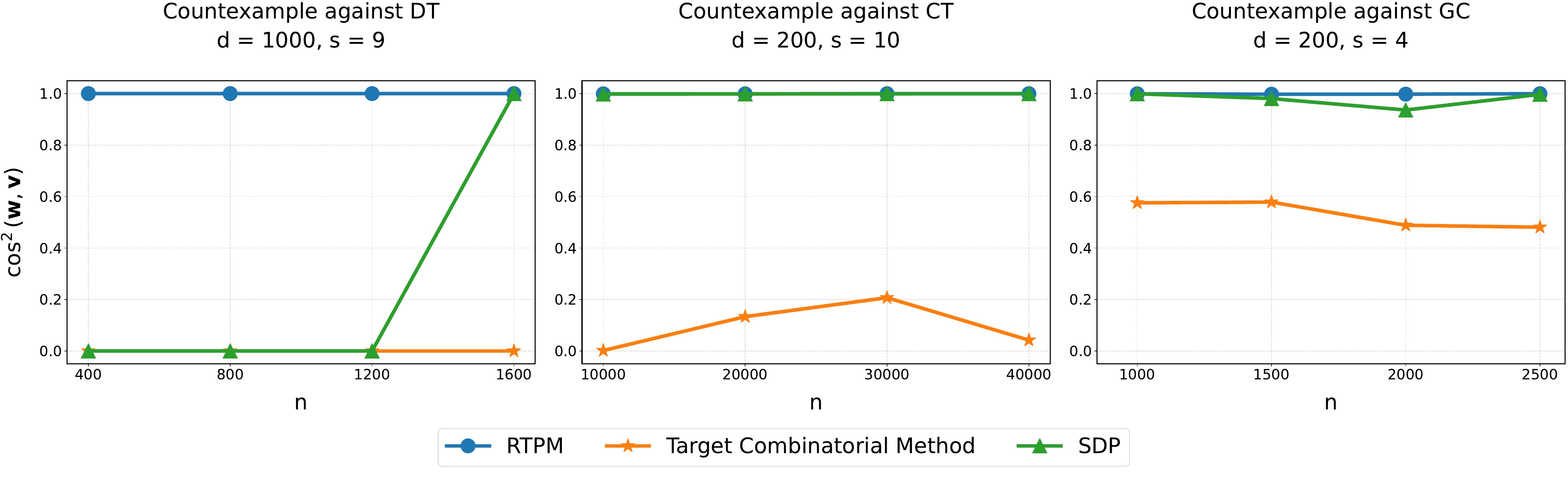}
    \caption{\small Performance on counterexamples. In each subplot, we vary the sample size $n$ under fixed $d, k$, and compare the output correlation achieved $\TPM$, the targeted heuristics and the SDP-based method. The dataset parameter except $(n, d, s)$ are set as following: the left plot uses $\lambda_1 = 1.0$, $\lambda_2 = 0.5$, $\lambda_2/\lambda_3 = 2.1$ and $\lambda_2/\lambda_4 = 2.2$; the middle plot uses $u = 25$, $r = 6$, $\theta = 1$ and $c = 0.25$; the right plot uses parameters as in Lemma~\ref{lem:friends_lower_bound}. The RTPM method is run for 40 iterations and the relaxation coefficient for SDP-based method is set as suggested in \cite{VuCLR13}. 
    }
    \label{fig:counterexample}
\end{figure}


\begin{figure}[!hbt]
\centering

\begin{subfigure}[t]{0.4\textwidth}
  \centering
  \includegraphics[width=\linewidth]{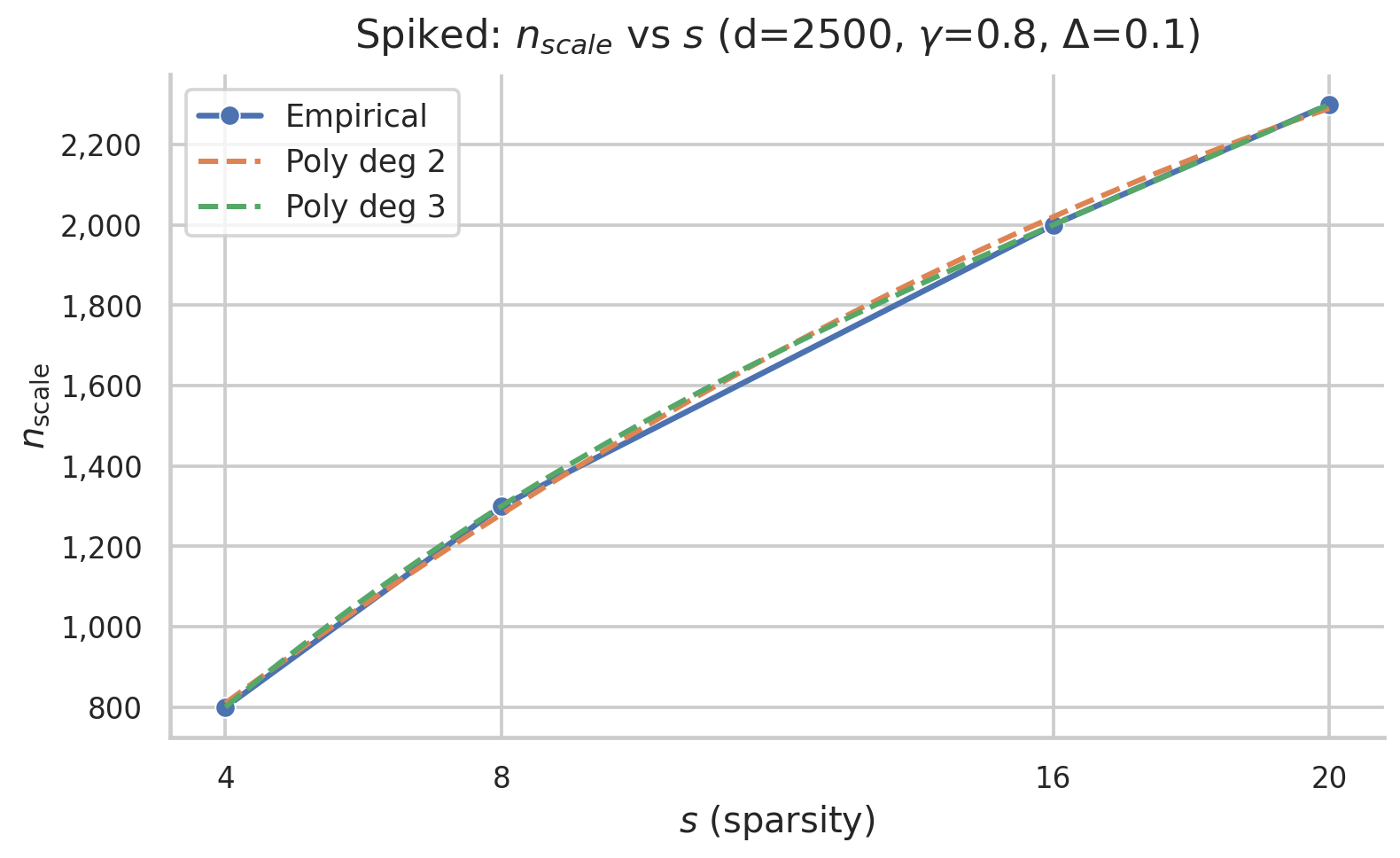}
  \caption{\label{fig:scaling_s_spiked}Spiked: $n_{\mathrm{scale}}$ vs $s$}
\end{subfigure}\hfill
\begin{subfigure}[t]{0.42\textwidth}
  \centering
  \includegraphics[width=\linewidth]{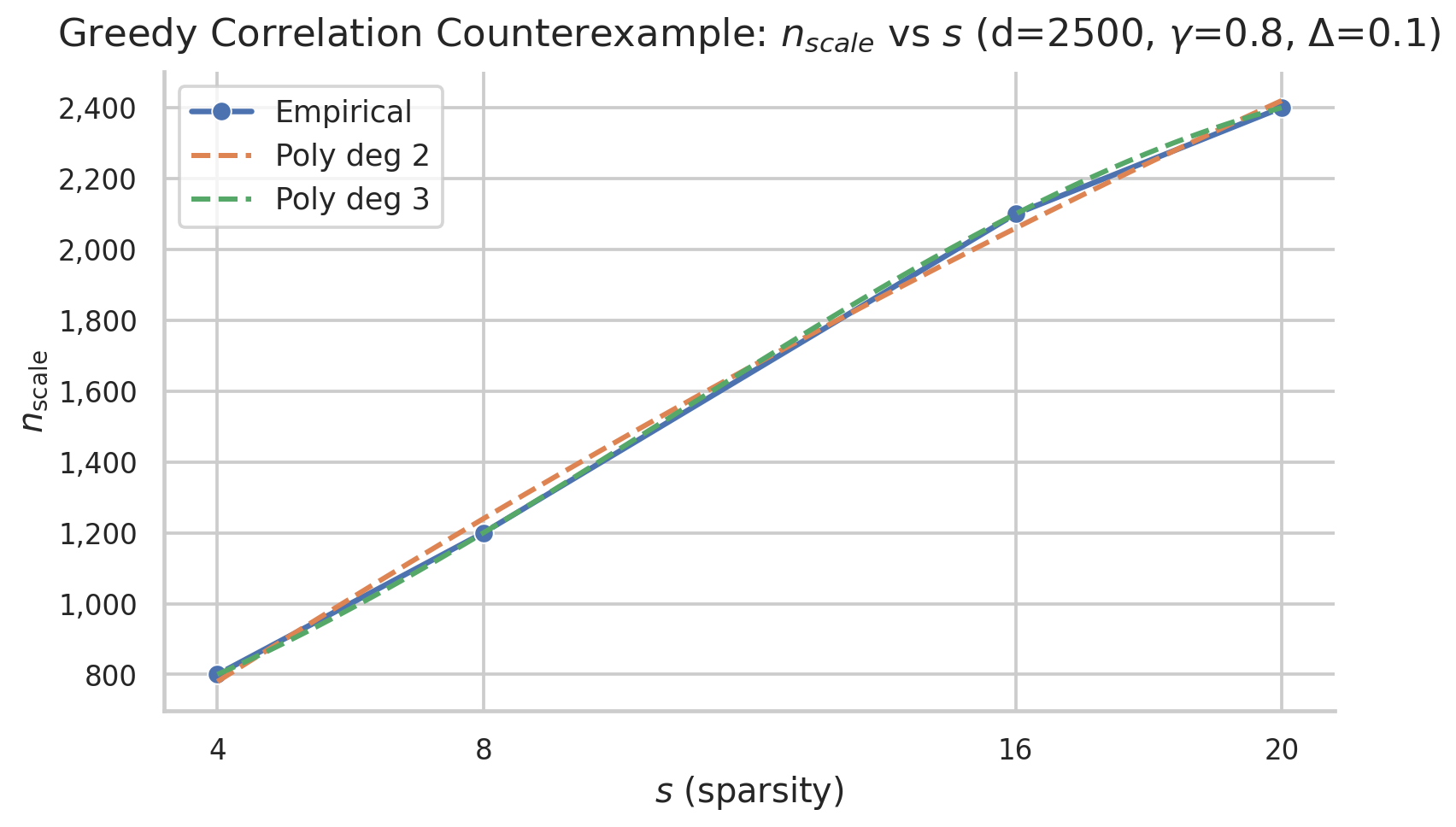}
  \caption{\label{fig:scaling_s_lemma4}Lemma~\ref{lem:friends_lower_bound} counterexample: $n_{\mathrm{scale}}$ vs $s$}
\end{subfigure}

\begin{subfigure}[t]{0.4\textwidth}
  \centering
  \includegraphics[width=\linewidth]{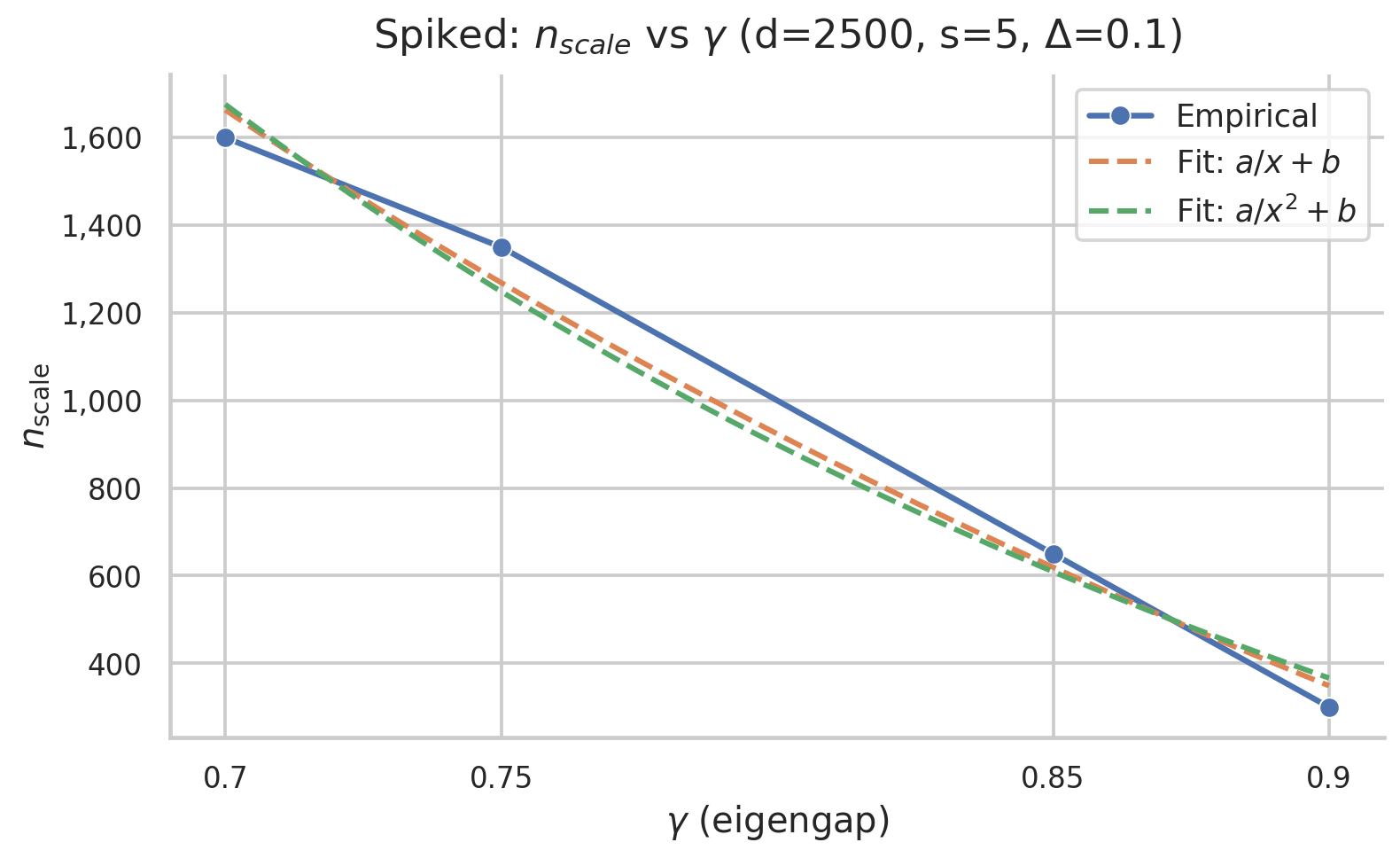}
  \caption{\label{fig:scaling_gamma_spiked}Spiked: $n_{\mathrm{scale}}$ vs $\gamma$}
\end{subfigure}\hfill
\begin{subfigure}[t]{0.42\textwidth}
  \centering
  \includegraphics[width=\linewidth]{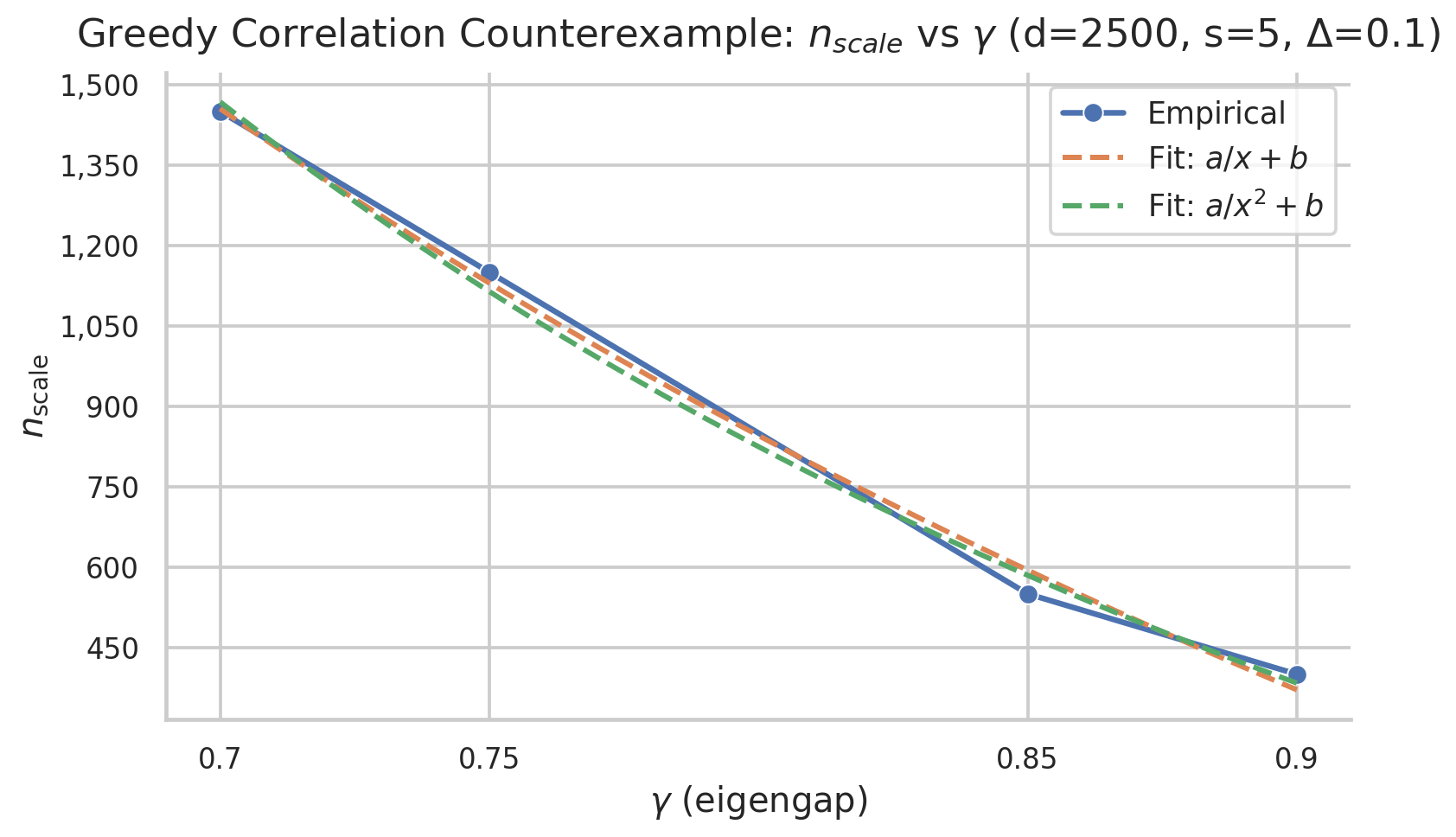}
  \caption{\label{fig:scaling_gamma_lemma4}Lemma~\ref{lem:friends_lower_bound} counterexample: $n_{\mathrm{scale}}$ vs $\gamma$}
\end{subfigure}

\begin{subfigure}[t]{0.4\textwidth}
  \centering
  \includegraphics[width=\linewidth]{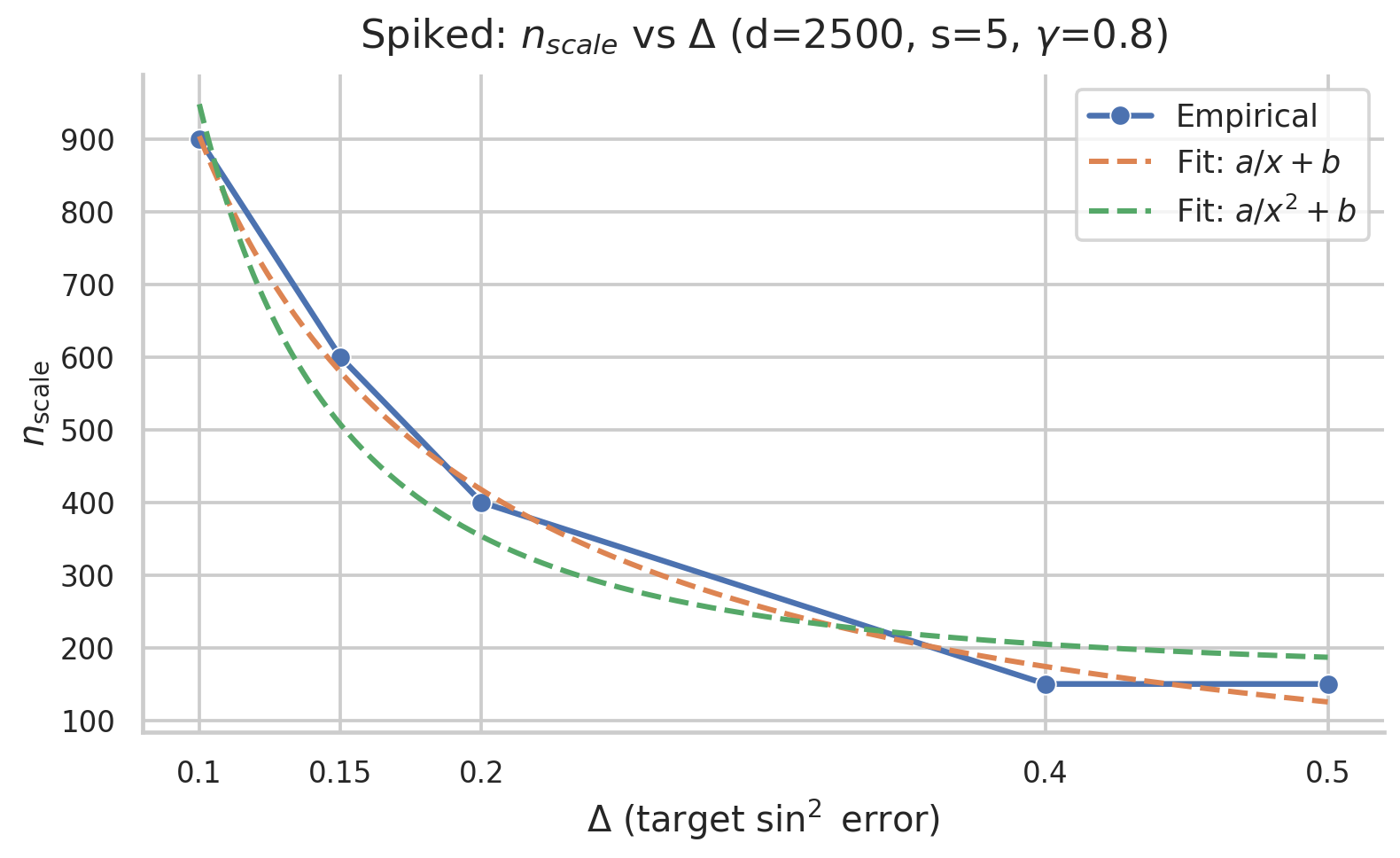}
  \caption{\label{fig:scaling_delta_spiked}Spiked: $n_{\mathrm{scale}}$ vs $\Delta$}
\end{subfigure}\hfill
\begin{subfigure}[t]{0.42\textwidth}
  \centering
  \includegraphics[width=\linewidth]{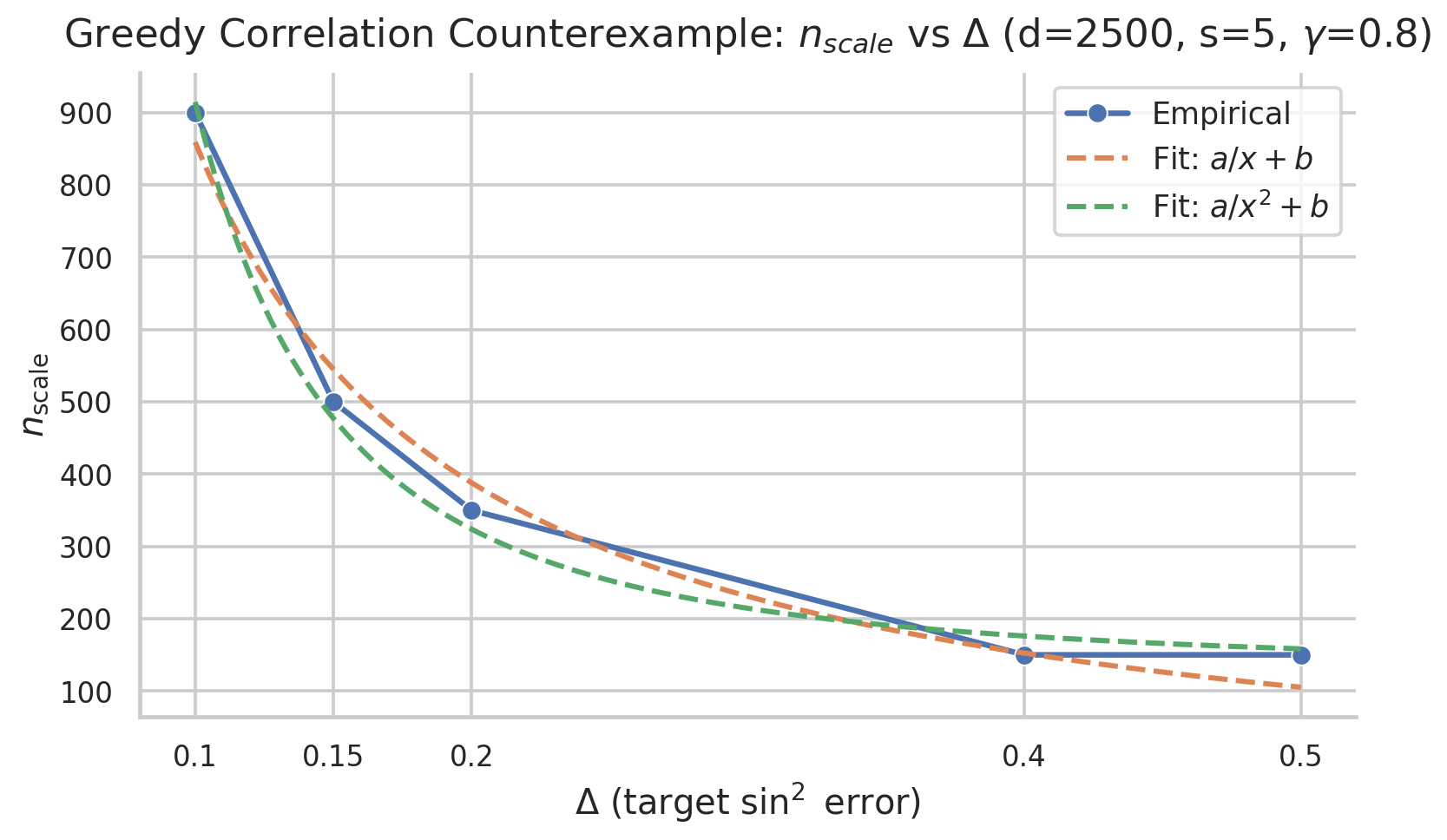}
  \caption{\label{fig:scaling_delta_lemma4}Lemma~\ref{lem:friends_lower_bound} counterexample: $n_{\mathrm{scale}}$ vs $\Delta$}
\end{subfigure}
\caption{\label{rtpm_scaling_all}Scaling-law experiments for $\TPM$. Left column: spiked covariance model. Right column: Lemma~\ref{lem:friends_lower_bound} counterexample. Rows correspond to varying $s$, $\gamma$, and $\Delta$, respectively.}
\end{figure}

\textbf{Scaling ablations.} We study how many samples are needed for the $\TPM$ to recover the top sparse component on both a standard spiked model and the structured non-spiked construction in Lemma~\ref{lem:friends_lower_bound}. In both cases we work in ambient dimension $d=2500$, run $\TPM$, set truncation level $r=10s$, and $T=100$ iterations, and define $n_{\mathrm{scale}}=\min\{n \text{ in the grid}:\ \sin^2\angle(\hat \vv,\vv)\le \Delta\}$. For the spiked model we draw $\vx_i\sim \mathcal{N}(\0,\msig)$ with $\msig := (1-\gamma)\id_d+\gamma \vv\vv^\top$ where $\vv$ has uniform entries $s^{-\half}$ on its support; for our other experiment we use the worst-case block construction in Lemma~\ref{lem:friends_lower_bound} (with the second eigenvalue $\lambda_2=1-\gamma$) embedded into $d=2500$ by padding the remaining coordinates with isotropic variance $1-\gamma$. Across all sweeps, the empirical behavior is consistent and monotone: $n_{\mathrm{scale}}$ increases with $s$, decreases as $\gamma$ grows, and decreases as the target error $\Delta$ is relaxed. 

We also fit degree-$2$ and $3$ polynomials to the curves in Figures \ref{fig:scaling_s_spiked} and \ref{fig:scaling_s_lemma4}, and power functions with negative exponents $1$ and $2$ for Figures~\ref{fig:scaling_gamma_spiked}, \ref{fig:scaling_gamma_lemma4}, \ref{fig:scaling_delta_spiked}, \ref{fig:scaling_delta_lemma4} to understand the variation of the sample size with these parameters. For dependence on $s$, both 2 and 3 degree polynomial-fit curves capture most of the variation. The plots also suggest that the dependence of the error of our algorithm on $\gamma$ and $\Delta$ is possibly more moderate than that suggested by the theoretical results.

\textbf{Real data.}  We experiment on a standard text sparse PCA setup in  \cite{zhang2011large} using the publicly available NYTimes bag-of-words dataset from the UCI ``Bag of Words'' collection \cite{bag_of_words_164}.
Each document is represented as a sparse word count vector. We restrict to the first $n=10000$ documents, and we select a vocabulary of size $d=20000$. We apply a stabilizing transform $\log(1+c)$ to word-document counts and center the data by subtracting the empirical feature mean. We compute $k=4$ sparse principal components via a deflation-style routine (Algorithm~\ref{alg:deflation}) coupled with the
$\TPM$ algorithm (Algorithm~\ref{alg:deflation}) with $r = 50$ and $T=50$. The output is a sparse vector whose largest-magnitude entries correspond
to an interpretable set of words (see Figure~\ref{fig:nytimes_top_words}). 

The resulting sparse components cleanly separate semantic themes in the corpus.
From the top-10 words by absolute value of entries, the leading components align with
(i) sports and game coverage (PC $1$), (ii) US politics and elections (PC $2$), (iii) markets, business, and finance (PC $3$),
and (iv) web, publishing, and metadata terms (PC $4$). The ``union-support'' in Figure~\ref{fig:nytimes_cov_and_union}  shows that most energy of each component is concentrated in its top ranks, consistent with sparsity. 
Our findings in this application are very \emph{interpretable}: each component is supported
on a small set of words, making the latent topic structure easy to understand compared to dense PCA. This aligns with the results in~\cite{zhang2011large}.

\begin{figure*}[!hbt]
  \centering
    \centering
    \includegraphics[width=0.8\linewidth]{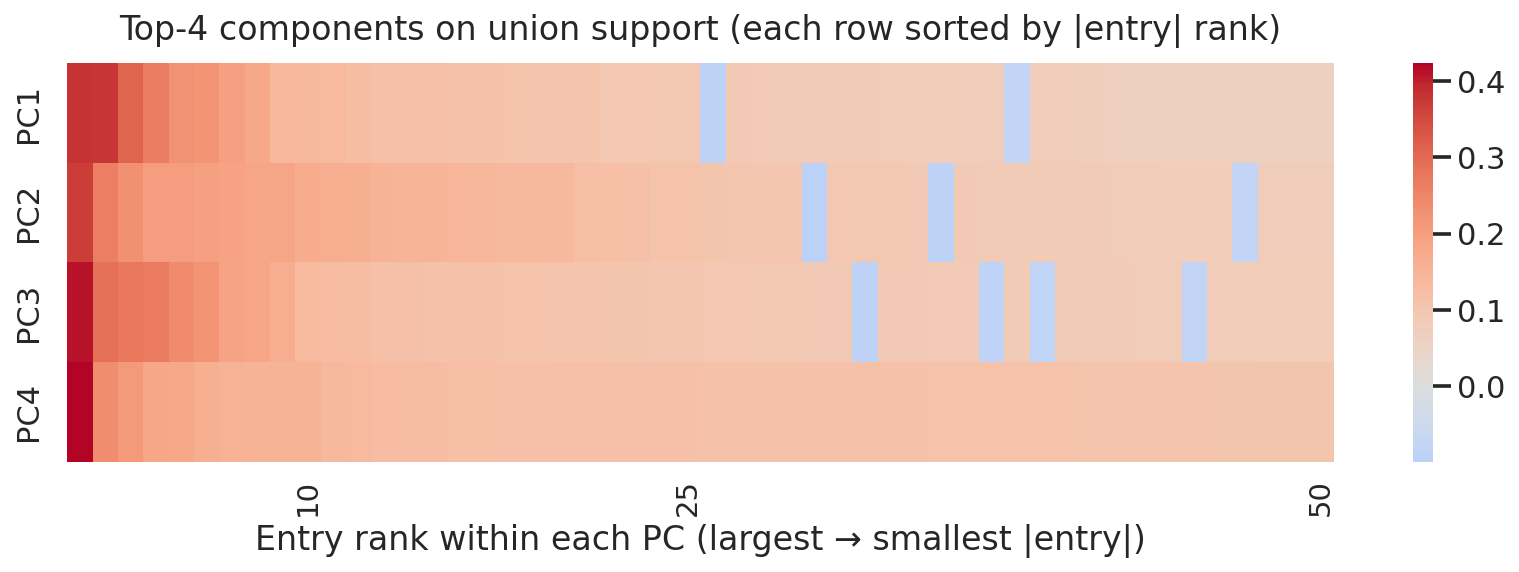}
    \caption{Top 4 components restricted to the union support, with each row sorted by rank of entries.}
  \label{fig:nytimes_cov_and_union}
\end{figure*}

\begin{figure*}[!hbt]
  \centering
  \begin{subfigure}[t]{0.4\textwidth}
    \centering
    \includegraphics[width=\linewidth]{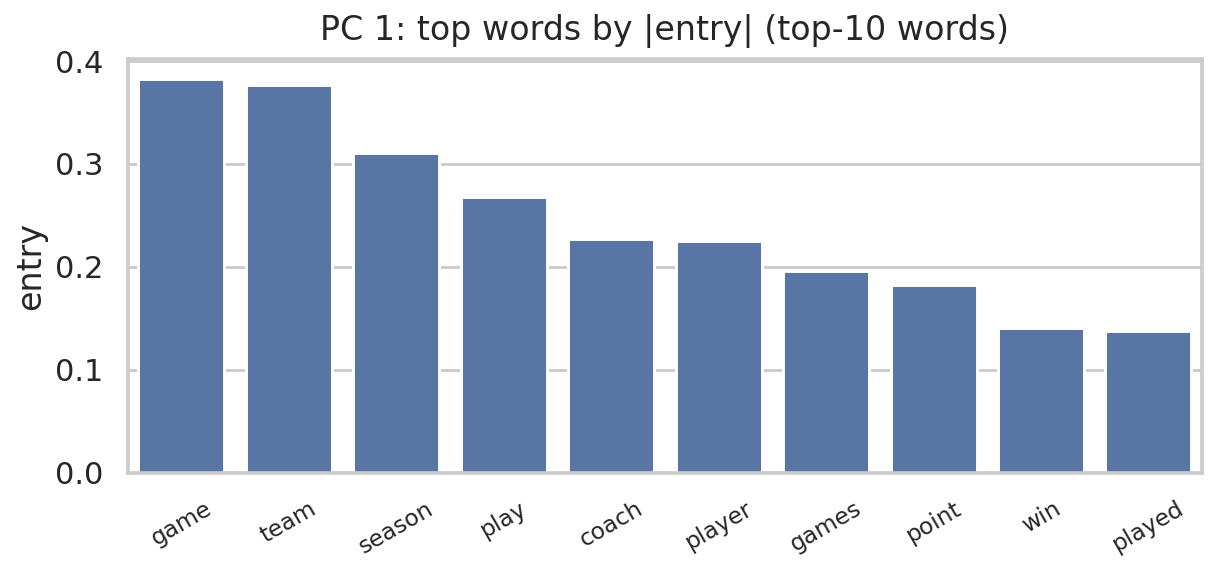}
    \caption{PC 1 (top 10 words by $|\text{entry}|$).}
  \end{subfigure}
  \hfill
  \begin{subfigure}[t]{0.4\textwidth}
    \centering
    \includegraphics[width=\linewidth]{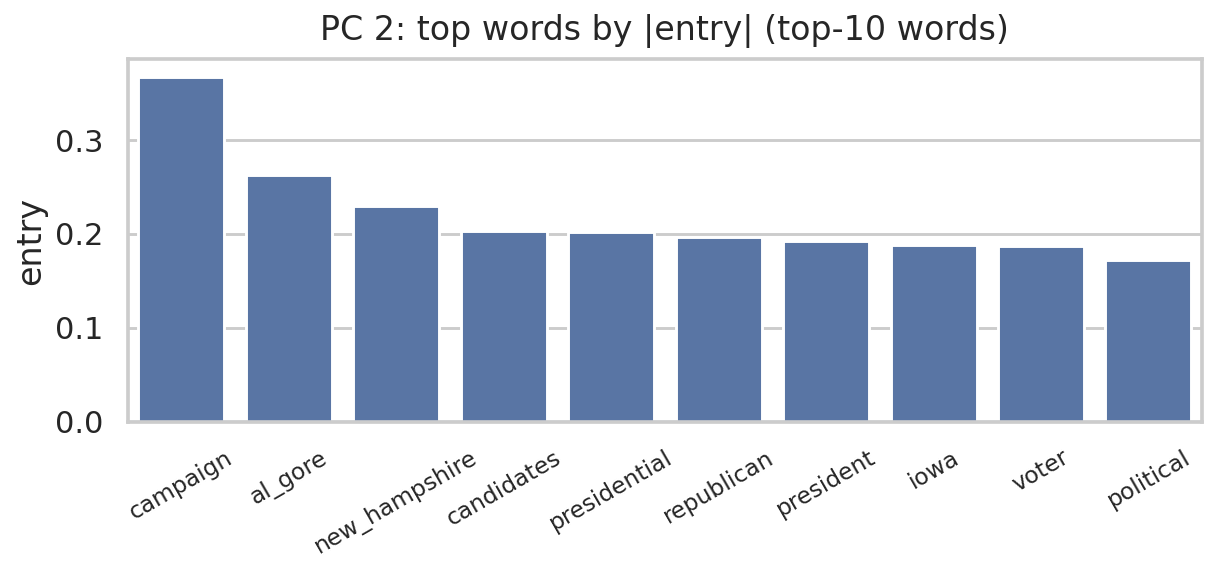}
    \caption{PC 2 (top 10 words by $|\text{entry}|$).}
  \end{subfigure}

  \begin{subfigure}[t]{0.4\textwidth}
    \centering
    \includegraphics[width=\linewidth]{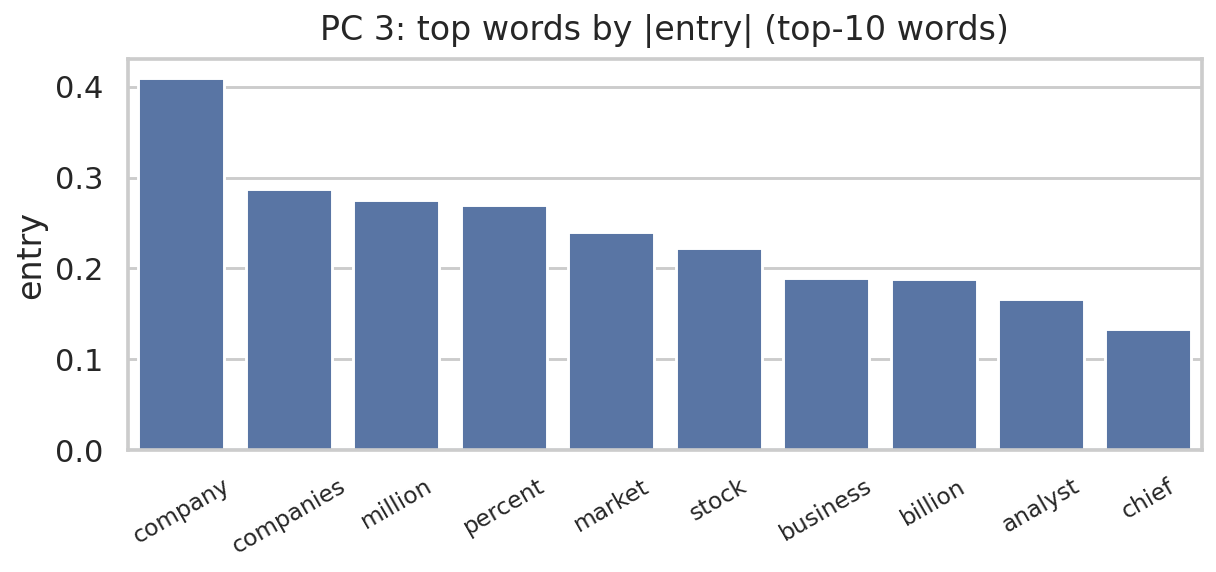}
    \caption{PC 3 (top 10 words by $|\text{entry}|$).}
  \end{subfigure}
  \hfill
  \begin{subfigure}[t]{0.4\textwidth}
    \centering
    \includegraphics[width=\linewidth]{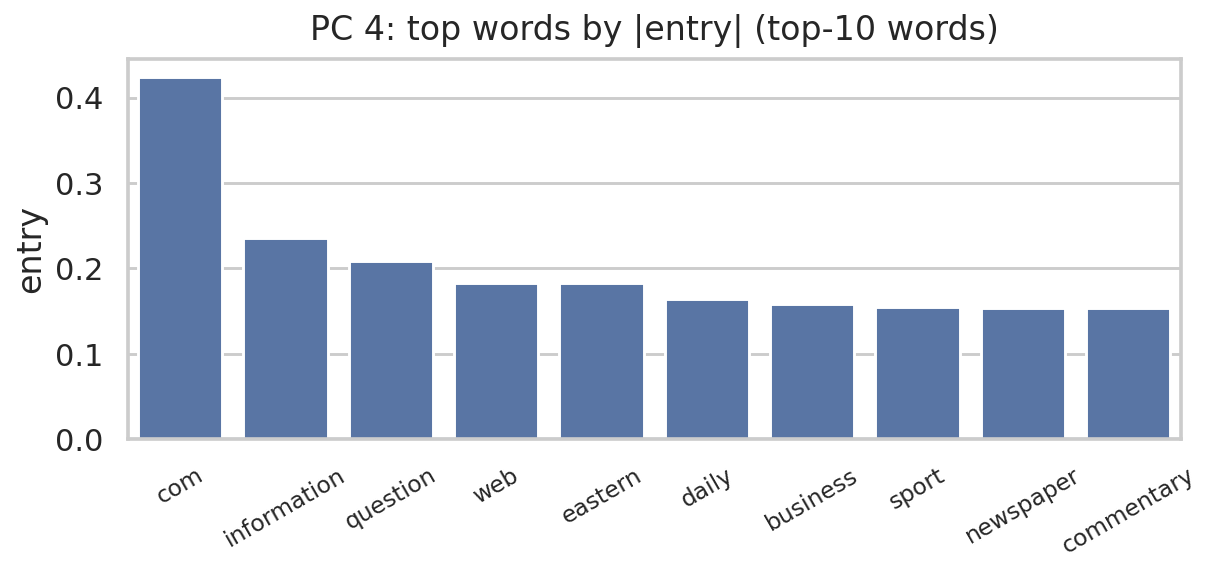}
    \caption{PC 4 (top 10 words by $|\text{entry}|$).}
  \end{subfigure}
  \caption{Top 10 words associated with each sparse principal component.}
  \label{fig:nytimes_top_words}
\end{figure*}

\vspace{-1.em}
\section*{Acknowledgments}
\vspace{-0.6em}
We thank Eric Price for several helpful conversations at an earlier stage of this project, which led to the counterexample in Section~\ref{ssec:cov_counterex}. KT thanks Ankit Pensia for several early conversations on the topic of sparse PCA, which motivated this work. SK gratefully acknowledges funding support from the Amazon AI PhD Fellowship. PS gratefully acknowledges NSF grants 2217069 and CCF-2505865. PZ gratefully acknowledges support from NSF grants CCF-2224213.

\bibliographystyle{alpha}
\bibliography{refs}

\newpage
\appendix
\section{Deferred proofs}
\label{sec:utility_results}

In Section~\ref{ssec:friends}, we use the following helper result.

\begin{lemma}\label{lem:good_ortho_basis}
For any $d \in \N$, there is an orthonormal basis $\{\vu_i\}_{i \in [d]}$ such that $\vu_1 = \frac 1 {\sqrt d} \1_d$, and for all $i \in [d] \setminus \{1\}$, we have that $\inprod{\vu_i}{\ve_1} = \frac 1 {\sqrt d}$.
\end{lemma}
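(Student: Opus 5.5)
The plan is to build the basis explicitly from a single Householder reflection that sends $\ve_1$ to $\frac{1}{\sqrt d}\1_d$, and then read the required inner products off the symmetry of that reflection.

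\textbf{Trivial case.} If $d = 1$, take $\vu_1 = \ve_1 = \1_1$; there is nothing else to check. Assume $d \ge 2$ from now on, and set $\vb \defeq \frac{1}{\sqrt d}\1_d$. Since $\vb \neq \ve_1$, the vector $\vw \defeq \ve_1 - \vb$ is nonzero.

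\textbf{Step 1: the reflection.} Define
\[\mh \defeq \id_d - \frac{2\vw\vw^\top}{\norm{\vw}_2^2}.\]
This matrix is symmetric and orthogonal, since $\mh^2 = \id_d$. Because $\ve_1$ and $\vb$ are both unit vectors, the standard Householder computation gives $\mh\ve_1 = \vb$. Concretely, $\norm{\vw}_2^2 = 2 - 2\inprod{\ve_1}{\vb}$ and $\inprod{\vw}{\ve_1} = 1 - \inprod{\ve_1}{\vb}$, so
\[\mh\ve_1 = \ve_1 - \frac{2\inprod{\vw}{\ve_1}}{\norm{\vw}_2^2}\,\vw = \ve_1 - \vw = \vb.\]

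\textbf{Step 2: the basis.} Let $\vu_i \defeq \mh\ve_i$ for all $i \in [d]$. These are the columns of an orthogonal matrix, so they form an orthonormal basis, and $\vu_1 = \vb = \frac{1}{\sqrt d}\1_d$ by Step 1.

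\textbf{Step 3: the inner products.} For $i \ge 2$, symmetry of $\mh$ gives
\[\inprod{\vu_i}{\ve_1} = \ve_1^\top \mh \ve_i = (\mh\ve_1)^\top \ve_i = \vb^\top\ve_i = \frac{1}{\sqrt d},\]
which is exactly the claim.

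There is no real obstacle here. The only points needing care are that $\vw \ne \0_d$ when $d \ge 2$, so that $\mh$ is well defined, and the verification that $\mh\ve_1 = \vb$, which uses only that $\ve_1$ and $\vb$ are unit vectors.
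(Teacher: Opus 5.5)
Your proof is correct, and it takes a more direct route than the paper's. The paper works inside $H = \vu_1^\perp$. It writes $\ve_1 = \frac{1}{\sqrt d}\vu_1 + \vw$ with $\vw \in H$ and fixes an arbitrary orthonormal basis $\mv \in \R^{d \times (d-1)}$ of $H$. It then uses a Householder reflection $\mq$ on $\R^{d-1}$ (Lemma~\ref{lem:orthonormal_rotation}) sending $\mv^\top \vw/\norm{\vw}_2$ to $\frac{1}{\sqrt{d-1}}\1_{d-1}$. As a result, the columns of $\mv\mq^\top$ all have inner product $\norm{\vw}_2/\sqrt{d-1} = \frac{1}{\sqrt d}$ with $\vw$, and hence with $\ve_1$.

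You instead use a single reflection on $\R^d$ that exchanges $\ve_1$ and $\vb$. The whole argument then reduces to one observation: a symmetric orthogonal matrix whose first column is $\vb$ automatically has first row $\vb^\top$, and these are exactly the two requirements of the lemma. This gives you an explicit closed-form basis, with no auxiliary basis of the complement and no separate $\vx = \vt$ case. It also treats $d = 1$ explicitly, where the paper's $\frac{1}{\sqrt{d-1}}\1_{d-1}$ is undefined (though the statement is vacuous there).

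The paper's two-step route is more general. It adapts to an arbitrary unit vector in place of $\frac{1}{\sqrt d}\1_d$, with common inner product $\norm{\vw}_2/\sqrt{d-1}$. Your symmetry trick gives $\inprod{\mh\ve_i}{\ve_1} = \inprod{\ve_i}{\vb}$, so it needs the first basis vector to have equal entries in coordinates $2, \ldots, d$. For the lemma as stated, and for its use in Lemma~\ref{lem:friends_lower_bound}, your argument is complete.
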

\begin{proof}
Let $\vu \defeq \vu_1$ and let $H \subseteq \R^d$ be the subspace orthogonal to $\vu$. Also, let $\vw = \ve_{1} - \frac 1 {\sqrt d}\vu \in H$ so that $\norm{\vw}_{2}^{2} =\frac{d-1}{d}$. Then, the lemma statement demands that the $\{\vu_i\}_{i \in [d] \setminus \{1\}}$ are an orthonormal basis of $H$ such that $\inner{\vw}{\vu_i} = \frac 1 {\sqrt d}$ for all $i \in [d] \setminus \{1\}$. To construct such a basis, let $\mv \in \R^{d \times (d-1)}$ be an arbitrary orthonormal matrix whose columns span $H$, and define
\[\vx \defeq \mv^\top \Par{\frac{\vw}{\norm{\vw}_2}},\quad \vt \defeq \frac 1 {\sqrt{d - 1}} \1_{d - 1}.\]
Let $\mq \in \R^{(d - 1) \times (d - 1)}$ be the orthonormal matrix guaranteed by Lemma~\ref{lem:orthonormal_rotation}  to satisfy $\mq \vx = \vt$. We claim it suffices to take $\vu_i = \mv\mq_{i-1:}$ for all $i \in [d] \setminus \{1\}$ where $\mq_{i-1:}$ is the $i^{\text{th}}$ row of $\mq$.

To check this, it is clear that all $\vu_i$ have unit norm and are pairwise orthogonal, by orthogonality of $\mv$ and $\mq$. Moreover, since $\mv^\top \vu_1 = \0_{d - 1}$ these vectors are also all orthogonal to $\vv_1$. Finally,
\[\inprod{\vw}{\vu_i} = \vw^\top \mv \mq^\top \ve_i = \norm{\vw}_2 \vx^\top \mq^\top \ve_i = \norm{\vw}_2 \vt^\top \ve_i = \frac 1 {\sqrt d}, \text{ for all } i \in [d] \setminus \{1\}.\]
\end{proof}

The proof of Lemma~\ref{lem:good_ortho_basis} used the following claim.

\begin{lemma}\label{lem:orthonormal_rotation} Let $\vx, \vt \in \R^{d}$ be arbitrary unit vectors. Then, there exists an orthonormal matrix $\mq \in \R^{d \times d}$ such that $\mq \vx = \vt$.  
\end{lemma}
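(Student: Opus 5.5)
The plan is to construct $\mq$ explicitly as a Householder reflection, with a trivial fallback when $\vx = \vt$. If $\vx = \vt$, I will simply take $\mq \defeq \id_d$. Otherwise, I will set $\vw \defeq \vx - \vt \neq \0_d$ and define
\[\mq \defeq \id_d - \frac{2\vw\vw^\top}{\norm{\vw}_2^2}.\]

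First I will check that $\mq$ is orthonormal. It is symmetric, and the projector identity $(\vw\vw^\top)^2 = \norm{\vw}_2^2\,\vw\vw^\top$ gives
\[\mq^\top\mq = \mq^2 = \id_d - \frac{4\vw\vw^\top}{\norm{\vw}_2^2} + \frac{4\vw\vw^\top}{\norm{\vw}_2^2} = \id_d.\]

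Next I will compute $\mq\vx$, using that both $\vx$ and $\vt$ are unit vectors. We have $\norm{\vw}_2^2 = 2 - 2\inprod{\vx}{\vt}$ and $\inprod{\vw}{\vx} = 1 - \inprod{\vx}{\vt}$. Hence $\inprod{\vw}{\vx} = \half\norm{\vw}_2^2$, and therefore
\[\mq\vx = \vx - \frac{2\inprod{\vw}{\vx}}{\norm{\vw}_2^2}\,\vw = \vx - \vw = \vt.\]

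There is no real obstacle in this argument. The only point needing care is the degenerate case $\vw = \0_d$, which the identity handles. As an alternative route, I could extend $\vx$ and $\vt$ to orthonormal bases $\mathbf{X}$ and $\mathbf{T}$ whose first columns are $\vx$ and $\vt$, and take $\mq = \mathbf{T}\mathbf{X}^\top$.
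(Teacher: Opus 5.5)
Your proposal is correct and matches the paper's proof: the paper also takes the identity when $\vx = \vt$ and otherwise the Householder reflection $\id_d - 2\vu\vu^\top/\norm{\vu}_2^2$ with $\vu = \vx - \vt$. It likewise checks orthogonality via $\mq^2 = \id_d$ and checks $\mq\vx = \vt$ using $\inprod{\vu}{\vx} = \half\norm{\vu}_2^2$.
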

\begin{proof} If $\vx = \vt$ the claim is trivial. Otherwise, choose
\[\mq := \id_{d} - 2\frac{\vu\vu^{\top}}{\norm{\vu}_{2}^{2}},\text{ where } \vu \defeq \vx - \vt.\]
We first check orthonormality:
\bas{
    \mq\mq^{\top} = \bb{\id_{d} - 2\frac{\vu\vu^{\top}}{\norm{\vu}_{2}^{2}}}\bb{\id_{d} - 2\frac{\vu\vu^{\top}}{\norm{\vu}_{2}^{2}}} = \id_{d} - 4\frac{\vu\vu^{\top}}{\norm{\vu}_{2}^{2}} + 4\frac{\vu\vu^{\top}\vu\vu^{\top}}{\norm{\vu}_{2}^{4}} = \id_{d},
}
where the last equality follows since $\vu^{\top}\vu = \norm{\vu}_{2}^{2}$.
Next, we evaluate $\mq\vx$:
\bas{
    \mq\vx &= \bb{\id_{d} - 2\frac{\vu\vu^{\top}}{\norm{\vu}_{2}^{2}}}\vx 
    = \vx - 2\vu \cdot \frac{\vu^{\top}\vx}{\norm{\vu}_{2}^{2}} = \vx - 2\vu \cdot \frac{1 - \vt^{\top}\vx}{2\bb{1-\vt^{\top}\vx}} = \vx - \vu = \vt.
}
\end{proof}

In Section~\ref{ssec:kpca} we used the following helper result.

\begin{lemma}\label{lem:wedin_projection_step}
Let $\ma, \mb \in \PSD^{d \times d}$. Fix $p \in [d-1]$ and
let $\mv \in \R^{d \times p}$ have orthonormal columns spanning the top-$p$ eigenspace of $\mb$, and let
$\widehat \mv \in \R^{d \times p}$ have orthonormal columns spanning the top-$p$ eigenspace of $\ma$.
Define $\gap \defeq \lambda_p(\mb) - \lambda_{p+1}(\mb) > 0$, and assume that $\normop{\ma-\mb} \le \frac{\gap}{4}$. Then,
\[
\normop{\mv\mv^{\top} - \widehat{\mv}\widehat{\mv}^{\top}} \le \frac{2\normop{\ma-\mb}}{\gap}.
\]
\end{lemma}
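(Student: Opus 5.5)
We reduce the claim to Fact~\ref{fact:wedin}, which is a Davis--Kahan/Wedin-type estimate. We also use the standard identity for two rank-$p$ orthogonal projectors. Write $\eta \defeq \normop{\ma-\mb}$. Let $\widehat{\mv}_\perp$ be an orthonormal basis of the orthogonal complement of $\linspan(\widehat\mv)$, and let $\mv_\perp$ be one for the complement of $\linspan(\mv)$.

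For two orthogonal projectors $\mpp = \mv\mv^\top$ and $\widehat{\mpp} = \widehat\mv\widehat\mv^\top$ of equal rank $p$, the standard identity is
\[\normop{\mpp - \widehat{\mpp}} = \normop{\widehat{\mv}_\perp^\top \mv} = \normop{\mv_\perp^\top \widehat\mv}.\]
This follows from the CS decomposition, or from the fact that $\mpp - \widehat{\mpp}$ has singular values equal to the sines of the principal angles, each appearing twice. So it suffices to bound $\normop{\widehat{\mv}_\perp^\top \mv}$.

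Next we locate a threshold. By Weyl's inequality, $\lam_{p+1}(\ma) \le \lam_{p+1}(\mb) + \eta$. Hence every eigenvalue of $\ma$ on $\linspan(\widehat\mv_\perp)$ is at most $\mu \defeq \lam_{p+1}(\mb) + \eta$. Every eigenvalue of $\mb$ on $\linspan(\mv)$ is at least $\lam_p(\mb) = \mu + \tau$, where
\[\tau \defeq \gap - \eta \ge \tfrac34 \gap > 0,\]
using the assumption $\eta \le \gap/4$. Ties at the boundary need only a little care: if $\lam_p(\ma) = \lam_{p+1}(\ma)$, then $\widehat\mv$ is any valid choice, and $\widehat{\mv}_\perp$ still only carries eigenvalues at most $\mu$.

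We then apply Fact~\ref{fact:wedin} with $\mmu \gets \widehat{\mv}_\perp$ (the eigenspace of $\ma$ with eigenvalues $\le \mu$) and $\mv \gets \mv$. The fact formally asks for the full eigenspace with eigenvalues $\le \mu$; $\widehat\mv_\perp$ is contained in it, so the operator norm only decreases. This gives
\[\normop{\widehat\mv_\perp^\top \mv} \le \frac{\eta}{\tau} \le \frac{4\eta}{3\gap} \le \frac{2\eta}{\gap}.\]
Combining this with the projector identity finishes the proof.

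The main obstacle is justifying the projector identity cleanly. The fallback we would take is direct. Note that $\mpp - \widehat\mpp = \widehat\mpp_\perp \mpp - \widehat\mpp\,\mpp_\perp$, where $\widehat\mpp_\perp \defeq \id_d - \widehat\mpp$ and $\mpp_\perp \defeq \id_d - \mpp$. The two terms map into orthogonal ranges and come from orthogonal domains, so the norm of the sum is the maximum of $\normop{\widehat\mv_\perp^\top\mv}$ and $\normop{\widehat\mv^\top\mv_\perp}$. We would bound the second term symmetrically, by a second application of Fact~\ref{fact:wedin} with the roles of $\ma$ and $\mb$ swapped. Using $\lam_p(\ma) \ge \lam_p(\mb) - \eta$, we take $\mu = \lam_{p+1}(\mb)$ and $\tau = \gap - \eta$. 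This bounds both terms by $\frac{4\eta}{3\gap} \le \frac{2\eta}{\gap}$, which avoids the CS decomposition entirely.
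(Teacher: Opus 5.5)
Your proposal is correct and follows essentially the same route as the paper: Weyl's inequality places $\linspan(\widehat\mv_\perp)$ and the top-$p$ eigenspace of $\mb$ on opposite sides of a threshold, then Fact~\ref{fact:wedin} is applied with $\mmu \gets \widehat\mv_\perp$, and the identity $\normop{\mv\mv^\top - \widehat\mv\widehat\mv^\top} = \normop{\widehat\mv_\perp^\top \mv}$ finishes the argument. The differences are minor: your threshold $\mu = \lam_{p+1}(\mb) + \normop{\ma-\mb}$ gives the slightly sharper constant $\frac{4}{3}$ (the paper uses the midpoint with $\tau = \frac{\gap}{2}$), and your fallback of a second, role-swapped application of Fact~\ref{fact:wedin} to bound $\normop{\widehat\mv^\top \mv_\perp}$ makes explicit the equal-rank step that the paper's squaring argument covers only with ``similarly''.
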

\begin{proof}
Let $\widehat \mv_\perp \in \R^{d \times (d-p)}$ be any
orthonormal basis for $\linspan(\widehat \mv)^\perp$. 
Let $\me \defeq \ma-\mb$, and set
\[
\mu \defeq \frac{\lambda_p(\mb)+\lambda_{p+1}(\mb)}{2},
\qquad
\tau \defeq \frac{\gap}{2}.
\]
By Weyl's inequality,
\[
\lambda_{p+1}(\ma) \le \lambda_{p+1}(\mb) + \normop{\me} = \mu - \tau + \normop{\me}
\le \mu - \frac \tau 2 < \mu,
\]
and similarly,
\[
\lambda_p(\ma) \ge \lambda_p(\mb) - \normop{\me} = \mu + \tau - \normop{\me}
\ge \mu + \frac \tau 2 > \mu.
\]
Hence $\widehat \mv_\perp$ spans the eigenspace of $\ma$ with eigenvalues $\le \mu$, while $\mv$ spans the
eigenspace of $B$ with eigenvalues $\ge \mu+\tau$.
Applying Wedin's theorem (Fact~\ref{fact:wedin}) with
$\muu = \widehat \mv_\perp$ yields
\ba{
\normop{\widehat \mv_\perp^\top \mv} \le \frac{\normop{\ma-\mb}}{\tau}
= \frac{2\normop{\ma-\mb}}{\gap}. \label{eq:wedin_bound_1}
}
We now relate this to $\normsop{\widehat \mv_\perp^\top \mv}$. Let $\mpp := \mv\mv^{\top}$ and $\widehat{\mpp} := \widehat{\mv}\widehat{\mv}^{\top}$.
Since $\mpp^2=\mpp$ and $\widehat\mpp^2=\widehat\mpp$,
\[
(\mpp-\widehat\mpp)^2
=\mpp-\mpp\widehat\mpp-\widehat\mpp\mpp+\widehat\mpp.
\]
Using $\widehat\mpp_\perp:=\id-\widehat\mpp$ and $\mpp_\perp:=\id-\mpp$, we can rewrite this as
\[
(\mpp-\widehat\mpp)^2
= \mpp(\id-\widehat\mpp)\mpp + (\id-\mpp)\widehat\mpp(\id-\mpp)
= \mpp\,\widehat\mpp_\perp\,\mpp + \mpp_\perp\,\widehat\mpp\,\mpp_\perp .
\]
These two terms act on orthogonal subspaces $\linspan(\mpp)$ and $\linspan(\mpp_\perp)$, hence
\[
\normop{(\mpp-\widehat\mpp)^2}
=\max\Big\{\normop{\mpp\,\widehat\mpp_\perp\,\mpp},\ \normop{\mpp_\perp\,\widehat\mpp\,\mpp_\perp}\Big\}.
\]
Moreover,
\[
\normop{\mpp\,\widehat\mpp_\perp\,\mpp}
= \normop{(\widehat\mpp_\perp\mpp)^\top(\widehat\mpp_\perp\mpp)}
= \normop{\widehat\mpp_\perp\mpp}^2
= \normop{\widehat\mv_\perp^\top \mv}^2,
\]
and similarly $\normsop{\mpp_\perp\,\widehat\mpp\,\mpp_\perp}=\normsop{\widehat\mv_\perp^\top \mv}^2$.
Therefore,
\[
\normop{\mpp-\widehat\mpp}
=\sqrt{\normop{(\mpp-\widehat\mpp)^2}}
=\normop{\widehat\mv_\perp^\top \mv}.
\]
Combining with the Wedin bound in \eqref{eq:wedin_bound_1},
\[
\normop{\mpp-\widehat\mpp}
=\normop{\widehat\mv_\perp^\top \mv}
\le \frac{\normop{\ma-\mb}}{\tau}
= \frac{2\normop{\ma-\mb}}{\gap}.
\]
This completes the proof.
\end{proof}

\begin{lemma}\label{lem:truncate_error_twosided} Let $\vu,\vv\in\R^d$ be unit vectors with $\nnz(\vv) \leq s$. Then for $r \geq s$, \bas{ \Abs{\,\Abs{\inprod{\Top_r(\vu)}{\vv}}-\Abs{\inprod{\vu}{\vv}}\,} \;\le\; \sqrt{\frac{s}{r}}\min\Brace{\sqrt{1-\inprod{\vu}{ \vv}^2},\Par{1+\sqrt{\frac{s}{r}}}\Par{1-\inprod{\vu}{ \vv}^2}}. } \end{lemma}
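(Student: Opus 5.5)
The proof is a direct support-splitting argument, with no earlier results needed beyond Cauchy--Schwarz. Write $a \defeq \inprod{\vu}{\vv}$, let $T$ be the support of $\Top_r(\vu)$ (so $|T| = r$, or fewer if $\vu$ is itself sparse, in which case $T$ may be padded), and let $S \defeq \supp(\vv)$ with $|S| \le s \le r$. By the reverse triangle inequality,
\[\Abs{\Abs{\inprod{\Top_r(\vu)}{\vv}} - \Abs{\inprod{\vu}{\vv}}} \le \Abs{\inprod{\vu - \Top_r(\vu)}{\vv}} = \Abs{\sum_{i \in S \setminus T} \vu_i \vv_i}.\]
Cauchy--Schwarz then bounds the right-hand side by $\norm{\vu_{S\setminus T}}_2 \norm{\vv_{S \setminus T}}_2$. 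The plan is to bound these two factors separately.

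\textbf{First factor.} Let $k \defeq |S \setminus T| \le s$. Every entry of $\vu$ indexed by $S\setminus T$ has magnitude at most every entry indexed by $T$, in particular every entry indexed by $T \setminus S$. Moreover $|T\setminus S| = r - |T \cap S| \ge r - s + k$. Averaging therefore gives
\[\norm{\vu_{S\setminus T}}_2^2 \le \frac{k}{r-s+k}\norm{\vu_{T\setminus S}}_2^2 \le \frac{s}{r}\norm{\vu_{S^c}}_2^2,\]
where $\frac{k}{r-s+k} \le \frac s r$ is equivalent to $k(r-s) \le s(r-s)$, which holds since $k \le s$. Finally, $|a| \le \norm{\vu_S}_2$ because $\supp(\vv) = S$ and $\vv$ is a unit vector. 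Hence $\norm{\vu_{S^c}}_2^2 = 1 - \norm{\vu_S}_2^2 \le 1 - a^2$, and
\[\norm{\vu_{S\setminus T}}_2 \le \sqrt{\frac s r}\sqrt{1-a^2}.\]

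\textbf{Second factor.} Trivially $\norm{\vv_{S\setminus T}}_2 \le 1$, which together with the first factor yields the first term of the minimum. For the second term, the key step is to compare $\vv$ to its best multiple of $\vu$ rather than to $\vu$ itself. The residual $\vv - a\vu$ is orthogonal to $\vu$, so $\norm{\vv - a\vu}_2 = \sqrt{1-a^2}$. Restricting to coordinates in $S \setminus T$ and applying the triangle inequality,
\[\norm{\vv_{S\setminus T}}_2 \le \norm{(\vv - a\vu)_{S\setminus T}}_2 + |a|\norm{\vu_{S\setminus T}}_2 \le \sqrt{1-a^2} + \sqrt{\frac sr}\sqrt{1-a^2}.\]
Here we used $|a| \le 1$ and the first-factor bound. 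Multiplying this by the first-factor bound gives $\sqrt{s/r}\,(1+\sqrt{s/r})(1-a^2)$, which is the second term of the minimum. Both bounds hold simultaneously, so the difference is at most their minimum, which is the claim.

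The only step requiring care is this second-factor bound. The naive comparison $\norm{\vv - \vu}_2 = \sqrt{2-2a}$ does not scale like $\sqrt{1-a^2}$, and decomposing $\vu = a\vv + \vw$ forces a division by $a$. Using the projection residual $\vv - a\vu$ avoids both problems.
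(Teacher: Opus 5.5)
Your proof is correct. Its skeleton matches the paper's: the reverse triangle inequality and Cauchy--Schwarz reduce the error to $\alpha\bar\alpha$, where $\alpha = \norm{\vu_{S\setminus T}}_2$ and $\bar\alpha = \norm{\vv_{S\setminus T}}_2$, and the same averaging argument against the indices in $T\setminus S$ gives $\alpha \le \sqrt{s/r}\sqrt{1-a^2}$.

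You diverge from the paper in how you bound $\bar\alpha$ for the second term of the minimum. The paper splits into cases according to which term attains the minimum. In the case $\sqrt{1-a^2} < (1+\sqrt{s/r})^{-1}$, it first shows $\alpha < |a|$, so that $|a| - \alpha\bar\alpha > 0$. It then squares $|a| - \alpha\bar\alpha \le \sqrt{1-\alpha^2}\sqrt{1-\bar\alpha^2}$ and solves a quadratic in $\bar\alpha$, obtaining $\bar\alpha \le \alpha|a| + \sqrt{(1-\alpha^2)(1-a^2)} \le \alpha + \sqrt{1-a^2}$.

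You instead decompose $\vv = (\vv - a\vu) + a\vu$ with $\norm{\vv - a\vu}_2 = \sqrt{1-a^2}$. This gives $\bar\alpha \le \sqrt{1-a^2} + |a|\alpha$ directly from the triangle inequality, with no positivity condition to check. Both terms of the minimum therefore hold unconditionally, and the case analysis disappears.

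The paper's intermediate bound on $\bar\alpha$ is slightly sharper, since it carries the extra factor $\sqrt{1-\alpha^2}$. That sharper form genuinely needs the case condition: it fails, for instance, when $\alpha = 1 > |a|$. The paper discards the extra factor immediately anyway, so both routes land on exactly $\sqrt{s/r}\,(1+\sqrt{s/r})(1-a^2)$. Your version is shorter and more transparent for the same final constant.
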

\begin{proof}
Let $\Delta:=|\langle \vu,\vv\rangle|\in[0,1]$ and $\kappa:=s/r\in(0,1]$.
Let $\bar F:=\supp(\vv)$ (so $|\bar F|\le s$) and $F:=\supp(\Top_r(\vu))$ (indices of the $r$ largest coordinates of $|\vu|$).
Without loss of generality replace $\vv$ by $\text{sign}(\langle \vu,\vv\rangle)\vv$, so that $\langle \vu,\vv\rangle=\Delta\ge 0$. If $\bar F\subseteq F$, then $\Top_r(\vu)=\vu_F$ agrees with $\vu$ on $\bar F$, hence
$\langle \Top_r(\vu),\vv\rangle=\langle \vu,\vv\rangle$ and the claim is trivial. Assume $\bar F\nsubseteq F$.

Define the disjoint sets
\[
F_1:=\bar F\setminus F,\qquad
F_2:=\bar F\cap F,\qquad
F_3:=F\setminus \bar F,
\]
and let $k_i:=|F_i|$. Then $k_1>0$, and also $k_3>0$ since $r \ge s$, so $k_3 = r - k_2 \ge s - k_2 = k_1 > 0$.

Define
\[
\alpha:=\|\vu_{F_1}\|_2,\quad
\beta:=\|\vu_{F_2}\|_2,\quad
\gamma:=\|\vu_{F_3}\|_2,
\qquad
\bar\alpha:=\|\vv_{F_1}\|_2,\quad
\bar\beta:=\|\vv_{F_2}\|_2.
\]
Since $\vv$ is supported on $\bar F=F_1\cup F_2$ and $\|\vv\|_2=1$, we have $\bar\alpha^2+\bar\beta^2=1$.

We have $\Top_r(\vu)=\vu_F$, hence $\Top_r(\vu)-\vu=-\vu_{F^c}$ and
\[
(\Top_r(\vu)-\vu)^\top \vv = -\vu_{F^c}^\top \vv = -\vu_{F_1}^\top \vv_{F_1}.
\]
Using $||a|-|b||\le |a-b|$ and Cauchy--Schwarz,
\begin{equation}\label{eq:err_reduction}
\Big||\langle \Top_r(\vu),\vv\rangle|-|\langle \vu,\vv\rangle|\Big|
\le |\langle \Top_r(\vu)-\vu,\vv\rangle|
=|\vu_{F_1}^\top \vv_{F_1}|
\le \alpha\bar\alpha.
\end{equation}

Since $F$ contains the $r$ largest coordinates of $|\vu|$, for every $i\in F_1\subseteq F^c$ and $j\in F_3\subseteq F$,
we have $|\vu_i|\le |\vu_j|$. Therefore
\[
\frac{\alpha^2}{k_1}=\frac{1}{k_1}\sum_{i\in F_1}\vu_i^2 \le \frac{1}{k_3}\sum_{j\in F_3}\vu_j^2=\frac{\gamma^2}{k_3},
\qquad\text{i.e.,}\qquad
\gamma^2\ge \frac{k_3}{k_1}\alpha^2.
\]
Also,
\[
\Delta=\langle \vu,\vv\rangle
=\vu_{F_1}^\top \vv_{F_1}+\vu_{F_2}^\top \vv_{F_2}
\le \alpha\bar\alpha+\beta\bar\beta
\le \sqrt{\alpha^2+\beta^2}\sqrt{\bar\alpha^2+\bar\beta^2}
=\sqrt{\alpha^2+\beta^2}.
\]
Hence $\Delta^2\le \alpha^2+\beta^2=\|\vu_{\bar F}\|_2^2 \le 1-\|\vu_{\bar F^c}\|_2^2 \le 1-\gamma^2$,
and combining with $\gamma^2\ge (k_3/k_1)\alpha^2$ gives
\[
\Delta^2 \le 1-\gamma^2 \le 1-\frac{k_3}{k_1}\alpha^2
\qquad\Longrightarrow\qquad
\alpha^2 \le \frac{k_1}{k_3}(1-\Delta^2).
\]
Since $k_1=s-k_2$ and $k_3=r-k_2$ with $r\ge s$, we have
$\frac{k_1}{k_3}=\frac{s-k_2}{r-k_2}\le \frac{s}{r}=\kappa$, hence
\begin{equation}\label{eq:alpha_bound_fixed}
\alpha \le \sqrt{\kappa}\sqrt{1-\Delta^2}.
\end{equation}

Let $t:=\sqrt{1-\Delta^2}\in[0,1]$. We now bound $\alpha\bar\alpha$ in two cases depending on which term is the minimum.

\paragraph{Case 1: $t\ge \frac{1}{1+\sqrt{\kappa}}$ (so $\min\{t,(1+\sqrt\kappa)t^2\}=t$).}
Indeed, $(1+\sqrt\kappa)t^2 \ge (1+\sqrt\kappa)\cdot \frac{1}{1+\sqrt\kappa}\,t = t$.
Since $\bar\alpha\le 1$, we have $\alpha\bar\alpha\le \alpha$, and by \eqref{eq:alpha_bound_fixed},
\[
\alpha\bar\alpha \le \sqrt{\kappa}\,t
= \sqrt{\kappa}\min\{t,(1+\sqrt\kappa)t^2\}.
\]

\paragraph{Case 2: $t< \frac{1}{1+\sqrt{\kappa}}$ (so $\min\{t,(1+\sqrt\kappa)t^2\}=(1+\sqrt\kappa)t^2$).}
Here $(1+\sqrt\kappa)t^2 < t$.
Moreover, $t^2<\frac{1}{(1+\sqrt\kappa)^2}\le \frac{1}{1+\kappa}$, so by \eqref{eq:alpha_bound_fixed},
\[
\alpha^2 \le \kappa t^2 < \kappa\cdot \frac{1}{1+\kappa} = \frac{\kappa}{1+\kappa}
\qquad\Longrightarrow\qquad
\alpha^2 < 1-t^2 = \Delta^2
\qquad\Longrightarrow\qquad
\alpha<\Delta.
\]
Since $\bar\alpha\le 1$, this implies $\Delta-\alpha\bar\alpha\ge \Delta-\alpha>0$, then using $\Delta\le \alpha\bar\alpha+\beta\bar\beta$ with $\beta\le \sqrt{1-\alpha^2}$ and
$\bar\beta=\sqrt{1-\bar\alpha^2}$, we get
\[
\Delta-\alpha\bar\alpha \le \sqrt{1-\alpha^2}\sqrt{1-\bar\alpha^2}.
\]
Squaring and rearranging yields the quadratic inequality
\[
\bar\alpha^2 - 2\alpha\Delta\,\bar\alpha - (1-\alpha^2-\Delta^2)\le 0,
\]
so
\[
\bar\alpha \le \alpha\Delta+\sqrt{(1-\alpha^2)(1-\Delta^2)}
\le \alpha+\sqrt{1-\Delta^2}=\alpha+t,
\]
where we used $\Delta\le 1$ and $\sqrt{1-\alpha^2}\le 1$.
Therefore
\[
\alpha\bar\alpha \le \alpha^2+\alpha t
\le \kappa t^2 + \sqrt{\kappa}\,t^2
= \sqrt{\kappa}(1+\sqrt{\kappa})t^2
= \sqrt{\kappa}\min\{t,(1+\sqrt\kappa)t^2\}.
\]

In both cases,
\[
\alpha\bar\alpha \le \sqrt{\kappa}\min\{t,(1+\sqrt\kappa)t^2\}.
\]
Combining with \eqref{eq:err_reduction} and substituting $t=\sqrt{1-\Delta^2}$ and $\kappa=s/r$ proves the claim.
\end{proof}
\section{Performance of semidefinite programming}\label{app:sdp}

In this section, we rederive several short results on the performance of SDP-based algorithms for solving sparse $k$-PCA under Model~\ref{model:kpca}.
We follow the presentation of \cite{VuCLR13}. The following observation motivates the SDP formulations that we consider.

\begin{fact}\label{fact:convex_hull}
The set $\{\muu\muu^\top \mid \muu \in \R^{d\times k}, \muu^\top \muu = \id_k, \bigcup_{i \in [k]} \supp(\muu_{:i}) \le s\}$ is contained in the convex set
\begin{equation}\label{eq:xset_def}\calX \defeq \Brace{\mx \in k\Delta^{d \times d} \mid \norm{\mx}_1 \le sk},\end{equation}
where $\Delta^{d \times d} \defeq \{\mx \in \PSD^{d \times d} \mid \Tr(\mx) = 1\}$, and $\norm{\mx}_1$ is applied entrywise.
\end{fact}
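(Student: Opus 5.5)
Take any element $\muu\muu^\top$ of the set, where $\muu \in \R^{d \times k}$ has orthonormal columns whose supports lie in a common set $S \defeq \bigcup_{i \in [k]} \supp(\muu_{:i})$ with $|S| \le s$. The plan is to check the three defining conditions of $\calX$ in \eqref{eq:xset_def} one at a time: positive semidefiniteness, the trace equal to $k$, and the entrywise $\ell_1$ bound. Membership in $k\Delta^{d \times d}$ means exactly that $\muu\muu^\top$ is PSD with trace $k$, so the first two conditions together cover it.

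For the first two conditions, $\muu\muu^\top$ is PSD because $\vx^\top \muu\muu^\top \vx = \norm{\muu^\top \vx}_2^2 \ge 0$. Its trace is $\Tr(\muu\muu^\top) = \Tr(\muu^\top\muu) = \Tr(\id_k) = k$. Hence $\muu\muu^\top \in k\Delta^{d\times d}$.

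For the $\ell_1$ bound, write $\mpp \defeq \muu\muu^\top$. Every column of $\muu$ is supported on $S$, so $\mpp_{ij} = 0$ whenever $i \notin S$ or $j \notin S$. Thus $\mpp$ has at most $s^2$ nonzero entries, all lying in the block $S \times S$. Applying Cauchy--Schwarz over these entries gives
\[\norm{\mpp}_1 \le s\normf{\mpp}.\]
Since $\mpp$ is an orthogonal projection of rank $k$, we have $\normf{\mpp}^2 = \Tr(\mpp^2) = \Tr(\mpp) = k$. Combining, $\norm{\mpp}_1 \le s\sqrt{k} \le sk$.

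Finally, $\calX$ is convex because it is the intersection of the convex set $k\Delta^{d\times d}$ with the $\ell_1$ ball $\{\mx \mid \norm{\mx}_1 \le sk\}$. The proof has no real obstacle; the only step needing care is the $\ell_1$ bound, and its Cauchy--Schwarz argument in fact gives the sharper bound $s\sqrt{k}$.
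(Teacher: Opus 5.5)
Your proof is correct and complete. PSD-ness and $\Tr(\muu\muu^\top) = k$ follow from $\muu^\top\muu = \id_k$. The $\ell_1$ bound follows from Cauchy--Schwarz on the at most $s^2$ nonzero entries in the $S \times S$ block together with $\normf{\muu\muu^\top}^2 = k$, which even gives $s\sqrt{k} \le sk$.

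The paper records this as a Fact without proof (following \cite{VuCLR13}), so there is no argument of its own to compare against. Your block Cauchy--Schwarz step $\norm{\mm_{S \times S}}_1 \le s \normf{\mm_{S \times S}}$ is the same one the paper uses later, in the proof of Proposition~\ref{prop:unconstrained_solve}.
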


Now consider the following SDPs given samples $\{\vx_i\}_{i \in [n]}$ from Model~\ref{model:kpca}: the constrained variant
\begin{equation}\label{eq:constrained}
\max_{\mx \in \xset} \inprod{\hmsig}{\mx}
\end{equation}
and its ``unconstrained'' counterpart for some $\lam > 0$:
\begin{equation}\label{eq:unconstrained}
\max_{\mx \in \Delta^{d \times d}} \inprod{\hmsig}{\mx} - \lam \norm{\mx}_1,
\end{equation}
where we let $\hmsig \defeq \frac 1 n \sum_{i \in [n]} \vx_i\vx_i^\top$ in both \eqref{eq:constrained} and \eqref{eq:unconstrained}. 
The main fact helpful in analyzing the quality of \eqref{eq:constrained}, \eqref{eq:unconstrained} is a strong convexity bound on the objective $\mx \to \inprod{\msig}{\mx}$.

\begin{lemma}\label{lem:linear_sc}
In the setting of Model~\ref{model:kpca}, we have
\[\inprod{\msig}{\mv\mv^\top - \mx} \ge \frac{\gamma \lam_k(\msig)}{2} \normf{\mv\mv^\top - \mx}^2 \text{ for all } \mx \in \calX.\]
\end{lemma}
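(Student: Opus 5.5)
The plan is the standard curvature argument of \cite{VuCLR13}. Throughout, write $\mpp \defeq \mv\mv^\top$ and use the eigendecomposition $\msig = \mv\mlam\mv^\top + \mvp\mlamp\mvp^\top$ from Model~\ref{model:kpca}. One caveat comes first. The argument needs the Fantope constraint $\mx \preceq \id_d$, which is what \cite{VuCLR13} imposes. With only $\mx \succeq \mzero$, $\Tr(\mx) = k$ and the $\ell_1$ bound, the inequality can fail. For example, take $k = s = 2$ and $\mx = 2\vv_1\vv_1^\top$, which lies in $\calX$ when $\nnz(\vv_1) \le 2$. Then $\inprod{\msig}{\mpp - \mx} = \lam_2(\msig) - \lam_1(\msig) \le 0$, while $\normf{\mpp - \mx}^2 = 2$. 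So I will read $\calX$ as also imposing $\mx \preceq \id_d$, and prove the inequality for every $\mx$ with $\mzero \preceq \mx \preceq \id_d$ and $\Tr(\mx) = k$. The $\ell_1$ constraint plays no role.

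\textbf{Step 1 (shift).} Since $\Tr(\mpp) = \Tr(\mx) = k$, the quantity $\inprod{\msig}{\mpp - \mx}$ does not change if $\msig$ is replaced by $\msig - \lam_{k+1}(\msig)\id_d$. After the shift,
\[\inprod{\msig}{\mpp-\mx} = \inprod{\mlam - \lam_{k+1}(\msig)\id_k}{\id_k - \mv^\top\mx\mv} + \inprod{\lam_{k+1}(\msig)\id_{d-k} - \mlamp}{\mvp^\top \mx\mvp}.\]
Here I used $\mv^\top\mpp\mv = \id_k$ and $\mvp^\top \mpp \mvp = \mzero$. The second term is nonnegative, because both matrices in it are PSD. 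In the first term, $\mlam - \lam_{k+1}(\msig)\id_k \succeq (\lam_k(\msig) - \lam_{k+1}(\msig))\id_k$. Also $\id_k - \mv^\top\mx\mv \succeq \mzero$, and this is exactly where $\mx \preceq \id_d$ is used. The first term is therefore at least $(\lam_k(\msig) - \lam_{k+1}(\msig))\Tr(\id_k - \mv^\top\mx\mv)$. By Model~\ref{model:kpca}, $\lam_k(\msig) - \lam_{k+1}(\msig) \ge \gamma\lam_k(\msig)$. Altogether,
\[\inprod{\msig}{\mpp - \mx} \ge \gamma\lam_k(\msig)\Par{k - \inprod{\mpp}{\mx}}.\]

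\textbf{Step 2 (Frobenius comparison).} Expanding the square gives $\normf{\mpp - \mx}^2 = k - 2\inprod{\mpp}{\mx} + \normf{\mx}^2$. Because $\mzero \preceq \mx \preceq \id_d$, every eigenvalue of $\mx$ lies in $[0, 1]$. Hence $\normf{\mx}^2 = \sum_i \lam_i(\mx)^2 \le \Tr(\mx) = k$. This yields $\normf{\mpp - \mx}^2 \le 2(k - \inprod{\mpp}{\mx})$.

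Combining Steps 1 and 2 gives the claimed bound with constant $\frac{\gamma\lam_k(\msig)}{2}$. The only real obstacle is the one flagged above, namely that the operator-norm constraint $\mx \preceq \id_d$ is needed. Both Step 1 (positivity of $\id_k - \mv^\top\mx\mv$) and Step 2 ($\normf{\mx}^2 \le \Tr(\mx)$) rely on it. Everything else is routine bookkeeping with the eigendecomposition.
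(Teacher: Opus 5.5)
Your proof is correct and is essentially the paper's argument. The paper sets $\rho_i \defeq \vv_i^\top \mx \vv_i$, shows $\sum_{i \le k}(1-\rho_i)\lam_i - \sum_{i > k}\rho_i\lam_i \ge (\lam_k - \lam_{k+1})\sum_{i > k}\rho_i$ (your $\lam_{k+1}(\msig)$-shift written in coordinates), and combines this with $\normf{\mv\mv^\top - \mx}^2 = 2\sum_{i>k}\rho_i$.

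Your caveat is also right. The paper's proof silently needs $\mx \preceq \id_d$ in exactly the two places you identify: to get $1 - \rho_i \ge 0$, and for its Frobenius identity, which even then holds only as $\le$. Your example $\mx = 2\vv_1\vv_1^\top$ shows the lemma fails for $k \ge 2$ with $\calX$ as literally defined (for $k = 1$ the constraint is automatic), so the statement should be read over the Fantope, as you do.
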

\begin{proof}
Throughout the proof, let $\msig$ have eigenvalues $\lam_1 \ge \ldots \ge \lam_d$ with corresponding eigenvectors $\vv_1, \ldots, \vv_d$, let the eigendecomposition of $\msig$ be $\mv\mlam\mv^\top + \mvp \mlamp \mvp^\top$ where $\mv$ is as in Model~\ref{model:kpca}, and let $\rho_i \defeq \inprod{\mx}{\vv_i\vv_i^\top}$ for all $i \in [d]$. Observe that
\begin{equation}\label{eq:diff_to_corr}\normf{\mv\mv^\top - \mx}^2 = 2k - 2\inprod{\mv\mv^\top}{\mx} = 2k - 2\sum_{i \in [k]}\rho_i = 2\sum_{i \in [d]\setminus[k]} \rho_i.\end{equation}
Thus our goal is to upper bound $M \defeq \sum_{i \in [d]\setminus[k]} \rho_i$. Next, note that
\begin{align*}
\inprod{\msig}{\mv\mv^\top - \mx} &= \inprod{\mv\mlam\mv^\top}{\mv\mv^\top - \mx} + \inprod{\mvp\mlamp\mvp^\top}{{\mv\mv^\top - \mx}} \\
&= \Tr(\mlam) - \sum_{i \in [k]} \rho_i \lam_i - \inprod{\mlamp}{\mvp^\top\mx\mvp} \\
&= \sum_{i \in [k]} (1 - \rho_i)\lam_i - \sum_{i \in [d] \setminus [k]} \rho_i\lam_i \\
&\ge \lam_k \sum_{i \in [k]} (1 - \rho_i) - \lam_{k + 1} \sum_{i \in [d] \setminus [k]} \rho_i = \Par{\lam_k - \lam_{k + 1}} M.
\end{align*}
The conclusion follows from our assumption that $\lam_k - \lam_{k + 1} \ge \gamma\lam_k$.
\end{proof}

We next analyze \eqref{eq:constrained} and \eqref{eq:unconstrained}, respectively, by appealing to Lemma~\ref{lem:linear_sc}.

\begin{proposition}\label{prop:constrained_solve}
Let $\delta \in (0, 1)$, $\Delta \in (0, 1)$, and under Model~\ref{model:kpca}, assume that
\[n = \Omega\Par{\Par{\frac{\sig^2}{\lam_k(\msig)}}^2\frac{s^2 k^2}{\gamma^2 \Delta^2}\log\Par{\frac d \delta}},\]
for an appropriate constant.
Let $\mx$ be the maximizing argument to \eqref{eq:constrained}, following the notation \eqref{eq:xset_def}. Then with probability $\ge 1 - \delta$, $\inprod{\mx}{\mv\mv^\top} \ge k - \Delta$.
\end{proposition}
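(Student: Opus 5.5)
The plan is the standard optimality-plus-strong-convexity argument for the constrained SDP \eqref{eq:constrained}. By Fact~\ref{fact:convex_hull}, $\mv\mv^\top \in \calX$, so optimality of $\mx$ gives $\inprod{\hmsig}{\mx - \mv\mv^\top} \ge 0$. Write $\me \defeq \hmsig - \msig$. Then
\[\inprod{\msig}{\mv\mv^\top - \mx} \le \inprod{\me}{\mx - \mv\mv^\top}.\]
Lemma~\ref{lem:linear_sc} lower bounds the left-hand side by $\frac{\gamma\lam_k(\msig)}{2}\normf{\mv\mv^\top - \mx}^2$. By \eqref{eq:diff_to_corr}, this squared Frobenius distance equals $2(k - \inprod{\mx}{\mv\mv^\top})$. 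So it suffices to show $\normf{\mv\mv^\top - \mx}^2 \le 2\Delta$.

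\textbf{Bounding the error term.} I will use the entrywise $\ell_\infty$--$\ell_1$ duality. Both $\mx$ and $\mv\mv^\top$ lie in $\calX$, so $\norm{\mx - \mv\mv^\top}_1 \le 2sk$. Hence
\[\inprod{\me}{\mx - \mv\mv^\top} \le 2sk \max_{i,j}|\me_{ij}|.\]
By Fact~\ref{fact:entrywise_err}, with probability $\ge 1-\delta$ the entrywise error is at most $C\sig^2\sqrt{\log(d/\delta)/n}$, once $n \gtrsim \log(d/\delta)$ so that the linear term is dominated. Combining the two bounds gives
\[\frac{\gamma \lam_k(\msig)}{2}\cdot 2\Par{k - \inprod{\mx}{\mv\mv^\top}} \le 2skC\sig^2\sqrt{\frac{\log(d/\delta)}{n}}.\]
Therefore
\[k - \inprod{\mx}{\mv\mv^\top} \le \frac{2Csk\sig^2}{\gamma\lam_k(\msig)}\sqrt{\frac{\log(d/\delta)}{n}}.\]
This is at most $\Delta$ exactly when $n \gtrsim \Par{\frac{\sig^2}{\lam_k(\msig)}}^2 \frac{s^2k^2}{\gamma^2\Delta^2}\log\frac d\delta$, which matches the stated sample complexity.

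\textbf{Main obstacle.} Given the earlier lemmas, the only delicate point is bookkeeping. Lemma~\ref{lem:linear_sc} is stated for $\mx \in \calX$, and its proof uses $\Tr(\mx) = k$; both hold for the maximizer $\mx$. I also need to check that the crude $\ell_1$ bound $2sk$ on the difference, rather than a sharper bound exploiting the support of $\mv\mv^\top$, still yields the claimed $s^2k^2$ dependence. The computation above shows it does. There is no need for the sparse operator-norm bound of Corollary~\ref{cor:opnorm_err_sparse}, since the entrywise bound suffices.
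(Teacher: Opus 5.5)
Your argument is step for step the paper's own proof: optimality against $\mv\mv^\top$, Lemma~\ref{lem:linear_sc}, entrywise $\ell_1$/$\ell_\infty$ duality with Fact~\ref{fact:entrywise_err}, and conversion via \eqref{eq:diff_to_corr}. Your constant $2sk$ is actually the right one where the paper writes $sk$. But the point you flag as delicate is not resolved, and the paper's proof has the same gap. The proof of Lemma~\ref{lem:linear_sc} uses more than $\mx \in \calX$ and $\Tr(\mx) = k$. The step $\sum_{i \in [k]}(1-\rho_i)\lam_i \ge \lam_k\sum_{i \in [k]}(1-\rho_i)$ needs $\rho_i \le 1$, and the identity \eqref{eq:diff_to_corr} needs $\normf{\mx}^2 = k$. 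The set $\calX$ imposes only $\mx \succeq \mzero$, $\Tr(\mx) = k$ and $\norm{\mx}_1 \le sk$; there is no $\mx \preceq \id$, unlike the Fantope of \cite{VuCLR13}. So both requirements fail for $k \ge 2$. With $\vv_1$ the top eigenvector of $\msig$, the matrix $\mx = k\vv_1\vv_1^\top$ lies in $\calX$ and has $\inprod{\msig}{\mv\mv^\top - \mx} = \sum_{i \in [k]}\lam_i - k\lam_1 \le 0$. Yet $\normf{\mv\mv^\top - \mx}^2 = k(k-1) > 0$, while \eqref{eq:diff_to_corr} would assert it is $0$. Hence your first inequality is false for $k \ge 2$, and comparing only against $\mv\mv^\top$ cannot bound $M \defeq k - \inprod{\mx}{\mv\mv^\top}$ there at all, since $\inprod{\msig}{\mv\mv^\top - \mx}$ may be negative. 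Even for $k = 1$, \eqref{eq:diff_to_corr} holds only as $\normf{\mv\mv^\top - \mx}^2 \le 2M$, which points the wrong way for your conversion. What rescues $k = 1$ is the intermediate bound $\inprod{\msig}{\mv\mv^\top - \mx} \ge (\lam_k - \lam_{k+1})M$ inside the proof of Lemma~\ref{lem:linear_sc}. Use that bound directly rather than passing through the Frobenius norm.

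The statement itself survives with your sample complexity. Keep $M = \inprod{\mx}{\mvp\mvp^\top} \ge 0$ as the target, and compare $\mx$ against $\my \defeq k\vv_1\vv_1^\top$ instead of $\mv\mv^\top$. The matrix $\my$ lies in $\calX$ because $\vv_1 \in \linspan(\mv)$ is supported on $S$, so $\norm{\my}_1 \le sk$. Since $\mv^\top\msig\mvp = \mzero$, $\mv^\top\msig\mv \preceq \lam_1\id$ and $\mvp^\top\msig\mvp \preceq \lam_{k+1}\id$, we get $\inprod{\msig}{\mx} \le \lam_1(k - M) + \lam_{k+1}M$. Hence
\[(\lam_1 - \lam_{k+1})M \le \inprod{\msig}{\my - \mx} \le \inprod{\me}{\mx - \my} \le 2sk\max_{i,j}\Abs{\me_{ij}}.\]
Since $\lam_1 - \lam_{k+1} \ge \gamma\lam_k(\msig)$, this is exactly your final display, and the rest of your proof goes through unchanged. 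Alternatively, add the constraint $\mx \preceq \id$ to $\calX$; then $\rho_i \le 1$, and your comparator $\mv\mv^\top$ works via the intermediate bound.
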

\begin{proof}
First, Fact~\ref{fact:entrywise_err} shows that with probability $\ge 1 - \delta$, for our stated bound on $n$,
\[\norm{\msig - \hmsig}_\infty \le \frac{\gamma \Delta \lam_k(\msig)}{sk}.\]
Conditioned on this event, we have from Lemma~\ref{lem:linear_sc}, and optimality of $\hmx$:
\begin{align*}
\normf{\mv\mv^\top - \mx}^2 &\le \frac{2}{\gamma \lam_k(\msig)} \inprod{\msig}{\mv\mv^\top - \mx} \\
&\le \frac{2}{\gamma \lam_k(\msig)}\inprod{\msig - \hmsig}{\mv\mv^\top - \mx} \\
&\le \frac{2sk}{\gamma \lam_k(\msig)}\norm{\msig - \hmsig}_\infty \le 2\Delta,
\end{align*}
and the conclusion follows by applying the equivalent form \eqref{eq:diff_to_corr}.
\end{proof}

We are able to obtain somewhat sharper dependences on $k$ and $\Delta$ by instead using \eqref{eq:unconstrained}.

\begin{proposition}\label{prop:unconstrained_solve}
Let $\delta \in (0, 1)$, $\Delta \in (0, 1)$, and under Model~\ref{model:kpca}, assume that
\[n = \Omega\Par{\Par{\frac{\sig^2}{\lam_k(\msig)}}^2\frac{s^2}{\gamma^2 \Delta}\log\Par{\frac d \delta}},\]
for an appropriate constant.
Let $\mx$ be the maximizing argument to \eqref{eq:unconstrained}, following the notation \eqref{eq:xset_def}. Then for an appropriate $\lam$ in \eqref{eq:unconstrained}, with probability $\ge 1 - \delta$, $\inprod{\mx}{\mv\mv^\top} \ge k - \Delta$.
\end{proposition}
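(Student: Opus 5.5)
The proof will mirror Proposition~\ref{prop:constrained_solve}: strong convexity (Lemma~\ref{lem:linear_sc}) combined with optimality. The difference is that the $\ell_1$ penalty lets the sampling error be controlled against the penalty instead of through the crude bound $\norm{\mx}_1 \le sk$, which should give the improved $\frac{s^2}{\Delta}$ dependence. Write $\me \defeq \hmsig - \msig$ and $\mx^\star \defeq \frac 1 k \mv\mv^\top$. Since \eqref{eq:unconstrained} is over $\Delta^{d\times d}$, the comparator has to be scaled. I will therefore work in the trace-$k$ normalization, replacing $\Delta^{d\times d}$ by $k\Delta^{d\times d}$ (equivalently rescaling $\lam$); for $k=1$ the two coincide. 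The comparator is then $\mv\mv^\top$, which is feasible and has $\norm{\mv\mv^\top}_1 \le sk$ by Fact~\ref{fact:convex_hull}. By Fact~\ref{fact:entrywise_err}, with probability $\ge 1-\delta$ we have $\norm{\me}_\infty \le \eta \defeq C\sig^2\sqrt{\log(d/\delta)/n}$. The plan is to take $\lam \defeq 2\eta$.

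The first step is the basic inequality. Optimality of $\mx$ gives
\[\inprod{\hmsig}{\mx} - \lam\norm{\mx}_1 \ge \inprod{\hmsig}{\mv\mv^\top} - \lam\norm{\mv\mv^\top}_1,\]
so that
\[\inprod{\msig}{\mv\mv^\top - \mx} \le \inprod{\me}{\mx - \mv\mv^\top} + \lam\Par{\norm{\mv\mv^\top}_1 - \norm{\mx}_1}.\]
Set $\mh \defeq \mx - \mv\mv^\top$ and $S \defeq \bigcup_i \supp(\vv_i)$. The standard decomposability argument applies with the support block $S\times S$ and its complement $S^c$. Since $\mv\mv^\top$ vanishes off $S\times S$, we have $\norm{\mv\mv^\top}_1 - \norm{\mx}_1 \le \norm{\mh_{S\times S}}_1 - \norm{\mh_{S^c}}_1$. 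We also have $\inprod{\me}{\mh} \le \eta(\norm{\mh_{S\times S}}_1 + \norm{\mh_{S^c}}_1)$. With $\lam = 2\eta$ the off-support terms cancel favorably, and the right-hand side is at most $3\eta\norm{\mh_{S\times S}}_1 \le 3\eta s\normf{\mh}$ by Cauchy--Schwarz on the $s^2$ entries.

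The second step applies Lemma~\ref{lem:linear_sc}. Its proof only uses that $\mx$ is PSD with trace $k$, not the $\ell_1$ constraint, so it applies to our $\mx$ and gives $\frac{\gamma\lam_k}{2}\normf{\mh}^2 \le 3\eta s\normf{\mh}$. Hence $\normf{\mh} \le \frac{6\eta s}{\gamma\lam_k}$. Squaring and using \eqref{eq:diff_to_corr}, we get $k - \inprod{\mx}{\mv\mv^\top} = \frac12\normf{\mh}^2 \le \frac{18\eta^2 s^2}{\gamma^2\lam_k^2}$. This is at most $\Delta$ exactly when $n \gtrsim (\sig^2/\lam_k)^2 \frac{s^2}{\gamma^2\Delta}\log(d/\delta)$, matching the claimed sample complexity.

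The main obstacle I expect is the normalization. The problem \eqref{eq:unconstrained} as written uses $\Delta^{d\times d}$ (trace $1$), while the target $\inprod{\mx}{\mv\mv^\top} \ge k-\Delta$ implicitly needs trace $k$. I would resolve this by proving the statement for the trace-$k$ version, or equivalently by applying the argument to $k\mx$ with comparator $\mv\mv^\top$ and penalty $\lam/k$. A secondary point is verifying that Lemma~\ref{lem:linear_sc} indeed holds on all of $k\Delta^{d\times d}$, which follows by inspection of its proof since the $\ell_1$ constraint was never used there.
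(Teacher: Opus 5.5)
Your argument is the paper's: optimality against $\mv\mv^\top$, cancellation off $S\times S$, the bound $\norm{\mh_{S\times S}}_1\le s\normf{\mh}$, and Lemma~\ref{lem:linear_sc}. The paper takes $\lam$ equal to the $\ell_\infty$ error bound so the off-support terms cancel exactly; your $\lam=2\eta$ only changes constants. You are right that the lemma's proof never uses the $\ell_1$ constraint. You are also right that over trace-one matrices $\inprod{\mx}{\mv\mv^\top}\ge k-\Delta$ is impossible for $k\ge 2$, which the paper's proof passes over.

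However, your remedy does not work, because the proof of Lemma~\ref{lem:linear_sc} uses more than $\mx\succeq\mzero$ and $\Tr(\mx)=k$. Writing $\lam_i\defeq\lam_i(\msig)$, the step $\sum_{i\in[k]}(1-\rho_i)\lam_i\ge\lam_k\sum_{i\in[k]}(1-\rho_i)$ requires $\rho_i\le 1$, which is where $\mx\preceq\id_d$ enters. On $k\Delta^{d\times d}$ with $k\ge 2$, take $\mx=k\vv_1\vv_1^\top$. It has $\inprod{\msig}{\mv\mv^\top-\mx}=\sum_{i\in[k]}(\lam_i-\lam_1)\le 0$ but $\normf{\mv\mv^\top-\mx}^2=k^2-k>0$. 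This $\mx$ even lies in $\calX$, so the lemma as stated also needs this restriction. The feasible set that makes your argument valid for all $k$ is the Fantope $\{\mx\mid\mzero\preceq\mx\preceq\id_d,\ \Tr(\mx)=k\}$ of \cite{VuCLR13}. It contains $\mv\mv^\top$ and coincides with $\Delta^{d\times d}$ when $k=1$.

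Second, your last step is wrong as written. Since $\normf{\mv\mv^\top-\mx}^2=k-2\inprod{\mx}{\mv\mv^\top}+\normf{\mx}^2$, the identity \eqref{eq:diff_to_corr} holds only when $\normf{\mx}^2=k$. On the Fantope you only get $\half\normf{\mh}^2\le M\defeq k-\inprod{\mx}{\mv\mv^\top}$, which is the wrong direction for deducing $M\le\Delta$ from a bound on $\normf{\mh}$. For example, take $k=1$, orthonormal $\vw_1,\ldots,\vw_m\perp\vv_1$, and $\mx=(1-\eps)\vv_1\vv_1^\top+\frac{\eps}{m}\sum_{j\in[m]}\vw_j\vw_j^\top$. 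Then $M=\eps$ but $\normf{\mh}^2=\eps^2(1+\frac 1 m)$.

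The fix is to skip the detour through $\normf{\mh}$. On the Fantope, the proof of Lemma~\ref{lem:linear_sc} really shows $\gamma\lam_k M\le(\lam_k-\lam_{k+1})M\le\inprod{\msig}{\mv\mv^\top-\mx}$. Combine this with your upper bound $3\eta s\normf{\mh}$ and with $\normf{\mh}\le\sqrt{2M}$. This gives $\gamma\lam_k M\le 3\eta s\sqrt{2M}$, i.e.\ $M\le\frac{18\eta^2 s^2}{\gamma^2\lam_k^2}$, which yields the same sample complexity. The paper's proof closes with the same conversion through \eqref{eq:diff_to_corr}, so it needs this patch as well.
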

\begin{proof}
Condition on the following event throughout the proof:
\[\norm{\msig - \hmsig}_\infty \le \frac{\gamma \sqrt \Delta \lam_k(\msig)}{\sqrt 8 s} =: \lam,\]
giving the failure probability for our choice of $n$. Then,
\begin{align*}
0 &\le \inprod{\hmsig}{\mx - \mv\mv^\top} - \lam\Par{\norm{\mx}_1 - \norm{\mv\mv^\top}_1} \\
&= \inprod{\hmsig - \msig}{\mx - \mv\mv^\top} - \lam\Par{\norm{\mx}_1 - \norm{\mv\mv^\top}_1} + \inprod{\msig}{\mx - \mv\mv^\top } \\
&\le \lam\Par{\norm{\mv\mv^\top - \mx}_1 + \norm{\mv\mv^\top}_1 - \norm{\mx}_1} - \frac {\gamma \lam_k(\msig)} {2} \normf{\mv \mv^\top - \mx}^2.
\end{align*}
Further, letting $S \defeq \bigcup_{i \in [k]} \supp(\vv_i)$ as in Model~\ref{model:kpca}, and using cancellations off the support of $\mv\mv^\top$,
\begin{align*}
\norm{\mv\mv^\top -\mx }_1 + \norm{\mv\mv^\top}_1 - \norm{\mx}_1 
&= \norm{\Brack{\mv\mv^\top - \mx}_{S \times S}}_1 + \norm{\mv\mv^\top}_1 - \norm{\mx_{S \times S}}_1
\\ &\le 2\norm{\Brack{\mv\mv^\top - \mx}_{S \times S}}_1 
\le 2s\normf{\mv\mv^\top - \mx}.
\end{align*}
The desired $\normsf{\mv\mv^\top - \mx} \le \sqrt{2\Delta}$ follows upon plugging in our choice of $\lam$, and rearranging
\[\frac{\gamma \lam_k(\msig)}{2} \normf{\mv\mv^\top - \mx}^2 \le 2s \lam \normf{\mv\mv^\top - \mx}.\]
\end{proof}

\end{document}